\renewcommand*{\backref}[1]{}
\renewcommand*{\backrefalt}[4]{\ifcase#1\relax\or(page #2)\else(pages #2)\fi}
\numberwithin{equation}{section}
\DeclareMathOperator*{\argmin}{arg\,min}
\newcommand{\ci}{\mathrel{\perp\!\!\!\!\perp}}
\newcommand{\oset}[3][0ex]{%
  \mathrel{\mathop{#3}\limits^{
    \vbox to#1{\kern-4.5\ex@
    \hbox{$\scriptstyle#2$}\vss}}}}
\newtheorem{theorem}{Theorem}
\newtheorem{lemma}[theorem]{Lemma}
\newtheorem{proposition}[theorem]{Proposition}
\newtheorem{corollary}[theorem]{Corollary}
\newtheorem{remark}[theorem]{Remark}
\theoremstyle{definition}
\numberwithin{theorem}{section}
\setlist[enumerate,1]{leftmargin=2em}
\setlist[itemize,1]{leftmargin=2em}
\title{Sequence Modeling with Spectral Mean Flows}
\author{%
    Jinwoo Kim \\
    KAIST \\
    \And
    Max Beier \\
    TU Munich \\
    \And
    Petar Bevanda \\
    TU Munich \\
    \And
    Nayun Kim \\
    KAIST \\
    \And
    Seunghoon Hong \\
    KAIST
}
\begin{document}

\maketitle

\begin{abstract}
    A key question in sequence modeling with neural networks is how to represent and learn highly nonlinear and probabilistic state dynamics.
    Operator theory views such dynamics as linear maps on Hilbert spaces containing mean embedding vectors of distributions, offering an appealing but currently overlooked perspective.
    We propose a new approach to sequence modeling based on an operator-theoretic view of a hidden Markov model (HMM).
    Instead of materializing stochastic recurrence, we embed the full sequence distribution as a tensor in the product Hilbert space.
    A generative process is then defined as maximum mean discrepancy (MMD) gradient flow in the space of sequences.
    To overcome challenges with large tensors and slow sampling convergence, we introduce \textbf{spectral mean flows}, a novel tractable algorithm integrating two core concepts.
    First, we propose a new neural architecture by leveraging spectral decomposition of linear operators to derive a scalable tensor network decomposition of sequence mean embeddings.
    Second, we extend MMD gradient flows to time-dependent Hilbert spaces and connect them to flow matching via the continuity equation, enabling simulation-free learning and faster sampling.
    We demonstrate competitive results on a range of time-series modeling datasets.\footnote{Code is available at \url{https://github.com/jw9730/spectral-mean-flow}.}
\end{abstract}

\section{Introduction}
A fundamental question in sequence modeling with neural networks is how to represent and learn highly nonlinear and probabilistic state dynamics throughout a sequence.
A popular method is to model each step with a stochastic nonlinear network~\cite{bengio2003neural, elman1990finding, jordan1986attractor, liu2018generating, williams1989learning}.
Since this requires serialization over steps at test time, there has been recent interest in linear recurrence~\cite{gu2023mamba, gu2022efficiently, parnichkun2024state} that offers more parallelizability.
Yet, their underlying theory often only covers deterministic linear dynamics, requiring probabilistic dynamics to be handled in a post hoc manner via autoregression~\cite{gu2023mamba, gu2022efficiently} or stepwise variational inference~\cite{schmidt2018deep, zhou2023deep, naiman2024generative}.
This leaves room for a new method that computationally uses linear recurrence while natively handling the nonlinear and probabilistic nature of state dynamics.

Operator theory suggests an appealing approach in this context: embedding probability distributions in Hilbert spaces in which nonlinear dynamics become linear \cite{mollenhauer2020nonparametric}.
Specifically, consider Markovian dynamics $(Z^n)_{n\in\mathbb{N}}$ on a state space ${\cal Z}$.
The theory states that, with a proper choice of Hilbert space~$\mathbb{G}$, the distribution of each state $Z^n$ can be uniquely embedded as a vector $\mu_{Z^n}$ in $\mathbb{G}$.
Furthermore, the state dynamics $\mathbb{P}[Z^{n+1}|Z^n]$ can be represented as a linear operator $U:\mathbb{G}\to\mathbb{G}$ such that:
\begin{align}\label{eq:linear_operator_introduction}
    \mu_{Z^{n+1}}=U\mu_{Z^n}.
\end{align}
This enables the use of standard linear algebraic tools, such as spectral decomposition of linear operators, to implicitly but correctly represent and learn nonlinear probabilistic dynamics.
This approach has been extensively explored in kernel algorithms for learning dynamical systems~\cite{klus2020eigendecompositions, kostic2022learning}.
However, its application in neural sequence modeling has been limited.

In this work, we propose a new approach to neural sequence modeling that leverages this perspective.
We begin with the classical notion of a hidden Markov model (HMM), which is a probabilistic model for sequences $X^{1:N}=(X^1, ..., X^N)$ based on hidden states $Z^{1:N}=(Z^1, ..., Z^N)$ under Markovian dynamics $\mathbb{P}[Z^{n+1}|Z^n]$ and local observation model $\mathbb{P}[X^n|Z^n]$.
While HMMs offer a structured and universal way to model sequences, directly modeling their nonlinear stochastic recurrence can be challenging.
This motivates the aforementioned operator-theoretic perspective.

Specifically, our first step is to embed the distribution of the entire sequence $X^{1:N}$ as an element in a tensor product Hilbert space.
This provides a basic setup for sequence modeling in the operator-theoretic context, allowing us to exploit linearity \eqref{eq:linear_operator_introduction}.
The setup also allows us to define a generative process as a maximum mean discrepancy (MMD) gradient flow~\cite{arbel2019maximum} in the space of sequences.
Then, to overcome practical challenges associated with high-dimensional tensors and slow convergence of standard MMD flows, we introduce \textbf{spectral mean flows}, a novel method with two core contributions:
\begin{itemize}
    \item We propose a new neural architecture that utilizes spectral decomposition of the linear operator underlying an HMM.
    This leads to a scalable tensor network decomposition of the sequence distribution embedding, making computations tractable.
    \item We propose an extension of MMD gradient flows to time-dependent Hilbert spaces, which may be of independent interest.
    This connects to flow matching \cite{lipman2023flow} via the continuity equation, enabling end-to-end, simulation-free learning and faster sampling convergence.
\end{itemize}
We test spectral mean flows on a range of time-series datasets, demonstrating competitive results.

The remainder of this paper is organized as follows.
In Section~\ref{sec:background}, we introduce necessary background.
Section~\ref{sec:method} introduces spectral mean flows.
Section~\ref{sec:related_work} discusses related work, and Section~\ref{sec:experiments} presents experiments on synthetic data and time-series modeling datasets.
Section~\ref{sec:conclusion} draws conclusions.
All theoretical arguments are formally stated and proven in the appendix.

\section{Background}\label{sec:background}
In this section, we introduce key mathematical tools that enable us to work with probability distributions using linear algebraic methods.
More details can be found in Appendix~\ref{apdx:background}.

\textbf{Reproducing kernel Hilbert spaces (RKHSs)} offer a framework for treating functions as vectors in Hilbert spaces \cite{scholkopf2002learning}.
Consider a domain ${\cal X}\subset\mathbb{R}^d$, and let $\mathbb{H}$ be an RKHS induced by a kernel $k:{\cal X}\times{\cal X}\to\mathbb{R}$.
The space $\mathbb{H}$ contains functions $f:{\cal X}\to\mathbb{R}$ and is equipped with an inner product $\langle\cdot,\cdot\rangle_\mathbb{H}$ and a corresponding norm $\|\cdot\|_\mathbb{H}$.
Its canonical feature map $\phi:{\cal X}\to\mathbb{H}$ is defined by $\phi({\bf x})\coloneqq k(\cdot, {\bf x})$, and satisfies the reproducing property: $f({\bf x}) = \langle \phi({\bf x}) , f\rangle_\mathbb{H}$ for any $f\in \mathbb{H}$.

To handle dependencies between variables, we also consider a second domain ${\cal Z}\subset\mathbb{R}^m$ with its own RKHS $\mathbb{G}$ induced by a kernel $l:{\cal Z}\times{\cal Z}\to\mathbb{R}$ with canonical feature map $\varphi:{\cal Z}\to\mathbb{G}$.

\textbf{Mean embeddings} provide a way to represent probability distributions as vectors in RKHS~\cite{muandet2017kernel}.
For a random variable $X$ with distribution $\pi$ on ${\cal X}$, the mean embedding is defined as:
\begin{align}\label{eq:mean_embedding}
    \mu_\pi\coloneqq\mathbb{E}[\phi(X)]\in\mathbb{H}.
\end{align}
We write $\mu_X=\mu_\pi$ when the distribution is clear from context.
For a characteristic RKHS \cite{sriperumbudur2011universality}, the map $\pi\mapsto\mu_\pi$ is injective, meaning each distribution has a unique embedding.
This property underlies the maximum mean discrepancy (MMD) distance of distributions, ${\rm MMD}(\nu, \pi) = \|\mu_\nu-\mu_\pi\|_\mathbb{H}$~\cite{gretton2012a}.

\textbf{Conditional mean embeddings} extend the above to conditional distributions~\cite{muandet2017kernel}.
Consider a random variable $Z$ on ${\cal Z}$, defining $\mathbb{P}[X|Z]$.
Under standard assumptions~\cite{mollenhauer2020nonparametric, novelli2024operator}, the mean embedding of $X$ given $Z={\bf z}$ can be expressed with the conditional mean embedding (CME) operator $U_{X|Z}:\mathbb{G}\to\mathbb{H}$:
\begin{align}
    \mu_{X|Z={\bf z}} \coloneqq \mathbb{E}[\phi(X)\mid Z={\bf z}] = U_{X|Z}\,\varphi({\bf z}) \in\mathbb{H}.
\end{align}
Notably, $U_{X|Z}$ is a linear operator.
The linearity allows the use of spectral methods, and also yields useful properties such as $\mu_X=U_{X|Z}\,\mu_Z$ \eqref{eq:linear_operator_introduction} via the law of total expectation.

\textbf{MMD gradient flows} enable sampling from distributions specified by mean embeddings \cite{arbel2019maximum}.
Given a target distribution $\pi$ with embedding $\mu_\pi$, the flow is driven by a time-dependent vector field $(v_t)_{t\geq 0}$:
\begin{align}\label{eq:mmd_gradient_flow}
    v_t({\bf x}) = -\nabla_{\bf x}(\mu_{p_t} - \mu_\pi)({\bf x}) = -\nabla_{\bf x}\langle \phi({\bf x}), \mu_{p_t} - \mu_\pi \rangle_\mathbb{H},
\end{align}
where $(p_t)_{t\geq 0}$ is a probability path induced by the continuity equation $\partial_t p_t + {\rm div}(p_tv_t) = 0$.
Starting at any initial distribution $p_0$, the path converges towards the target distribution: $p_t\to\pi$ as $t\to\infty$.
In practice, one can generate samples ${\bf x}\sim \pi$ by starting at an initial sample ${\bf x}_0\sim p_0$ and numerically integrating the ordinary differential equation (ODE) $d{\bf x}_t=v_t({\bf x}_t)\,dt$ to obtain ${\bf x}_t\sim p_t$.

\section{Spectral Mean Flows}\label{sec:method}

\paragraph{Problem setup}
We formulate sequence modeling in an operator-theoretic context using hidden Markov models (HMM).
We consider a sequence distribution $X^{1:N}\sim\rho$, where $X^n\in{\cal X}$, and model it as an HMM with hidden states $Z^{1:N}$ where $Z^n\in{\cal Z}$.
The hidden states evolve under time-invariant Markovian dynamics $\mathbb{P}[Z^{n+1}|Z^n]$, and determine $X^{1:N}$ via a local observation model $\mathbb{P}[X^n|Z^n]$.
The HMM can express any sequence distribution if the hidden Markov chain is sufficiently expressive.

Our approach begins by embedding the entire sequence distribution $X^{1:N}\sim\rho$ in an appropriate Hilbert space, such that an MMD flow can be defined on the space of sequences ${\cal X}^N$.
We leverage the fact that tensor product $\mathbb{H}^{\otimes N}\coloneqq\mathbb{H}\otimes\cdots\otimes\mathbb{H}$ of a characteristic RKHS $\mathbb{H}$ remains characteristic under mild assumptions (Lemma~\ref{lemma:characteristic_rkhs}).
The canonical feature map of this tensor product space is ${\bf x}^{1:N}\mapsto \phi({\bf x}^1)\otimes\cdots\otimes\phi({\bf x}^N)$, which naturally leads to the sequence mean embedding $\mu_\rho=\mu_{X^{1:N}}$:
\begin{align}\label{eq:sequence_mean_embedding}
    \mu_\rho \coloneqq \mathbb{E}[\phi(X^1)\otimes\cdots\otimes\phi(X^N)] \in \mathbb{H}^{\otimes N}.
\end{align}
With the above concepts in hand, we can extend the MMD flow \eqref{eq:mmd_gradient_flow} to operate on entire sequences.
Specifically, we define a time-dependent vector field $(v_t)_{t\geq 0}$ that induces a probability path $(p_t)_{t\geq 0}$ of sequences $X_t^{1:N}\sim p_t$, converging towards the target distribution $X^{1:N}\sim \rho$ over time $t$:\footnote{We have two different notions of time: $n=1 ... N$ over sequence length, and $t\geq 0$ over generative process.}
\begin{align}\label{eq:sequence_mmd_gradient_flow}
    v_t({\bf x}^{1:N}) = -\nabla_{{\bf x}^{1:N}}\langle \phi({\bf x}^1)\otimes\cdots\otimes\phi({\bf x}^N), \mu_{p_t} - \mu_\rho\rangle_{\mathbb{H}^{\otimes N}}.
\end{align}
Samples ${\bf x}_t^{1:N}\sim p_t$ can then be generated by solving the ODE $d{\bf x}_t^{1:N}=v_t({\bf x}_t^{1:N})\,dt$ from ${\bf x}_0^{1:N}\sim p_0$.

While theoretically sound, the idea in its na\"ive form faces practical challenges.
The mean embedding $\mu_\rho$ \eqref{eq:sequence_mean_embedding} is a higher-order tensor whose size grows exponentially with the sequence length~$N$, making training and sampling intractable.
In addition, sampling with MMD flow is typically slow, because its convergence $p_t\to\rho$ is only guaranteed asymptotically as $t\to\infty$~\cite{arbel2019maximum}.

To overcome the challenges, we propose \textbf{spectral mean flows}, a novel approach that is tractable, easy to train, and converges fast at sampling.
In Section~\ref{sec:spectral_decomposition}, we derive a scalable tensor network decomposition of mean embeddings \eqref{eq:sequence_mean_embedding} using spectral decomposition of linear operators underlying the HMM.
In Section~\ref{sec:time_dependent_mmd_flows} we present an extension of MMD flow \eqref{eq:sequence_mmd_gradient_flow} to time-dependent RKHS that connects to flow matching and converges within a finite time $t=1$.
Section~\ref{sec:neural} integrates these ideas into a neural sequence architecture and learning algorithm, and discusses implementation.

\subsection{Tractability with Spectral Decomposition}\label{sec:spectral_decomposition}
To make the MMD flows \eqref{eq:sequence_mmd_gradient_flow} with sequence mean embeddings \eqref{eq:sequence_mean_embedding} computationally tractable, we exploit the linear operator structure underlying the HMM by applying spectral decomposition.

\paragraph{Linear operators underlying HMMs}
For the conditional distributions $\mathbb{P}[Z^{n+1}|Z^n]$ and $\mathbb{P}[X^n|Z^n]$ underlying our HMM for sequences $X^{1:N}$, we can define the respective transition and observation CME operators $U:\mathbb{G}\to\mathbb{G}$ and $O:{\mathbb{G}\to\mathbb{H}}$ as follows (see Section~\ref{sec:background} and Appendix~\ref{apdx:mean_embeddings_of_distributions}):
\begin{align}
    U\varphi({\bf z}) &= \mathbb{E}[\varphi(Z^{n+1})\mid Z^n={\bf z}] \in \mathbb{G}, \label{eq:cme_operator_hidden} \\
    O\varphi({\bf z}) &= \mathbb{E}[\phi(X^n)\mid Z^n={\bf z}] \in \mathbb{H}. \label{eq:cme_operator_observation}
\end{align}
For the MMD flow generative process $(X_t^{1:N})_{t\geq 0}$ \eqref{eq:sequence_mmd_gradient_flow}, we make a modeling choice that the hidden dynamics $\mathbb{P}[Z^{n+1}|Z^n]$ remain fixed over time $t$, and only the observation model $\mathbb{P}[X_t^n|Z^n]$ evolves over $t$.
Then we obtain a time-dependent observation CME operator $(S_t)_{t\geq 0}$ where $S_t:\mathbb{G}\to\mathbb{H}$:
\begin{align}
    S_t\varphi({\bf z}) &= \mathbb{E}[\phi(X_t^n)\mid Z^n={\bf z}] \in \mathbb{H}.\label{eq:cme_operator_generation}
\end{align}

\paragraph{Mean embedding decomposition}
To make MMD flow \eqref{eq:sequence_mmd_gradient_flow} tractable, our key idea is to decompose mean embeddings $\mu_\rho$ and $\mu_{p_t}$ into tractable forms by leveraging linearity of CME operators $U, O, S_t$.

As a first step, we decompose the mean embeddings $\mu_\rho$ and $\mu_{p_t}$ into their respective observation operators $O$ and $S_t$, along with the mean embedding $\mu_{Z^{1:N}}$ of the hidden Markov chain defined as:
\begin{align}\label{eq:hidden_mean_embedding}
    \mu_{Z^{1:N}}\coloneqq\mathbb{E}[\varphi(Z^1)\otimes\cdots\otimes\varphi(Z^N)]\in\mathbb{G}^{\otimes N}.
\end{align}
Defining the $N$-fold tensor product operator by $O^{\otimes N} \coloneqq O\otimes\cdots\otimes O:\mathbb{G}^{\otimes N}\to \mathbb{H}^{\otimes N}$, we obtain:
\begin{align}
    \mu_\rho &\coloneqq \mathbb{E}[\phi(X^1)\otimes\cdots\otimes\phi(X^N)] \tag{mean embedding \eqref{eq:sequence_mean_embedding}}\\
    &= \mathbb{E}\left[\mathbb{E}[\phi(X^1)\mid Z^1]\otimes\cdots\otimes\mathbb{E}[\phi(X^N)\mid Z^N]\right] \tag{HMM structure}\\
    &= \mathbb{E}[(O\varphi(Z^1))\otimes\cdots\otimes(O\varphi(Z^N))] \tag{CME operator \eqref{eq:cme_operator_observation}}\\
    &= O^{\otimes N}\mathbb{E}[\varphi(Z^1)\otimes\cdots\otimes\varphi(Z^N)] \tag{linearity} \\
    &= O^{\otimes N} \mu_{Z^{1:N}}. \label{eq:decomposition_observation}
\end{align}
Similarly, we have $\mu_{p_t}=S_t^{\otimes N}\mu_{Z^{1:N}}$ with $S_t^{\otimes N}:\mathbb{G}^{\otimes N}\to \mathbb{H}^{\otimes N}$.
Formal proof is in Proposition~\ref{proposition:decomposition_observation}.

The remaining challenge is decomposing the hidden chain embedding $\mu_{Z^{1:N}}$.
We use the fact that the linearity of the transition operator $U:\mathbb{G}\to\mathbb{G}$ enables a decomposition of the following form:
\begin{align}\label{eq:linear_operator_decomposition}
    U = \sum_{i\in\mathbb{N}}\lambda_i h_i\otimes g_i.
\end{align}
In practice, we use rank-$r$ decomposition by restricting to $i\in[r]$, where $r$ controls expressiveness.

A possible choice of \eqref{eq:linear_operator_decomposition} is singular value decomposition (SVD) with singular values $\lambda_i\in\mathbb{R}$ and left/right singular functions $g_i,h_i:{\cal Z}\to\mathbb{R}$.
Or, assuming $U$ is normal with a discrete spectrum, we can use eigenvalue decomposition (EVD) with eigenvalues $\lambda_i\in\mathbb{C}$ and eigenfunctions $g_i,h_i:{\cal Z}\to\mathbb{C}$.
While both are valid, we use EVD that efficiently encodes long-range interactions via oscillations~\cite{brunton2021modern}.

Then, to decompose $\mu_{Z^{1:N}}\in\mathbb{G}^{\otimes N}$, we draw inspiration from a result in Koopman operator theory that eigenfunctions are closed under pointwise multiplication, i.e., they form a multiplicative algebra, with products yielding new eigenfunctions whose eigenvalues multiply \cite{mezic2013applied, brunton2021modern, boulle2025multiplicative}.
As we show, encoding this closure in finite-rank representations makes instantiating $\mathbb{G}^{\otimes N}$ redundant, as nonlinear or higher-order interactions are already encoded in the algebra generated by the span of eigenfunctions.

Specifically, we make a technical assumption that the RKHS $\mathbb{G}$ is closed under pointwise multiplication, $f,g\in\mathbb{G}\Longrightarrow f\cdot g\in\mathbb{G}$.
This holds for Sobolev spaces $H^s(\mathbb{R}^d)$ of order $s>d/2$~\cite{adams2003sobolev}, which are characteristic and thus suitable for our framework.
Under this assumption and leveraging a decomposition of $U$ \eqref{eq:linear_operator_decomposition} with $g_i,h_i\in\mathbb{G}$, we derive the following novel decomposition of $\mu_{Z^{1:N}}$:
\begin{align}\label{eq:decomposition_hidden}
    \mu_{Z^{1:N}} = \sum_{i_2, ..., i_N} b_{i_2} \otimes \lambda_{i_2} w_{i_2,i_3} \otimes\cdots\otimes \lambda_{i_{N-1}} w_{i_{N-1}, i_N} \otimes \lambda_{i_N} h_{i_N},
\end{align}
where $b_i, w_{i,j}$ are fixed elements in $\mathbb{G}$ determined by $g_i,h_i$ and the initial state distribution $\mu_{Z^1}$.
Complete details and proof are given in Proposition~\ref{proposition:decomposition_hidden}.

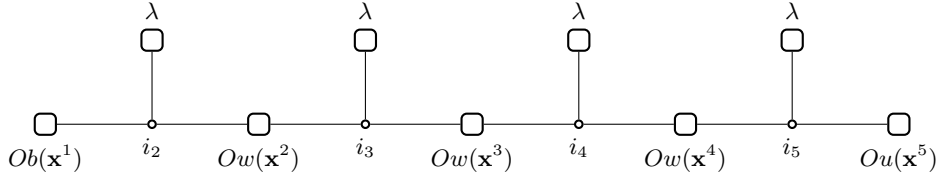
\begin{figure}[!t]
    \centering
    \begin{tikzpicture}[>=stealth, 
                        node distance=1.5cm, 
                        baseline=(current bounding box.center)]
      \tikzset{
        virtual/.style={draw,circle,inner sep=1pt,thick},
        tensor/.style={draw,rectangle,rounded corners=2pt,thick,inner sep=4pt},
        vector/.style={draw,rectangle,rounded corners=2pt,thick,inner sep=4pt},
        phi/.style={draw,rectangle,rounded corners=2pt,thick,inner sep=4pt},
        every node/.style={font=\small}
      }
      \node[tensor, label=below:$Ob(\mathbf{x}^1)$] (x1) {};
      \node[virtual, right=1.2cm of x1, label=below:$i_2$] (i2) {};
      \node[tensor, right=1.2cm of i2, label=below:$Ow(\mathbf{x}^2)$] (x2) {};
      \node[virtual, right=1.2cm of x2, label=below:$i_3$] (i3) {};
      \node[tensor, right=1.2cm of i3, label=below:$Ow(\mathbf{x}^3)$] (x3) {};
      \node[virtual, right=1.2cm of x3, label=below:$i_4$] (i4) {};
      \node[tensor, right=1.2cm of i4, label=below:$Ow(\mathbf{x}^4)$] (x4) {};
      \node[virtual, right=1.2cm of x4,  label=below:$i_5$] (i5) {};
      \node[tensor, right=1.2cm of i5, label=below:$Ou(\mathbf{x}^5)$] (x5) {};
      \draw (x1) -- (i2) -- (x2) -- (i3) -- (x3) -- (i4) -- (x4) -- (i5) -- (x5);
      \node[tensor, above=0.9cm of i2, label=above:$\lambda$] (lambda2) {};
      \draw (i2) -- (lambda2);
      \node[tensor, above=0.9cm of i3, label=above:$\lambda$] (lambda3) {};
      \draw (i3) -- (lambda3);
      \node[tensor, above=0.9cm of i4, label=above:$\lambda$] (lambda4) {};
      \draw (i4) -- (lambda4);
      \node[tensor, above=0.9cm of i5, label=above:$\lambda$] (lambda5) {};
      \draw (i5) -- (lambda5);
    \end{tikzpicture}
    \caption{A tensor diagram~\cite{bridgeman2017hand, ahle2024tensorcookbook} of the inner product $\langle \phi({\bf x}^1)\otimes\cdots\otimes\phi({\bf x}^N), \mu_\rho\rangle$ \eqref{eq:sequence_inner_product} for $N=5$, showing how its evaluation involving a tensor $\mu_\rho$ decomposes into matrix and vector multiplications.}
    \label{fig:tensor_diagram}
\end{figure}

\paragraph{Tractability}
With the decompositions in \eqref{eq:decomposition_observation} and \eqref{eq:decomposition_hidden}, we can convert the intractable computations in MMD flow \eqref{eq:sequence_mmd_gradient_flow} into a tractable form.
Specifically, in \eqref{eq:sequence_mmd_gradient_flow}, the inner product between feature of sequence data ${\bf x}^{1:N}$ and mean embedding $\mu_\rho$ dramatically simplify (Corollary~\ref{corollary:inner_product}):
\begin{align}
    \langle \phi({\bf x}^1)\otimes\cdots\otimes\phi({\bf x}^N), \mu_\rho\rangle &= \langle \phi({\bf x}^1)\otimes\cdots\otimes\phi({\bf x}^N), O^{\otimes N} \mu_{Z^{1:N}}\rangle \nonumber\\
    &= \sum_{i_2,...,i_N} Ob_{i_2}({\bf x}^1)\cdot \lambda_{i_2}\cdot Ow_{i_2,i_3}({\bf x}^2)\cdots \lambda_{i_N}\cdot Oh_{i_N}({\bf x}^N),\label{eq:sequence_inner_product}
\end{align}
where we use the shorthand $\langle\cdot,\cdot\rangle = \langle\cdot,\cdot\rangle_{\mathbb{H}^{\otimes N}}$.
For $\mu_{p_t}$, it is only required to substitute $O$ with $S_t$.

Importantly, the reformulation in \eqref{eq:sequence_inner_product} eliminates the requirement to materialize exponentially large tensors in $\mathbb{H}^{\otimes N}$ when computing \eqref{eq:sequence_mmd_gradient_flow}.
Instead, the computation reduces to a standard tensor network contraction (illustrated in Figure~\ref{fig:tensor_diagram}), achieving linear complexity in sequence length $N$ and polynomial complexity in rank $r$.
This makes the approach tractable for realistic sequence lengths.

\subsection{Faster Convergence with Time-dependent RKHS and Flow Matching}\label{sec:time_dependent_mmd_flows}
Having established tractable mean embeddings, we now address the remaining challenge of slow sampling convergence in standard MMD flows.

\paragraph{Time-dependent RKHS}
Recall MMD gradient flow $(p_t)_{t\geq 0}$ for a target distribution $\pi$ on ${\cal X}$ within an RKHS $\mathbb{H}$ (Section~\ref{sec:background}).
The flow follows the steepest direction of the MMD between the current distribution $p_t$ and target $\pi$ measured in $\mathbb{H}$ \cite{arbel2019maximum}.
Since the RKHS $\mathbb{H}$ determines the geometric structure underlying the gradient flow via MMD, convergence behavior is essentially tied to the fixed $\mathbb{H}$.

To improve the convergence of MMD flow, a natural way is to use time-dependent RKHS $\mathbb{H}_t$~\cite{galashov2024deep}.
This creates a time-evolving geometry where the steepest direction of the MMD changes over time, allowing more flexibility of the flow.
Let $(\mathbb{H}_t)_{t\geq 0}$ be a time-dependent family of characteristic RKHS, with canonical feature map $(\phi_t)_{t\geq 0}$ where $\phi_t:{\cal X}\to\mathbb{H}_t$.
The mean embedding of a distribution $\pi$ is:
\begin{align}
    \mu_{\pi,t} \coloneqq \mathbb{E}[\phi_t(X)] \in \mathbb{H}_t.
\end{align}
The MMD distance in $\mathbb{H}_t$ is accordingly given as ${\rm MMD}_t(\nu,\pi) = \|\mu_{\nu, t} - \mu_{\pi, t}\|_{\mathbb{H}_t}$.

Then, MMD flow with time-dependent RKHS $\mathbb{H}_t$ can be obtained from standard construction \cite{ferreira2018gradient}.
The flow defines a probability path $(p_t)_{t\geq 0}$ driven by a vector field $(v_t)_{t\geq 0}$ via the continuity equation $\partial_tp_t + {\rm div}(p_tv_t)=0$ that, at each time $t$, follows the steepest direction of the MMD measured in $\mathbb{H}_t$:
\begin{align}\label{eq:time_dependent_mmd_gradient_flow}
    v_t(\mathbf{x}) = -\nabla_{\bf x}\langle \phi_t({\bf x}), \mu_{p_t,t} - \mu_{\pi,t} \rangle_{\mathbb{H}_t},
\end{align}
and samples ${\bf x}_t\sim p_t$ can be obtained via the ODE $d{\bf x}_t=v_t({\bf x}_t)\,dt$.
More details are in Appendix~\ref{apdx:time_dependent_mmd_flow}.

\paragraph{Connection to flow matching}
With time-dependent RKHS, \cite{galashov2024deep} used discriminator learning of $\phi_t$ to empirically improve the convergence of MMD flow.
Instead, by connecting to flow matching \cite{lipman2023flow}, we propose a way to achieve convergence at finite time $t=1$ rather than asymptotically as $t\to\infty$.

We hinge on the fact that we can often find an analytic probability path $q_t$ and a vector field $u_t$ defined in $t\in[0,1]$ that satisfy the continuity equation $\partial_tq_t+{\rm div}(q_tu_t)=0$ and also $q_1\approx\pi$.\footnote{For example, $q_1$ is a smoothing of $\pi$ with a Gaussian filter with a sufficiently small standard deviation.}
This is the idea of flow matching (FM) \cite{lipman2023flow}, which allows us to borrow established pairs of $(q_t, u_t)$.

That is, for a choice of $(q_t,u_t)$, we aim to identify the components of the MMD flow \eqref{eq:time_dependent_mmd_gradient_flow} such that its vector field $v_t$ regresses $u_t$.
If we have $v_t=u_t$ and $p_0=q_0$, it follows from the continuity equation that $p_t=q_t$ and the MMD flow converges closely to the target $p_1=q_1\approx\pi$ at $t=1$.

Here, an important caveat is that our vector field $v_t$ is always a gradient field with zero curl.
For the target equality $v_t=u_t$ to be well-specified, the target $u_t$ must also be a gradient field.
While this is not always true, we fortunately find that many commonly used vector fields in FM are gradient fields.
This includes the optimal transport (OT) field and the diffusion-types, as we prove in Appendix~\ref{apdx:flow_matching}.

\paragraph{Sequence instantiation}
We now apply the aforementioned ideas to HMM for sequences $X^{1:N}\sim\rho$ based on hidden states $Z^{1:N}$.
We consider time-dependent RKHS $(\mathbb{H}_t)_{t\in[0,1]}$.
By extending \eqref{eq:time_dependent_mmd_gradient_flow}, the vector field $v_t$ of the corresponding MMD flow $X_t^{1:N}\sim p_t$ can be written as follows:
\begin{align}
    v_t({\bf x}^{1:N}) &= -\nabla_{{\bf x}^{1:N}}\langle \phi_t({\bf x}^1)\otimes\cdots\otimes\phi_t({\bf x}^N),\mu_{p_t,t} - \mu_{\rho,t} \rangle_{\mathbb{H}_t^{\otimes N}}. \label{eq:time_dependent_sequence_mmd_gradient_flow}
\end{align}
Then, assuming that the transition operator $U:\mathbb{G}\to\mathbb{G}$ for $\mathbb{P}[Z^{n+1}|Z^n]$ does not depend on time $t$, we extend the observation operators $O_t,S_t:\mathbb{G}\to\mathbb{H}_t$ for $\mathbb{P}[X^n|Z^n]$ and $\mathbb{P}[X_t^n|Z^n]$, respectively, as:
\begin{align}
    O_t\varphi({\bf z}) &= \mathbb{E}[\phi_t(X^n)\mid Z^n={\bf z}]\in\mathbb{H}_t,\\
    S_t\varphi({\bf z}) &= \mathbb{E}[\phi_t(X_t^n)\mid Z^n={\bf z}]\in\mathbb{H}_t.
\end{align}
With this, by extending the decomposition in \eqref{eq:decomposition_observation}, we obtain:
\begin{align}
    \mu_{\rho,t}=O_t^{\otimes N}\mu_{Z^{1:N}},\quad
    \mu_{p_t,t}=S_t^{\otimes N}\mu_{Z^{1:N}}.
\end{align}
And by extending the decomposition in \eqref{eq:sequence_inner_product}, we obtain:
\begin{align}\label{eq:time_dependent_sequence_inner_product}
    \langle\phi_t({\bf x}^1)\otimes\cdots\otimes\phi_t({\bf x}^N),\mu_{\rho,t}\rangle = \sum_{i_2...i_N}O_tb_{i_2}({\bf x}^1)\cdot\lambda_{i_2} O_tw_{i_2,i_3}({\bf x}^2)\cdots\lambda_{i_N} O_th_{i_N}({\bf x}^N)
\end{align}
where we use the shorthand $\langle\cdot,\cdot\rangle=\langle\cdot,\cdot\rangle_{\mathbb{H}_t^{\otimes N}}$.
For $\mu_{p_t,t}$, it is only required to substitute $O_t$ by $S_t$.
This retains the tensor network structure, and hence the tractability of the MMD flow as in Section~\ref{sec:spectral_decomposition}.

\subsection{Neural Parameterization and Learning}\label{sec:neural}
Having established spectral mean flows in RKHSs, we now develop a neural sequence architecture and learning algorithm.
In \eqref{eq:time_dependent_sequence_mmd_gradient_flow} and \eqref{eq:time_dependent_sequence_inner_product}, we can see that the operators $O_t,S_t:\mathbb{G}\to\mathbb{H}_t$, together with the (Sobolev) RKHS elements $b_i,w_{ij},h_i\in\mathbb{G}$ and $\lambda_i$ for $i,j\in[r]$, determine the MMD flow $v_t$.
We describe how to parameterize and learn these components as neural networks to pursue $v_t=u_t$.

\paragraph{Neural parameterization}
To develop a neural parameterization, we adopt neural tangent kernel (NTK) theory~\cite{jacot2018neural} as a theoretical footing that connects neural networks and Sobolev spaces, e.g., $\mathbb{G}$.
Specifically, the function space defined by a multi-layer perceptron (MLP) in the infinite-width limit is norm-equivalent to the RKHS of a Sobolev kernel~\cite{bietti2021deep}.
In addition, any function in a Sobolev space can be approximated to arbitrary precision with a finite-width MLP, where the theoretical scaling law predicts decreasing approximation error as the network size increases~\cite[Theorems 1 and 3]{siegel2023optimal}.

These results justify, e.g., parameterizing each $b_i,w_{ij},h_i\in\mathbb{G}$ as a scalar-valued MLP.
Yet, instead of separating $O_t,S_t:\mathbb{G}\to\mathbb{H}_t$, we use an end-to-end parameterization~\cite{huang2025operator} of each $O_tb_i, S_tb_i, ...\in\mathbb{H}_t$ as a $t$-conditioned scalar-valued MLP on ${\cal X}$.
While this $t$-conditioning is inspired by \cite{galashov2024deep}, a difference is that we directly parameterize elements of $\mathbb{H}_t$ without using linear kernel approximation.

\paragraph{Learning and sampling}
Under the aforementioned neural parameterization, the MMD flow \eqref{eq:time_dependent_sequence_mmd_gradient_flow} becomes a neural gradient field $v_t(\cdot; \theta)$ with learnable components $\theta\coloneqq (O_tb_i,S_tb_i, O_tw_{ij}, ..., \lambda_i)$.

Then, using a known pair $(q_t,u_t)_{t\in[0,1]}$ of probability path $q_t$ and vector field $u_t$ satisfying $q_1\approx\rho$, we can use flow matching (FM) objective \cite{lipman2023flow} to achieve the equality $v_t(\cdot;\theta)=u_t(\cdot)$ (Section~\ref{sec:time_dependent_mmd_flows}):
\begin{align}\label{eq:flow_matching}
    \theta^* = \argmin_\theta \mathbb{E}_{t\sim{\cal U}[0, 1],q_t({\bf x}^{1:N})}\left[\|v_t({\bf x}^{1:N};\theta) - u_t({\bf x}^{1:N})\|_2^2\right],
\end{align}
While \eqref{eq:flow_matching} is intractable, thanks to compatibility with FM (Section~\ref{sec:time_dependent_mmd_flows}), we can use an equivalent tractable objective named conditional FM \cite{lipman2023flow} using conditional counterparts $q_t(\cdot|{\bf x}_1^{1:N}),u_t(\cdot|{\bf x}_1^{1:N})$:
\begin{align}\label{eq:conditional_flow_matching}
    \theta^* = \argmin_\theta \mathbb{E}_{t\sim{\cal U}[0, 1],\rho({\bf x}_1^{1:N}),q_t({\bf x}^{1:N}|{\bf x}_1^{1:N})}\left[\|v_t({\bf x}^{1:N};\theta) - u_t({\bf x}^{1:N}|{\bf x}_1^{1:N})\|_2^2\right].
\end{align}
This leaves us with a tractable learning algorithm for the neural MMD flow $v_t(\cdot; \theta)$.
The algorithm is simulation-free, i.e., no ODE solving with $v_t(\cdot; \theta)$ is necessary during training, and uses double backpropagation, i.e., $v_t(\cdot; \theta)$ internally invokes backpropagation during forward pass.
In our experiments, we use the OT flow path and field $(q_t,u_t)$ for training (Proposition~\ref{proposition:ot_flow}) due to their simplicity.

After training, we can sample ${\bf x}_1^{1:N}\sim q_1\approx\rho$ via the ODE $d{\bf x}_t^{1:N}=v_t({\bf x}_t^{1:N};\theta^*), {\bf x}_0^{1:N}\sim q_0$.

\paragraph{Implementation}
We discuss implementation of spectral mean flow used in our experiments.
More details are in Appendix~\ref{apdx:engineering}.
First of all, we use rank-$r$ EVD as discussed in Section~\ref{sec:spectral_decomposition}, requiring complex eigenvalues $\lambda_i\in\mathbb{C}$ and complex-valued MLPs that parameterize $O_tb_i, S_tb_i,...:{\cal X}\to\mathbb{C}$.

To parameterize $O_tb_i,O_tw_{ij},O_th_i$ for $i,j\in[r]$, we employ a shared feature extractor ${\rm MLP}_O(\cdot, t):{\cal X}\to\mathbb{C}^{d_f}$ with feature dimension $d_f$ and combine it with readout heads ${\bf B}, {\bf L}, {\bf R}, {\bf H}\in\mathbb{C}^{d_f\times r}$:
\begin{align}
    O_tb_i({\bf x}) \approx [{\bf h}{\bf B}]_i \quad O_tw_{ij}({\bf x}) \approx [{\bf L}{\rm diag}({\bf h}){\bf R}]_{ij} \quad O_th_i({\bf x}) \approx [{\bf h}{\bf H}]_i \quad {\bf h} \coloneqq {\rm MLP}_O({\bf x}, t).\label{eq:matrix_state}
\end{align}
We note that similar approaches are found in operator-theoretic learning of dynamical systems \cite{kostic2023learning}.
To parameterize $S_tb_i,S_tw_{ij},S_th_i$, we only change the feature extractor, denoted as ${\rm MLP}_S(\cdot, t):{\cal X}\to\mathbb{C}^{d_f}$.
We design the two MLPs to share most of their parameters through a switching scheme:
\begin{align}
    {\rm MLP}_O(\cdot, t) \coloneqq {\rm MLP}(\cdot, {\bf o}, t),
    \quad {\rm MLP}_S(\cdot, t) \coloneqq {\rm MLP}(\cdot, {\bf s}, t),
\end{align}
where ${\rm MLP}(\cdot, \cdot, t):{\cal X}\times\mathbb{R}^{d_h}\to\mathbb{C}^{d_f}$ is shared and ${\bf o},{\bf s}\in\mathbb{R}^{d_h}$ are trainable switching parameters.

The design of readout heads in \eqref{eq:matrix_state} offers a computational benefit.
With the linearity of the tensor network \eqref{eq:time_dependent_sequence_inner_product}, we can rearrange matrix multiplications to avoid materializing $r\times r$ matrix states ${\bf L}{\rm diag}({\bf h}){\bf R}$.
With this, we achieve ${\cal O}(Nr+r^2)$ space complexity of \eqref{eq:time_dependent_sequence_inner_product}, avoiding the $Nr^2$ term.

To parameterize and initialize the complex-valued parameters $(\lambda, {\bf B}, {\bf L}, {\bf R}, {\bf H})$, we take inspiration from the universal spectrum argument of \cite[Proposition 3]{bevanda2023koopman}, which proposes to take eigenvalues from the uniform distribution on the complex unit disk.
We employ the exponential parameterization of \cite[Lemma 3.2]{orvieto2023resurrecting} for this, which has an additional benefit of stabilizing the training.

We lastly discuss the design of ${\rm MLP}(\cdot,\cdot, t)$.
Since MMD flow \eqref{eq:time_dependent_sequence_mmd_gradient_flow} uses its gradient with respect to input, non-differentiable components such as ReLU can cause discontinuities \cite{etmann2019closer}.
We design ${\rm MLP}(\cdot,\cdot, t)$ to be fully differentiable with squared ReLU activation \cite{so2022primersearchingefficienttransformers} and root mean square (RMS) normalization \cite{zhang2019rootmeansquarelayer}.
For $t$-conditioning, we use sinusoidal embedding~\cite{vaswani2017attention}.
We find these design choices to work robustly while being simpler over alternatives such as SwiGLU~\cite{shazeer2020glu} and adaLN~\cite{peebles2023scalable}.

\section{Related Work}\label{sec:related_work}
\paragraph{Operator theory}
Operator theory lifts the idea of vector spaces from points to functions~\cite{rudin1991functional}.
The framework was initially developed for solving equations of infinitely many variables.
This point of view is particularly advantageous in high-dimensional spaces, as maps of functions describe the map of every points simultaneously instead of each individual point.
\cite{kolmogoroff1931analytischen, koopman1931hamiltonian} used the theory of linear operators to describe physical evolution equations~\cite{Engel_Nagel_2000} of probability measures and their dual observable functions.
In a modern context, linear operator theory for evolution equations is often used to find practical linear algebra techniques to minimize, typically unsupervised, learning objectives.
Following~\cite{hyvarinen2023nonlinear}, we can categorize them into four goals: revealing underlying structure~\cite{dellnitz1999approximation, froyland2013estimating, klus2020eigendecompositions, kostic2022learning}, learning representations~\cite{mardt2018vampnets, kostic2023learning, Ryu_Xu_Erol_Bu_Zheng_Wornell_2024}, modeling data distributions for prediction~\cite{Jaeger2000ObservableOM,mollenhauer2020nonparametric,bevanda2023koopman}, and lastly, generating points from a distribution over sequential data.
To the best of our knowledge, the fourth task of generating new points from a distribution over sequences has not been approached yet.

\paragraph{Mean embeddings}
Hilbert space embeddings of statistical quantities are a long-term staple in statistical learning~\cite{berlinet2011reproducing}.
In particular, RKHS mean embeddings have proven to be powerful tools~\cite{muandet2017kernel}.
Regressing the embedding of conditional expectations is studied with conditional mean embeddings~\cite{song2009hilbert, grunewalder2012conditional, szabo2016learning}.
This framework generalizes linear regression in RKHS from the space of vectors to the space of distributions, naturally giving rise to so-called conditional expectation operators that map distributions to their expectations of a function.
They appear in the kernel Bayes rule~\cite{fukumizu2011kernelbayesrule} and are used in filtering~\cite{mccalman2013multi}, where a recursive application of conditional expectation operators over sequences first appeared in~\cite{song2010hilbert}.
Recently, conditional expectation operators were connected to the MMD~\cite{mollenhauer2020nonparametric}, the transfer operators of Markov models~\cite{Jaeger2000ObservableOM,song2010hilbert,boots2013hilbert,kostic2022learning}, and have been integrated with the linear operator theory for evolution equations~\cite{kostic2022learning, inzerilli_consistent_2023}.
Again, none of the methods studies generation with transfer operators, especially generating an entire coherent sequence, which is of our interest.

\paragraph{Generating samples from mean embeddings}
A key underlying idea of our work is generating samples from a distribution specified as a mean embedding.
A classical approach for this is matching a mean embedding against those of a parametric family of distributions and recovering the minimizing distribution~\cite{song2008tailoring}.
Another approach is to estimate the solution to an inverse problem from data to obtain operators that directly recover densities from mean embeddings~\cite{schuster2020kernel}.
Our work is more closely related to approaches that evolve an empirical distribution to match the mean embedding, which includes kernel herding~\cite{chen2010super, lacostejulien2015sequential}, and particularly MMD gradient flows~\cite{arbel2019maximum} that form the basis of our work.
A downside is that fast convergence is only guaranteed for a small class of kernels~\cite{arbel2019maximum, galashov2024deep, hertrich2024generative}, which we improve by connecting to flow matching~\cite{lipman2023flow} via the continuity equation.

\section{Experiments}\label{sec:experiments}
\vspace{-0.1cm}
We demonstrate spectral mean flows on two synthetic setups and generative modeling on a range of time-series datasets.
Details of the experiments and supplementary results are in Appendix~\ref{apdx:experiment_details}.

\subsection{Synthetic Experiments}\label{sec:checkerboard}
We first verify our claims on tractability (Section~\ref{sec:spectral_decomposition}), focusing on how the tensor network decomposition in equation~\eqref{eq:time_dependent_sequence_inner_product} and Figure~\ref{fig:tensor_diagram} contributes to tractability and scalability under the hood.
We run a numerical experiment of evaluating the inner product $\langle {\bf x}^1\otimes\cdots\otimes{\bf x}^N, \mu\rangle$ between a rank-one tensor product ${\bf x}^1\otimes\cdots\otimes{\bf x}^N$ of vectors ${\bf x}^n\in\mathbb{R}^d$ and a higher-order tensor $\mu\in\mathbb{R}^{d^N}$ for $d=32$.
We measure the peak GPU memory usage across sequence lengths $N$, depending on the availability of tensor network decomposition in the form of \eqref{eq:time_dependent_sequence_inner_product}.
The results are in Figure~\ref{fig:tractability}, showing that without the tensor network decomposition, it is almost impossible to evaluate the inner product for $N>4$.

Then, we verify our claims on faster convergence than MMD gradient flows (Section~\ref{sec:time_dependent_mmd_flows}).
For ease of visualization, we use a 2D checkerboard dataset with scale $[-4.5, +4.5]$.
We compare against MMD flows based on radial basis function (RBF) kernel $k({\bf x}, {\bf x}')=\exp(-0.5\|{\bf x}-{\bf x}'\|^2 / \sigma^2)$ with bandwidth $\sigma=1$, and time-dependent RBF kernel $(k_t)_{t\in[0, 1]}$ \cite{galashov2024deep} with $\sigma(t)=(1-t) + 0.1t$.
For MMD flows, we use 10,000 particles and 100 sampling steps with step size 1.
For spectral mean flow, we treat each data as a sequence of length $N=2$ and use 100 sampling steps with a \texttt{midpoint} ODE solver.
The results are in Figure~\ref{fig:checkerboard}.
MMD flows explicitly minimize MMD measured by RBF kernels, and their samples move rapidly towards the target near $t=0$.
However, after some steps they face stagnation, suggesting much longer sampling is required.
In contrast, spectral mean flow accurately converges to the target at $t=1$.
This supports our claims in Section~\ref{sec:time_dependent_mmd_flows} that spectral mean flow can converge arbitrarily close to the target at fixed time $t=1$, faster than na\"ive MMD flow that requires $t\to\infty$.
We note that, in the context of generative modeling, convergence at $t=1$ is sufficient and we need not worry about near $t=0$ where spectral mean flow slightly increases MMD initially.

\begin{figure}[!t]
    \centering
    \begin{minipage}[t]{0.312\textwidth}
        \centering
        \includegraphics[width=\textwidth]{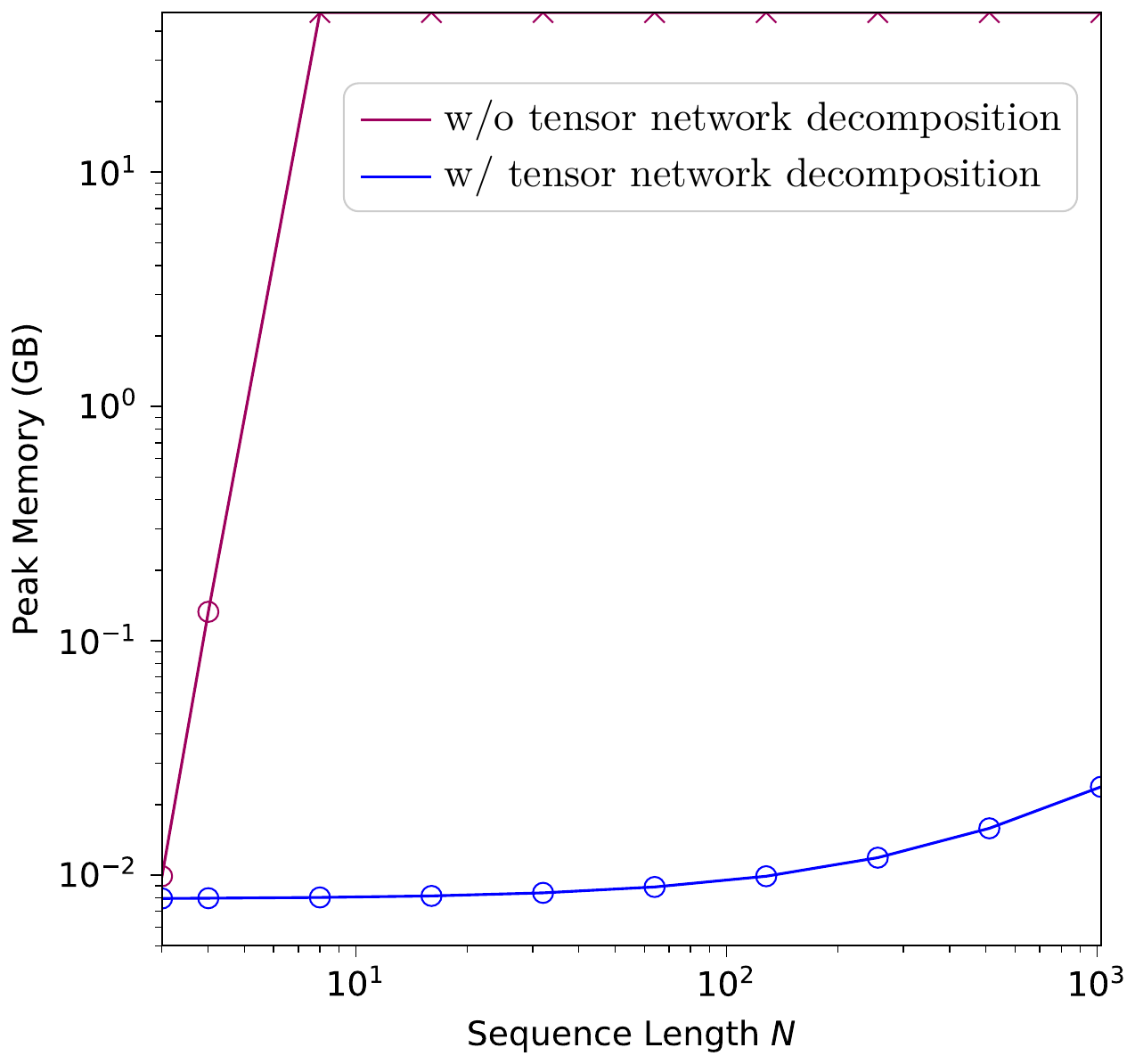}
        \caption{Peak GPU memory of inner product $\langle {\bf x}^1\otimes\cdots\otimes{\bf x}^N, \mu\rangle$ depending on the use of tensor network decomposition \eqref{eq:time_dependent_sequence_inner_product}.}
        \label{fig:tractability}
    \end{minipage}
    \hfill
    \begin{minipage}[t]{0.67\textwidth}
        \centering
        \includegraphics[width=\textwidth]{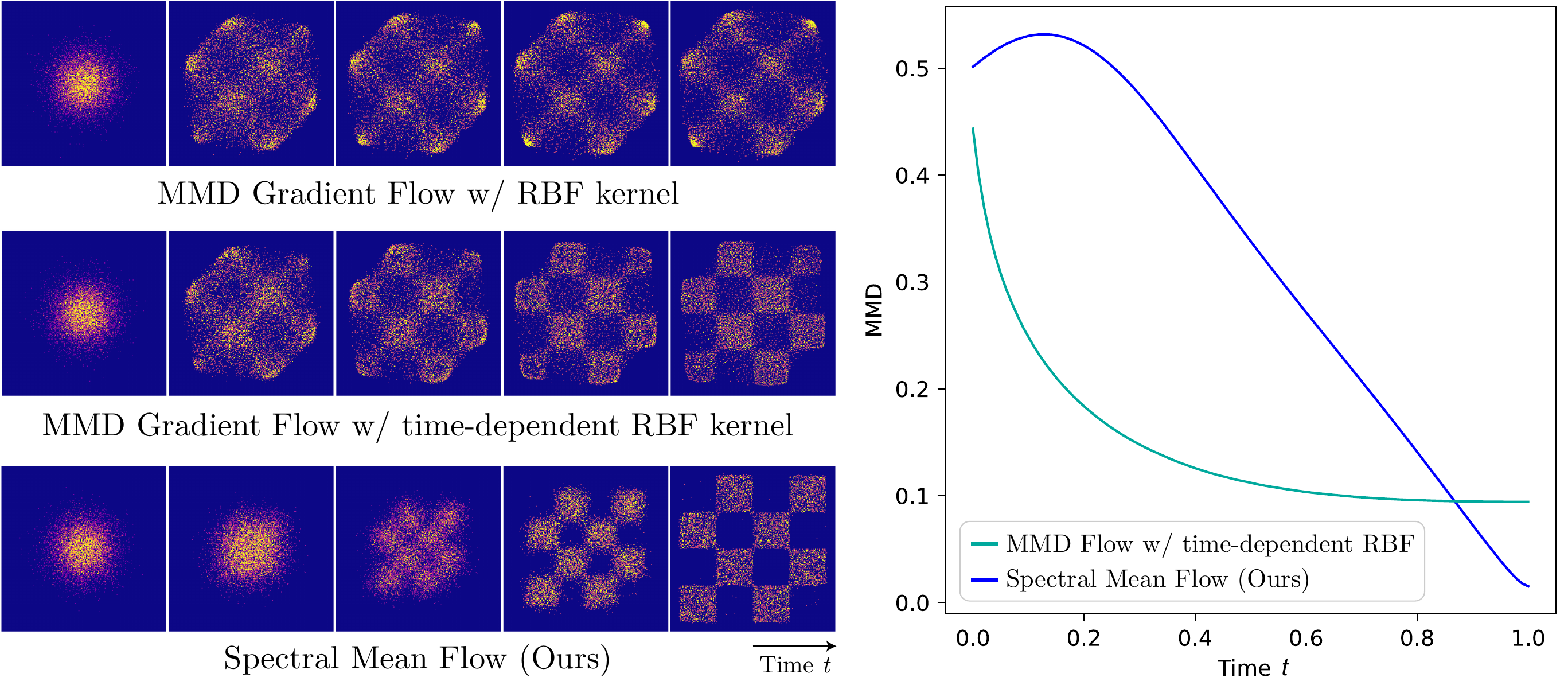}
        \caption{2D checkerboard experiment.
        Left: Intermediate distributions over sampling timesteps (\textbf{zoom in} for a better view).
        Right: MMD between the intermediate and target distributions over sampling timesteps, measured with an RBF kernel of bandwidth $1$.}
        \label{fig:checkerboard}
    \end{minipage}
\end{figure}

\subsection{Time-Series Modeling}\label{sec:time_series}
\begin{table*}[!t]
    \vspace{-0.2cm}
  \centering
  \caption{Time-series generative modeling.}\label{table:unconditional_time_series}
  \resizebox{\textwidth}{!}{
    \begin{tabular}{c|c|c|c|c|c|c|c}
    \toprule
    Metric & Methods & Sines & Stocks & ETTh  & MuJoCo & Energy & fMRI \\
    \midrule
    \multirow{7}[0]{*}{\makecell{Context-FID\\Score $\downarrow$}} &
    Ours &
    \textbf{0.004±.001} & \textbf{0.008±.003} & \textbf{0.058±.007} & \underline{0.018±.002} & \textbf{0.051±.009} & \textbf{0.116±.004} \\
    & \hyperlink{cite.Yuan2024DiffusionTSID}{Diffusion-TS} &
    0.013±.001 & 0.169±.021 & \underline{0.126±.007} & \textbf{0.015±.001} & \underline{0.113±.011} & \underline{0.118±.007} \\
    & \hyperlink{cite.coletta2023constrained}{DiffTime} &
    \underline{0.006±.001} & 0.236±.074 & 0.299±.044 & 0.188±.028 & 0.279±.045 & 0.340±.015 \\
    & \hyperlink{cite.kong2020diffwave}{Diffwave} &
    0.014±.002 & 0.232±.032 & 0.873±.061 & 0.393±.041 & 1.031±.131 & 0.244±.018 \\
    & \hyperlink{cite.yoon2019time}{TimeGAN} &
    0.101±.014 & \underline{0.103±.013} & 0.300±.013 & 0.563±.052 & 0.767±.103 & 1.292±.218 \\
    & \hyperlink{cite.desai2021timevae}{TimeVAE} &
    0.307±.060 & 0.215±.035 & 0.805±.186 & 0.251±.015 & 1.631±.142 & 14.449±.969 \\
    & \hyperlink{cite.xu2020cot}{Cot-GAN} &
    1.337±.068 & 0.408±.086 & 0.980±.071 & 1.094±.079 & 1.039±.028 & 7.813±.550 \\
    \midrule
    \multirow{7}[0]{*}{\makecell{Correlational\\Score $\downarrow$}} &
    Ours
    & 0.027±.012 & \underline{0.010±.007} & \textbf{0.040±.015} & \textbf{0.173±.016} & \textbf{0.732±.107} & \textbf{0.737±.021} \\
    & \hyperlink{cite.Yuan2024DiffusionTSID}{Diffusion-TS} &
    \textbf{0.016±.005} & \underline{0.010±.009} & \underline{0.049±.013} & \underline{0.188±.035} & \underline{0.788±.075} & \underline{1.252±.070} \\
    & \hyperlink{cite.coletta2023constrained}{DiffTime} &
    \underline{0.017±.004} & \textbf{0.006±.002} & 0.067±.005 & 0.218±.031 & 1.158±.095 & 1.501±.048 \\
    & \hyperlink{cite.kong2020diffwave}{Diffwave} &
    0.022±.005 & 0.030±.020 & 0.175±.006 & 0.579±.018 & 5.001±.154 & 3.927±.049 \\
    & \hyperlink{cite.yoon2019time}{TimeGAN} &
    0.045±.010 & 0.063±.005 & 0.210±.006 & 0.886±.039 & 4.010±.104 & 23.502±.039 \\
    & \hyperlink{cite.desai2021timevae}{TimeVAE} &
    0.131±.010 & 0.095±.008 & 0.111±020 & 0.388±.041 & 1.688±.226 & 17.296±.526 \\
    & \hyperlink{cite.xu2020cot}{Cot-GAN} &
    0.049±.010 & 0.087±.004 & 0.249±.009 & 1.042±.007 & 3.164±.061 & 26.824±.449 \\
    \midrule
    \multirow{8}[0]{*}{\makecell{Discriminative\\Score  $\downarrow$}} &
    Ours &
    \textbf{0.006±.006} & \textbf{0.022±.013} & \textbf{0.027±.010} & \textbf{0.005±.004} & \underline{0.161±.021} & \textbf{0.136±.207} \\
    & \hyperlink{cite.Yuan2024DiffusionTSID}{Diffusion-TS} &
    0.030±.006 & \underline{0.085±.026} & \underline{0.075±.007} & \underline{0.012±.006} & \textbf{0.154±.012} & \underline{0.158±.020} \\
    & \hyperlink{cite.coletta2023constrained}{DiffTime} &
    0.013±.006 & 0.097±.016 & 0.100±.007 & 0.154±.045 & 0.445±.004 & 0.245±.051 \\
    & \hyperlink{cite.kong2020diffwave}{Diffwave} &
    0.017±.008 & 0.232±.061 & 0.190±.008 & 0.203±.096 & 0.493±.004 & 0.402±.029 \\
    & \hyperlink{cite.yoon2019time}{TimeGAN} &
    \underline{0.011±.008} & 0.102±.021 & 0.114±.055 & 0.238±.068 & 0.236±.012 & 0.484±.042 \\
    & \hyperlink{cite.desai2021timevae}{TimeVAE} &
    0.041±.044 & 0.145±.120 & 0.209±.058 & 0.230±.102 & 0.499±.000 & 0.476±.044 \\
    & \hyperlink{cite.xu2020cot}{Cot-GAN} &
    0.254±.137 & 0.230±.016 & 0.325±.099 & 0.426±.022 & 0.498±.002 & 0.492±.018 \\
    & \hyperlink{cite.yoon2019time}{RNN-AR} &
    0.495±.001 & 0.226±.035 & - & - & 0.483±.004 & - \\
    \midrule
    \multirow{9}[0]{*}{\makecell{Predictive\\Score $\downarrow$}} &
     Ours &
     \textbf{0.093±.000} & \textbf{0.037±.000} & \underline{0.123±.005} & \underline{0.008±.001} & \textbf{0.251±.000} & \textbf{0.100±.000} \\
    & \hyperlink{cite.Yuan2024DiffusionTSID}{Diffusion-TS} &
    \underline{0.095±.000} & \textbf{0.037±.000} & \textbf{0.121±.002} & \textbf{0.007±.001} & \textbf{0.251±.000} & \textbf{0.100±.000} \\
    & \hyperlink{cite.coletta2023constrained}{DiffTime} &
    \textbf{0.093±.000} & \underline{0.038±.001} & \textbf{0.121±.004} & 0.010±.001 & \underline{0.252±.000} & \textbf{0.100±.000} \\
    & \hyperlink{cite.kong2020diffwave}{Diffwave} &
    \textbf{0.093±.000} & 0.047±.000 & 0.130±.001 & 0.013±.000 & \textbf{0.251±.000} & \underline{0.101±.000} \\
    & \hyperlink{cite.yoon2019time}{TimeGAN} &
    \textbf{0.093±.019} & \underline{0.038±.001} & 0.124±.001 & 0.025±.003 & 0.273±.004 & 0.126±.002 \\
    & \hyperlink{cite.desai2021timevae}{TimeVAE} &
    \textbf{0.093±.000} & 0.039±.000 & 0.126±.004 & 0.012±.002 & 0.292±.000 & 0.113±.003 \\
    & \hyperlink{cite.xu2020cot}{Cot-GAN} &
    0.100±.000 & 0.047±.001 & 0.129±.000 & 0.068±.009 & 0.259±.000 & 0.185±.003 \\
    & \hyperlink{cite.yoon2019time}{RNN-AR} &
    0.150±.022 & \underline{0.038±.001} & - & - & 0.315±.005 & - \\
    \cmidrule{2-8}          
    & Original & 0.094±.001 & 0.036±.001 & 0.121±.005 & 0.007±.001 & 0.250±.003 & 0.090±.001 \\
    \bottomrule
    \end{tabular}}%
  \label{tab1}%
  \vspace{-0.1cm}
\end{table*}%

\begin{table*}[!t]
    \centering
    \begin{minipage}[t]{0.54\textwidth}
        \centering
        \caption{Time-series modeling in larger model regime.}
        \resizebox{\textwidth}{!}{
        \begin{tabular}{c|c|c|c|c}
            \toprule
            Metric & Methods & Sines & Stocks & MuJoCo \\
            \midrule
            \multirow{4}[0]{*}{\makecell{Context-FID\\Score $\downarrow$}} &
            Ours & \textbf{0.002±.000} & \textbf{0.004±.001} & \underline{0.013±.001} \\
            & \hyperlink{cite.chen2024sdformer}{SDFormer-AR} & \underline{0.008±.001} & \underline{0.006±.001} & \textbf{0.008±.000} \\
            & \hyperlink{cite.chen2024sdformer}{SDFormer-M} & 0.010±.002 & 0.034±.008 & 0.030±.003 \\
            & \hyperlink{cite.naiman2024utilizing}{ImagenTime} & 0.009±.001 & 0.011±.002 & 0.017±.002 \\
            \midrule
            \multirow{4}[0]{*}{\makecell{Discriminative\\Score $\downarrow$}} &
            Ours & \textbf{0.007±.008} & 0.012±.013 & \textbf{0.009±.009} \\
            & \hyperlink{cite.chen2024sdformer}{SDFormer-AR} & 0.016±.010 & \textbf{0.006±.006} & \textbf{0.009±.006} \\
            & \hyperlink{cite.chen2024sdformer}{SDFormer-M} & \underline{0.008±.004} & 0.020±.011 & 0.025±.007 \\
            & \hyperlink{cite.naiman2024utilizing}{ImagenTime} & 0.016±.010 & \underline{0.010±.007} & \underline{0.011±.005} \\
            \midrule
            \multirow{4}[0]{*}{\makecell{Predictive\\Score $\downarrow$}} &
            Ours & \textbf{0.093±.000} & \textbf{0.037±.000} & \underline{0.008±.001} \\
            & \hyperlink{cite.chen2024sdformer}{SDFormer-AR} & \textbf{0.093±.000} & \textbf{0.037±.000} & \underline{0.008±.002} \\
            & \hyperlink{cite.chen2024sdformer}{SDFormer-M} & \textbf{0.093±.000} & \textbf{0.037±.000} & \textbf{0.007±.001} \\
            & \hyperlink{cite.naiman2024utilizing}{ImagenTime} & \underline{0.095±.000} & \textbf{0.037±.000} & 0.033±.002 \\
            \bottomrule
        \end{tabular}}
        \label{table:more_comparisons}
    \end{minipage}
    \hfill
    \begin{minipage}[t]{0.45\textwidth}
        \centering
        \caption{Long time-series modeling.}
        \resizebox{0.93\textwidth}{!}{
        \begin{tabular}{c|c|c|c}
            \toprule
            Metric & Methods & FRED-MD & NN5 Daily \\
            \midrule
            \multirow{4}[0]{*}{\makecell{Marginal\\Score $\downarrow$}} &
            Ours & \textbf{0.019±n.a.} & \textbf{0.006±n.a.} \\
            & \hyperlink{cite.naiman2024utilizing}{ImagenTime} & \underline{0.022±n.a.} & 0.009±n.a. \\
            & \hyperlink{cite.zhou2023deep}{LS4} & \underline{0.022±n.a.} & \underline{0.007±n.a.} \\
            & \hyperlink{cite.goel2022s}{SaShiMi-AR} & 0.048±n.a. & 0.020±n.a. \\
            \midrule
            \multirow{4}[0]{*}{\makecell{Classification\\Score $\uparrow$}} &
            Ours & \textbf{1.338±.753} & \textbf{0.950±.257} \\
            & \hyperlink{cite.naiman2024utilizing}{ImagenTime} & \underline{0.755±.343} & 0.560±.174 \\
            & \hyperlink{cite.zhou2023deep}{LS4} & 0.544±n.a. & \underline{0.636±n.a.} \\
            & \hyperlink{cite.goel2022s}{SaShiMi-AR} & 0.001±n.a. & 0.045±n.a. \\
            \midrule
            \multirow{4}[0]{*}{\makecell{Predictive\\Score $\downarrow$}} &
            Ours & \textbf{0.030±.006} & \underline{0.539±.196} \\
            & \hyperlink{cite.naiman2024utilizing}{ImagenTime} & \underline{0.034±.020} & 0.584±.188 \\
            & \hyperlink{cite.zhou2023deep}{LS4} & 0.037±n.a. & \textbf{0.241±n.a.} \\
            & \hyperlink{cite.goel2022s}{SaShiMi-AR} & 0.232±n.a. & 0.849±n.a. \\
            \bottomrule
        \end{tabular}}
        \label{table:long}
    \end{minipage}
    \vspace{-0.3cm}
\end{table*}

We demonstrate spectral mean flow on unconditional generation of time-series data.
The baselines include autoregressive models \cite{goel2022s, chen2024sdformer}, generative adversarial networks (GANs) \cite{mogren2016c, esteban2017real, Yoon2019TimeseriesGA, xu2020cot}, variational autoencoders (VAEs) \cite{desai2021timevae, zhou2023deep, naiman2024generative}, and diffusion models \cite{kong2020diffwave, coletta2023constrained, Yuan2024DiffusionTSID, naiman2024utilizing}.
\nocite{galib2024fide}

\paragraph{Regular time series}
For the first experiment, we follow \cite{Yoon2019TimeseriesGA, Yuan2024DiffusionTSID} and use four real-world datasets Stocks, ETTh, Energy, fMRI and two simulated datasets Sines, MuJoCo of length-24 time series.
We use four existing metrics to measure the quality of generated sequences.
Context-Fr\'echet distance (context-FID) score \cite{Jeha2021PSAGANPS} measures the discrepancy between the distributions of features of the real and generated data encoded by a pre-trained TSVec model~\cite{Yue2021TS2VecTU}.
Correlational score \cite{liao2023conditional} measures the absolute error between cross-correlation matrices of the real and generated data.
Discriminative score \cite{Yuan2024DiffusionTSID} trains a GRU classifier to distinguish between the real and generated data, and measures how close its accuracy is to the chance level $50\%$.
Predictive score \cite{Yuan2024DiffusionTSID} trains a GRU next-step predictor on the generated data, and measures the mean absolute error (MAE) on the real data.

The main results are in Table~\ref{table:unconditional_time_series}.
Spectral mean flow achieves the best metric in 18 out of 24 cases, showing that it is competitive with the state-of-the-art Diffusion-TS~\cite{Yuan2024DiffusionTSID}.
The gain in the quality metric is often significant, e.g., our approach improves previous best context-FID, correlational score, and discriminative score in ETTh by around 50\% on average, while predictive score does not lag much behind Diffusion-TS.
The results are notable considering that previous methods often make use of carefully designed auxiliary losses specialized for time-series, e.g., Diffusion-TS uses a Fourier domain loss to improve the handling of periodicity.
In contrast, our models only use standard flow matching loss \eqref{eq:conditional_flow_matching} for training, which illustrates the advantages of principled architecture design.
Specifically, spectral expansion induces the use of complex-valued parameters, and tensor network decomposition induces a multiplicative parameter sharing in the length direction (Section~\ref{sec:spectral_decomposition}), offering a natural inductive bias to handle periodicity without specialized loss or components.

To provide further comparisons in a larger parameter regime, we conduct an additional experiment following \cite{chen2024sdformer, naiman2024utilizing}, comparing against larger models than ones in Table~\ref{tab1}.
We use Sines, Stocks, and MuJoCo considering resource constraints.
The results are in Table~\ref{table:more_comparisons}.
Spectral mean flow achieves the best in 6 out of 9 cases, showing that it is competitive with the state-of-the-art SDFormer-AR~\cite{chen2024sdformer} and generally outperforms SDFormer-M~\cite{chen2024sdformer} and ImagenTime~\cite{naiman2024utilizing}.
Similarly as in Table~\ref{tab1}, we note that SDFormer-AR uses a sophisticated two-stage training of a vector-quantized codebook and then an autoregressive model on top of it, while our models are end-to-end trained with flow matching.

\begin{table*}
    \centering
    \begin{minipage}[t]{0.65\textwidth}
        \centering
        \caption{Irregular time-series modeling based on Stocks dataset, evaluated with discriminative score $\downarrow$.}
        \resizebox{\textwidth}{!}{
        \begin{tabular}{c|c|c|c|c|c}
            \toprule
            Task & Methods & 0\% Drop & 30\% Drop & 50\% Drop & 70\% Drop \\
            \midrule
            \multirow{7}[0]{*}{\makecell{Irregular\\$\to$ Regular}} &
            Ours & \textbf{0.009±.008} & \textbf{0.020±.011} & \textbf{0.019±.008} & \textbf{0.015±.007} \\
            & \hyperlink{cite.naiman2024generative}{Koopman VAE} & \underline{0.021±.022} & \underline{0.109±.051} & \underline{0.067±.038} & \underline{0.049±.052} \\
            & \hyperlink{cite.jeon2022gt}{GT-GAN} & 0.077±.031 & 0.251±.097 & 0.265±.073 & 0.230±.053 \\
            & \hyperlink{cite.yoon2019time}{TimeGAN} & 0.102±.021 & 0.411±.040 & 0.477±.021 & 0.485±.022 \\
            & \hyperlink{cite.esteban2017real}{RCGAN} & 0.196±.027 & 0.436±.064 & 0.478±.049 & 0.381±.086 \\
            & \hyperlink{cite.mogren2016c}{C-RNN-GAN} & 0.399±.028 & 0.500±.000 & 0.500±.000 & 0.500±.000 \\
            & \hyperlink{cite.jeon2022gt}{RNN-AR} & 0.226±.035 & 0.305±.002 & 0.308±.010 & 0.317±.019 \\
            \midrule
            \multirow{2}[0]{*}{\makecell{Irregular\\$\to$ Irregular}} &
            Ours & \textbf{0.009±.008} & \textbf{0.049±.017} & \textbf{0.044±.017} & \textbf{0.138±.137} \\
            & \hyperlink{cite.naiman2024generative}{Koopman VAE} & \underline{0.021±.022} & \underline{0.227±.096} & \underline{0.211±.078} & \underline{0.187±.075} \\
            \bottomrule
        \end{tabular}}
        \label{table:irregular}
    \end{minipage}
    \hfill
    \begin{minipage}[t]{0.34\textwidth}
        \centering
        \vspace{0.7cm}
        \caption{Physics-informed modeling of a nonlinear pendulum.}
        \resizebox{\textwidth}{!}{
        \begin{tabular}{l|c}
            \toprule
            Methods & Corr. Score $\downarrow$ \\
            \midrule
            Ours w/ stability loss & \textbf{0.0005±.0004} \\
            KoVAE w/ stability loss & \underline{0.0030±.0004} \\
            \midrule
            Ours w/o stability loss & \textbf{0.0029±.0008} \\
            KoVAE w/o stability loss & \underline{0.0040±.0005} \\
            \bottomrule
        \end{tabular}}
        \label{table:pendulum}
    \end{minipage}
\end{table*}

\paragraph{Long time series} For additional demonstrations of modeling longer time series than in Table~\ref{tab1}, we use FRED-MD and NN5 Daily from the Monash repository~\cite{godahewa2021monash}, containing time series of lengths 728 and 791, respectively.
Our models are designed as in Section~\ref{sec:method}, equipped with time-delay observables used in operator-theoretic methods for dynamical systems~\cite{kamb2020time}.
We use three metrics following \cite{naiman2024utilizing}.
Marginal score~\cite{liao2023conditional} measures the absolute error between the empirical probability densities of the real and generated data.
Classification score~\cite{zhou2023deep} trains an S4~\cite{gu2022efficiently} classifier to distinguish between the real and generated data, and measures its loss on the generated data.
Predictive score~\cite{zhou2023deep} trains an S4 10-step predictor on the generated data, and measures the prediction error on the real data.
The results are in Table~\ref{table:long}.
Spectral mean flow shows a strong result, achieving the best metric in 5 out of 6 cases and generally outperforming the previous best methods ImagenTime~\cite{naiman2024utilizing} and LS4~\cite{zhou2023deep}.

\paragraph{Irregular time series} For further demonstrations of generality, we run experiments on irregularly sampled time series.
We obtain 3 irregularly sampled datasets from Stocks by randomly dropping 30\%, 50\%, and 70\% of the observations, following \cite{jeon2022gt, naiman2024generative}.
We consider two tasks, irregular $\to$ regular: generating the full time series including the missing timesteps, and irregular $\to$ irregular: generating only the irregularly sampled time series.
The former is standard in literature~\cite{jeon2022gt, naiman2024generative}, and the latter provides a proxy task of modeling informatively sampled time series where sampling can be determined by the system state.
Baselines include Koopman VAE~\cite{naiman2024generative}, state-of-the-art operator-based method for irregular time series.
The results are in Table~\ref{table:irregular}.
Spectral mean flow achieves the best discriminative metric, supporting its generality to handle irregular time series in both settings.

\paragraph{Physics-informed modeling} Lastly, we demonstrate a benefit of being an operator method: incorporation of physics-based prior knowledge.
While interpreting eigenfunctions is not direct due to our end-to-end design, this can be done via explicit spectral regularization thanks to linearity of the tensor network.
We test this using a problem from~\cite{naiman2024generative}, where
we consider a nonlinear pendulum governed by an ODE of angular displacement $\theta$ from an equilibrium, $\ddot{\theta}+9.8\sin\theta=0,\dot{\theta}(0)=0$.
As the pendulum is stable and conservative, the physical knowledge is that eigenvalues of underlying operator have non-positive real part and some have zero real.
In Koopman VAE~\cite{naiman2024generative}, this stability constraint is built-in using a loss on matrix $A$ for hidden states ${\bf z}_{t+1}=A{\bf z}_t$, specifically $|s_1-1|^2+|s_2-1|^2$ where $(s_1,s_2)$ are the largest eigenvalues of $A$.
As our model is governed by products of matrix hidden states $O_tw({\bf x}^n),S_tw({\bf x}^n)$ \eqref{eq:sequence_inner_product}, \eqref{eq:matrix_state}, we incorporate this in an end-to-end manner via the same spectral loss on the matrix states.
The results are in Table~\ref{table:pendulum}.
With stability loss, we see a clear improvement over Koopman VAE that uses the same constraint.
This implies incorporating physical knowledge into our model is possible, leading to a better match with the true data-generating process.

\section{Conclusion}\label{sec:conclusion}
We proposed a new algorithm for sequence modeling based on operator-theoretic interpretation of a hidden Markov model.
Instead of implementing its stochastic recurrence directly, we considered embedded distributions in Hilbert spaces, which enabled us to use powerful linear-algebraic tools including spectral decomposition to derive a tensor-network based neural architecture, as well as a sampling procedure based on a time-dependent MMD gradient flow paired with flow matching.
On synthetic setups and time-series modeling datasets, we verified our theoretical claims and observed performances competitive with the state-of-the-art.

\paragraph{Acknowledgments}
This work was in part supported by the National Research Foundation of Korea (RS-2024-00351212 and RS-2024-00436165) and the Institute of Information \& Communications Technology Planning \& Evaluation (IITP) (RS-2024-00509279, RS-2022-II220926, and RS-2022-II220959) funded by the Korean government (MSIT).
Max Beier is supported by the DAAD programme Konrad Zuse Schools of Excellence in Artificial Intelligence, sponsored by the German Federal Ministry of Education.

{
\small
\bibliography{main}
}


\newpage
\appendix

The appendix is organized as follows.
In Appendix~\ref{apdx:background}, we introduce the mathematical background.
In Appendix~\ref{apdx:proofs}, we formally state and prove theoretical results in the main text.
In Appendix~\ref{apdx:experiment_details}, we provide details of experiments.
In Appendix~\ref{apdx:future_work}, we discuss limitations and future work.

\section{Mathematical Background}\label{apdx:background}
We provide an overview of the necessary mathematical background, and refer the reader to Mollenhauer~\&~Koltai~(2020)~\cite{mollenhauer2020nonparametric} and Arbel~et~al.~(2019)~\cite{arbel2019maximum} for more details.

\subsection{Problem Setup}
For a Polish space ${\cal S}$, we define ${\cal P}({\cal S})$ as the set of probability distributions on ${\cal S}$.
For any $\pi\in{\cal P}({\cal S})$, we define $L^2(\pi)$ as the set of real-valued Lebesgue square integrable functions on ${\cal S}$ with respect to~$\pi$.
Any closed subset ${\cal X} \subset \mathbb{R}^d$, and its product ${\cal X}^N$ for a finite $N\in\mathbb{N}$, are Polish spaces.

For closed ${\cal X}\subset\mathbb{R}^d$ and ${\cal Z}\subset\mathbb{R}^m$, we consider a sequence of random variables $X^{1:N}=(X^1, ..., X^N)$ on ${\cal X}$ and model it as a hidden Markov model (HMM) with hidden states $Z^{1:N}=(Z^1, ..., Z^N)$ on ${\cal Z}$.
The joint distribution of hidden states $Z^{1:N}$ is determined by a time-invariant state transition model $\mathbb{P}[Z^{n+1}|Z^n]$, and determines $X^{1:N}$ through a time-invariant local observation model $\mathbb{P}[X^n|Z^n]$.

Formally, the conditional distributions $\mathbb{P}[Z^{n+1}|Z^n]$ and $\mathbb{P}[X^n|Z^n]$ are specified by the respective Markov kernels $u$ and $o$ that define $u({\bf z}, \cdot)\in{\cal P}({\cal Z})$ and $o({\bf z}, \cdot)\in{\cal P}({\cal X})$ for every ${\bf z}\in{\cal Z}$, such that:
\begin{align}
    \mathbb{P}[Z^{n+1}\in{\cal A}\mid Z^n={\bf z}] &= \int_{\cal A}u({\bf z}, d{\bf z}') = u({\bf z},{\cal A}),\\
    \mathbb{P}[X^n\in{\cal B}\mid Z^n={\bf z}] &= \int_{\cal B}o({\bf z}, d{\bf x}) = o({\bf z},{\cal B}),\label{eq:markov_kernel_observation}
\end{align}
for all measurable sets ${\cal A}\subseteq{\cal Z}$ and ${\cal B}\subseteq{\cal X}$.

\subsection{Reproducing Kernel Hilbert Space (RKHS)}
Let $\mathbb{H}$ be an RKHS induced by a positive semi-definite kernel function $k:{\cal X}\times{\cal X}\to\mathbb{R}$.
The space $\mathbb{H}$ is a vector space consisting of functions $f:{\cal X}\to\mathbb{R}$, equipped with an inner product $\langle\cdot,\cdot\rangle_\mathbb{H}$ and a corresponding norm $\|\cdot\|_\mathbb{H}$ given by $\|f\|_\mathbb{H}^2=\langle f,f\rangle_\mathbb{H}$.
The canonical feature map $\phi:{\cal X}\to\mathbb{H}$ of the RKHS is defined by $\phi({\bf x})\coloneqq k(\cdot, {\bf x})$.
The reproducing property of $\mathbb{H}$ states that $f({\bf x}) = \langle \phi({\bf x}) , f\rangle_\mathbb{H}$ for any $f\in \mathbb{H}$.
We denote the space of Hilbert-Schmidt operators from $\mathbb{H}$ to $\mathbb{H}$ as ${\rm S}_2(\mathbb{H})$.

To handle conditional dependencies within an HMM, we also consider a second RKHS $\mathbb{G}$ induced by a kernel $l:{\cal Z}\times{\cal Z}\to\mathbb{R}$ with the corresponding canonical feature map $\varphi:{\cal Z}\to\mathbb{G}$.
We denote the space of Hilbert-Schmidt operators from $\mathbb{G}$ to $\mathbb{H}$ as ${\rm S}_2(\mathbb{G}, \mathbb{H})$.

We use some assumptions on the RKHS $\mathbb{H}$ (or $\mathbb{G}$) and the corresponding kernel $k$ (or $l$).
The first five assumptions allow universal approximation of conditional distributions via linear operators on $\mathbb{H}$ \cite[Section 4.5]{mollenhauer2020nonparametric}.
The sixth one allows decomposing an HMM embedded in $\mathbb{H}$ into a tensor network.
\begin{enumerate}[label=\textcolor{blue}{\textbf{(\Alph*)}}]
    \item \label{assumption:separability} (Separability). The RKHS $\mathbb{H}$ is separable.
    This holds if ${\cal X}$ is Polish and $k$ is continuous.
    \item \label{assumption:measurability} (Measurability). The canonical feature map $\phi:{\cal X}\to\mathbb{H}$ is measurable.
    This holds if $k({\bf x},\cdot)$ is measurable for all ${\bf x}\in{\cal X}$.
    \item \label{assumption:second_moments_existence} (Existence of second moments). It holds that $\mathbb{E}_{{\bf x}\sim\pi}[\|\phi({\bf x})\|_\mathbb{H}^2]<\infty$ for all $\pi\in{\cal P}({\cal X})$.
    This is satisfied if $\sup_{{\bf x}\in{\cal X}}k({\bf x},{\bf x})<\infty$.
    \item \label{assumption:c0_kernel} ($C_0$-kernel). $\mathbb{H}\in C_0({\cal X})$, where $C_0({\cal X})$ is the space of continuous real functions on ${\cal X}$ vanishing at infinity.
    This holds if ${\bf x}\mapsto k({\bf x},{\bf x})$ is bounded on ${\cal X}$ and $k({\bf x},\cdot)\in C_0({\cal X})$ for all ${\bf x}\in{\cal X}$.
    \item \label{assumption:l2_universal_kernel} ($L^2$-universal kernel). $\mathbb{H}$ is dense in $L^2(\pi)$ for all probability measures $\pi\in{\cal P}({\cal X})$.
    \item \label{assumption:rkhs_algebra} (Forming an algebra). $\mathbb{H}$ is closed under pointwise multiplication, $f,g\in\mathbb{H}\Longrightarrow f \cdot g\in\mathbb{H}$, with the multiplication map $(f,g)\mapsto f\cdot g$ bounded.
\end{enumerate}
An example RKHS satisfying all the assumptions is Sobolev space $H^s(\Omega\subset\mathbb{R}^d)$ of order $s>d/2$ \cite[Theorem~4.39]{adams2003sobolev}, motivating a natural connection to neural tangent kernels \cite{bietti2021deep, siegel2023optimal}.
Generally, a class of RKHSs satisfying \ref{assumption:rkhs_algebra} is known as reproducing kernel Hilbert algebras (RKHAs) \cite[Definition 2.1]{montgomery2025algebra}.
Recent results have shown that \ref{assumption:rkhs_algebra} may hold under more abstract regularity conditions \cite{das2023harmonic,das2023correction,montgomery2025algebra}.

\subsection{Mean Embeddings of (Conditional) Distributions}\label{apdx:mean_embeddings_of_distributions}
Under Assumptions \ref{assumption:separability} to \ref{assumption:second_moments_existence}, for any $X\sim \pi\in{\cal P}({\cal X})$, the following expectation yields an element $\mu_\pi$ in $\mathbb{H}$ which is called the mean embedding of $\pi$~\cite{muandet2017kernel}:
\begin{align}\label{eq:mean_embedding_apdx}
    \mu_X\coloneqq \int_{\cal X}\phi(X)\,d\pi({\bf x}) = \mathbb{E}[\phi(X)]\in\mathbb{H}.
\end{align}
We write $\mu_X=\mu_\pi$ when the distribution is clear from context.
We call the RKHS $\mathbb{H}$ characteristic if the mean embedding map $\pi\mapsto\mu_\pi$ is injective, that is, each mean embedding uniquely encodes a distribution.
The Assumptions \ref{assumption:c0_kernel} and \ref{assumption:l2_universal_kernel} imply that $\mathbb{H}$ is characteristic~\cite{carmeli2010vector, sriperumbudur2011universality}.

With a characteristic $\mathbb{H}$, we can define a distance metric between distributions in ${\cal P}({\cal X})$ called the maximum mean discrepancy (MMD), defined as follows~\cite{gretton2012a}:
\begin{align}\label{eq:maximum_mean_discrepancy_apdx}
    {\rm MMD}(\nu, \pi) \coloneqq \sup_{f\in\mathbb{H}, \|f\|_\mathbb{H}\leq 1}\left|\int f({\bf x})\,d\nu({\bf x}) - \int f({\bf x})\,d\pi({\bf x}) \right| = \|\mu_\nu-\mu_\pi\|_\mathbb{H},
\end{align}
which we can interpret as measuring differences between mean embeddings.

To handle distributions in ${\cal P}({\cal Z})$, we analogously define mean embeddings and MMD for RKHS~$\mathbb{G}$.

We can extend the concept of mean embeddings to the conditional distributions $\mathbb{P}[Z^{n+1}|Z^n]$ and $\mathbb{P}[X^n|Z^n]$ of an HMM.
This is formalized by conditional mean embeddings~\cite{muandet2017kernel, song2009hilbert, park2020measure}, defined as:
\begin{align}
    \mu_{Z^{n+1}|Z^n={\bf z}} &\coloneqq \int_{\cal Z}\varphi({\bf z}')\,u({\bf z},d{\bf z}') = \mathbb{E}[\varphi(Z^{n+1})\mid Z^n={\bf z}],\\
    \mu_{X^n|Z^n={\bf z}} &\coloneqq \int_{\cal X}\phi({\bf x})\,o({\bf z},d{\bf x}) = \mathbb{E}[\phi(X^n)\mid Z^n={\bf z}].
\end{align}
In kernel-based inference, an approximation of the above is often achieved using linear operators $U:\mathbb{G}\to\mathbb{G}$ and $O:\mathbb{G}\to\mathbb{H}$ called the conditional mean embedding (CME) operators~\cite{song2009hilbert, fukumizu2011kernelbayesrule, mollenhauer2020nonparametric}:
\begin{align}
    U\varphi({\bf z}) &\approx \mathbb{E}[\varphi(Z^{n+1})\mid Z^n={\bf z}],\label{eq:cme_approximation_hidden}\\
    O\varphi({\bf z}) &\approx \mathbb{E}[\phi(X^n)\mid Z^n={\bf z}].\label{eq:cme_approximation_observation}
\end{align}
For \eqref{eq:cme_approximation_hidden}, if the RKHS $\mathbb{H}$ satisfies Assumptions \ref{assumption:separability} to \ref{assumption:l2_universal_kernel}, it is known that the approximation in \eqref{eq:cme_approximation_hidden} can be made arbitrarily precise for a choice of the operator norm \cite[Theorem~3.3]{mollenhauer2020nonparametric}.
Furthermore, for well-specified cases where the map ${\bf z}\mapsto \mu_{Z^{n+1}|Z^n={\bf z}}$ is identified as an element of ${\rm S}_2(\mathbb{G})$, a choice of $U\in{\rm S}_2(\mathbb{G})$ exists which achieves zero error $U\varphi({\bf z}) = \mathbb{E}[\varphi(Z^{n+1})\mid Z^n={\bf z}]$ \cite[Corollary~5.5]{mollenhauer2020nonparametric}.

For \eqref{eq:cme_approximation_observation}, we need an extension of \cite[Corollary~5.5]{mollenhauer2020nonparametric} to different domains ${\cal X}$ and ${\cal Z}$~\cite{novelli2024operator}.
This is possible, since the result applies as long as there is a Markov kernel connecting the underlying measure spaces, which requires mild regularity conditions~\cite[Chapter 8, Theorem 8.5 and preliminaries]{Kallenberg_2021}.
Under these conditions, and if $\mathbb{H}$ and $\mathbb{G}$ satisfy Assumptions \ref{assumption:separability} to \ref{assumption:l2_universal_kernel}, we can consider well-specified cases where a choice of $O\in{\rm S}_2(\mathbb{G},\mathbb{H})$ exists such that $O\varphi({\bf z})=\mathbb{E}[\phi(X^n)\mid Z^n={\bf z}]$.

\begin{remark}
    Assuming well-specified CME operators is a standard choice in kernel-based statistical learning \cite{mollenhauer2020nonparametric, novelli2024operator}.
    Even if the assumption is not exactly met, close approximations can still be achieved, e.g., representation learning can yield a feature space that is well-specified for the problem at hand.
\end{remark}

\subsection{MMD Gradient Flow}
Let ${\cal P}_2({\cal X})$ be the set of distributions on ${\cal X}$ with finite second moment equipped with the 2-Wasserstein metric.
Under Assumptions \ref{assumption:separability} to \ref{assumption:l2_universal_kernel}, the MMD \eqref{eq:maximum_mean_discrepancy_apdx} provides a natural process for generating samples ${\bf x}\sim \pi\in{\cal P}_2({\cal X})$ from an embedded distribution~$\mu_\pi$ \eqref{eq:mean_embedding_apdx}.

Specifically, the gradient flow of the MMD defines a continuous probability path $(p_t)_{t\geq 0}$ which starts at any initial distribution $p_0$ and converges towards the target $p_t\to\pi$ as $t\to\infty$.
Let us fix $\pi$, and define ${\cal F}(p_t)\coloneqq\frac{1}{2}{\rm MMD}^2(p_t,\pi)$, which measures distance between $p_t$ and $\pi$.
It is known that the time-dependent vector field $(v_t)_{t\geq 0}$ corresponding to the gradient flow of ${\cal F}(p_t)$ is given by~\cite{arbel2019maximum}:
\begin{align}\label{eq:mmd_gradient_flow_apdx}
    v_t({\bf x}) = -\nabla_{\bf x}(\mu_{p_t} - \mu_\pi)({\bf x}) = -\nabla_{\bf x}\langle \phi({\bf x}), \mu_{p_t} - \mu_\pi \rangle_\mathbb{H},
\end{align}
which generates the path $(p_t)_{t\geq 0}$ via the continuity equation $\partial_t p_t + {\rm div}(p_tv_t) = 0$.
If $k$ is continuously differentiable on ${\cal X}$ with Lipschitz gradient, for any initial distribution $p_0\in{\cal P}_2({\cal X})$, there exists a unique process $(X_t)_{t\geq 0}$ from $X_0\sim p_0$ satisfying $dX_t = v_t(X_t)dt$, where the distribution $p_t$ of $X_t$ is the unique solution of the continuity equation and ${\cal F}(p_t)$ decreases in time \cite[Proposition~1,~2]{arbel2019maximum}.
With further regularity conditions, it can be shown that $p_t\to\pi$ asymptotically as $t\to\infty$ \cite[Section 3]{arbel2019maximum}.

\section{Theoretical Results}\label{apdx:proofs}
We formally state and prove all theoretical arguments in the main text.

\subsection{Tensor Product Mean Embedding (Section~\ref{sec:method})}
In Section~\ref{sec:method}, we embedded the joint distribution $X^{1:N}$ in the tensor product $\mathbb{H}^{\otimes N}=\mathbb{H}\otimes\cdots\otimes\mathbb{H}$ of a characteristic RKHS $\mathbb{H}$.
To establish that $\mathbb{H}^{\otimes N}$ is itself a characteristic RKHS, we use the following:
\begin{lemma}\label{lemma:characteristic_rkhs}
    Let Assumptions \ref{assumption:c0_kernel} and \ref{assumption:l2_universal_kernel} be satisfied for $\mathbb{H}$.
    Then $\mathbb{H}^{\otimes N}$ is characteristic.
\end{lemma}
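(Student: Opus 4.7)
The plan is to reduce characteristicness of $\mathbb{H}^{\otimes N}$ to that of $\mathbb{H}$ via induction on $N$, with a Fubini-style slicing argument across tensor factors. The preparatory step is to translate the hypothesis into a signed-measure form: by the equivalences of Sriperumbudur et al.~(2011), the assumptions \ref{assumption:c0_kernel} and \ref{assumption:l2_universal_kernel} make $k$ a $c_0$-universal kernel, which is in turn equivalent to the map $\sigma\mapsto\int k(\cdot,{\bf y})\,d\sigma({\bf y})$ being injective on the space of finite signed Borel measures on ${\cal X}$. This stronger signed-measure injectivity is what makes the induction go through, because the conditional slices appearing in the inductive step will be signed, not probability, measures.

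Next I would observe that $\mathbb{H}^{\otimes N}$ is the RKHS of the product kernel $k^{\otimes N}({\bf x}^{1:N},{\bf y}^{1:N})=\prod_{n=1}^N k({\bf x}^n,{\bf y}^n)$ on ${\cal X}^N$, so characteristicness reduces to showing that any finite signed Borel measure $\eta$ on ${\cal X}^N$ satisfying $\int\prod_n k({\bf x}^n,{\bf y}^n)\,d\eta({\bf y}^{1:N})=0$ for every ${\bf x}^{1:N}$ must vanish; applying this to $\eta=\pi-\nu$ recovers injectivity of $\pi\mapsto\mu_\pi$. In the inductive step, I would fix $({\bf x}^2,\dots,{\bf x}^N)$ and, using the pointwise boundedness of $k(\cdot,\cdot)$ implied by \ref{assumption:c0_kernel} together with the finiteness of $|\eta|$ to justify Fubini, define a finite signed measure $\sigma_{{\bf x}^{2:N}}$ on ${\cal X}$ by $\sigma_{{\bf x}^{2:N}}(A)=\int\mathbf{1}_A({\bf y}^1)\prod_{n=2}^N k({\bf x}^n,{\bf y}^n)\,d\eta({\bf y}^{1:N})$. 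The zero condition then reads $\int k({\bf x}^1,\cdot)\,d\sigma_{{\bf x}^{2:N}}=0$ for all ${\bf x}^1$, so $c_0$-universality of $\mathbb{H}$ forces $\sigma_{{\bf x}^{2:N}}\equiv 0$ for every $({\bf x}^2,\dots,{\bf x}^N)$. For arbitrary measurable $A\subset{\cal X}$, define $\eta_A$ on ${\cal X}^{N-1}$ by $\eta_A(B)=\eta(A\times B)$; the vanishing becomes $\int\prod_{n=2}^N k({\bf x}^n,{\bf y}^n)\,d\eta_A({\bf y}^{2:N})=0$ for all ${\bf x}^{2:N}$, and the inductive hypothesis yields $\eta_A\equiv 0$. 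Since rectangles generate the product $\sigma$-algebra, $\eta=0$ on ${\cal X}^N$.

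The main obstacle I expect is the very first step: the definition of "characteristic" in the paper is formulated for probability measures, but the Fubini reduction naturally produces signed slices $\sigma_{{\bf x}^{2:N}}$ that cannot be renormalized into probability measures. Bridging this gap by invoking the equivalence between $c_0$-universality and signed-measure characteristicness is what unlocks the inductive machinery; after that point the argument is a clean $\pi$-$\lambda$-style bookkeeping of how coordinate-wise vanishing forces joint vanishing.
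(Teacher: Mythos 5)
Your proposal is correct, and its first half coincides with the paper's: both arguments begin by upgrading Assumptions \ref{assumption:c0_kernel} and \ref{assumption:l2_universal_kernel} to $C_0$-universality of $\mathbb{H}$ (the paper does this via Mollenhauer--Koltai, Remark~4.7, which itself rests on the Sriperumbudur-type equivalences you invoke). Where you diverge is the tensorization step: the paper simply cites Szab\'o--Sriperumbudur (2017, Section~4) for the equivalence ``$\mathbb{H}$ is $C_0$-universal $\Longleftrightarrow$ $\mathbb{H}^{\otimes N}$ is characteristic,'' whereas you re-derive the relevant direction from scratch by induction on $N$, using the signed-measure characterization of $c_0$-universality and a Fubini slicing of a finite signed measure $\eta$ on ${\cal X}^N$ into one-dimensional slices $\sigma_{{\bf x}^{2:N}}$ and marginals $\eta_A$. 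Your induction is sound: the key insight that the inductive statement must be signed-measure injectivity (not mere characteristicness, since the slices cannot be renormalized) is exactly what makes it work, the Fubini step is justified because each $k({\bf x}^n,\cdot)\in C_0({\cal X})$ is bounded and $|\eta|$ is finite, and the final rectangle/$\pi$-$\lambda$ argument closes the loop. What each route buys: the paper's citation is shorter and records the full equivalence, while your argument is self-contained, actually yields the stronger conclusion that the embedding of finite signed measures into $\mathbb{H}^{\otimes N}$ is injective, and makes transparent why characteristicness of $\mathbb{H}$ alone would not tensorize --- which is precisely the content of the cited result you are reproving.
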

\begin{proof}
    From \cite[Remark~4.7]{mollenhauer2020nonparametric}, Assumption \ref{assumption:c0_kernel} implies: $\mathbb{H}$ is $L^2$-universal $\Longleftrightarrow$ $\mathbb{H}$ is $C_0$-universal, i.e., $\mathbb{H}$ is dense in $C_0({\cal X})$ with respect to the supremum norm.
    By Assumption \ref{assumption:l2_universal_kernel}, we have that $\mathbb{H}$ is $L^2$-universal, and hence $C_0$-universal.
    Then, from \cite[Section~4]{szabo2017characteristic}, we have that $\mathbb{H}$ is $C_0$-universal $\Longleftrightarrow$ $\mathbb{H}^{\otimes N}$ is characteristic.
    Therefore, under Assumptions \ref{assumption:c0_kernel} and \ref{assumption:l2_universal_kernel}, $\mathbb{H}^{\otimes N}$ is characteristic.
\end{proof}

\subsection{Mean Embedding Decomposition (Section~\ref{sec:spectral_decomposition})}
In Section~\ref{sec:spectral_decomposition}, we derived tensor network decompositions of the sequence mean embeddings $\mu_\rho$ and $\mu_{p_t}$ through \eqref{eq:decomposition_observation} and \eqref{eq:decomposition_hidden}.
We provide the respective proofs in Proposition~\ref{proposition:decomposition_observation} and Proposition~\ref{proposition:decomposition_hidden}.

\begin{proposition}\label{proposition:decomposition_observation}
    Let Assumptions \ref{assumption:separability} to \ref{assumption:l2_universal_kernel} be satisfied for $\mathbb{G}$ and $\mathbb{H}$.
    Assume that for $\mathbb{P}[X^n|Z^n]$ and $\mathbb{P}[X_t^n|Z^n] \forall t\geq 0$, well-specified CME operators $O:\mathbb{G}\to\mathbb{H}$ and $S_t:\mathbb{G}\to\mathbb{H}$ exist, that is:
    \begin{align}
        O\varphi({\bf z}) &= \mathbb{E}[\phi(X^n)\mid Z^n={\bf z}] \in \mathbb{H}, \label{eq:cme_operator_observation_apdx} \\
        S_t\varphi({\bf z}) &= \mathbb{E}[\phi(X_t^n)\mid Z^n={\bf z}] \in \mathbb{H},\label{eq:cme_operator_generation_apdx}
    \end{align}
    for all ${\bf z}\in {\cal Z}$.
    Then the following holds:
    \begin{align}
        \mu_{\rho} &= O^{\otimes N}\mu_{Z^{1:N}}, \label{eq:apdx_decomposition_observation}\\
        \mu_{p_t} &= S_t^{\otimes N}\mu_{Z^{1:N}}.\label{eq:apdx_decomposition_observation_t}
    \end{align}
\end{proposition}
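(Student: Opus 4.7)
The plan is to prove both identities simultaneously, since they differ only in the choice of observation operator. I would start from the definition
\begin{align*}
\mu_\rho = \mathbb{E}[\phi(X^1)\otimes\cdots\otimes\phi(X^N)],
\end{align*}
condition on the hidden trajectory $Z^{1:N}$ via the tower property of the Bochner expectation, and invoke the HMM structure: given $Z^{1:N}$, the observations $X^1,\ldots,X^N$ are conditionally independent, with each $X^n$ depending only on $Z^n$ through the Markov kernel $o$ in~\eqref{eq:markov_kernel_observation}. The conditional expectation of the tensor product then factorizes into a tensor product of the conditional expectations,
\begin{align*}
\mu_\rho = \mathbb{E}\bigl[\,\mathbb{E}[\phi(X^1)\mid Z^1]\otimes\cdots\otimes\mathbb{E}[\phi(X^N)\mid Z^N]\,\bigr].
\end{align*}

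Next, I would substitute the well-specified CME identity $\mathbb{E}[\phi(X^n)\mid Z^n] = O\varphi(Z^n)$ from~\eqref{eq:cme_operator_observation_apdx} into each factor, so that the inner tensor becomes $(O\varphi(Z^1))\otimes\cdots\otimes(O\varphi(Z^N))$. Since $O$ is Hilbert--Schmidt and hence bounded, the tensor-product operator $O^{\otimes N}$ is bounded on $\mathbb{G}^{\otimes N}$, so it commutes with the outer Bochner integral. Pulling $O^{\otimes N}$ out yields
\begin{align*}
\mu_\rho = O^{\otimes N}\,\mathbb{E}[\varphi(Z^1)\otimes\cdots\otimes\varphi(Z^N)] = O^{\otimes N}\mu_{Z^{1:N}},
\end{align*}
which is~\eqref{eq:apdx_decomposition_observation}. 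The identity~\eqref{eq:apdx_decomposition_observation_t} follows verbatim by replacing $(O,X^n)$ with $(S_t, X^n_t)$.

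The main obstacle is ensuring Bochner integrability at every step so that the tower property, the factorization across tensor factors, and the pull-out of $O^{\otimes N}$ are all rigorously justified. I would verify
\begin{align*}
\mathbb{E}\bigl[\|\phi(X^1)\otimes\cdots\otimes\phi(X^N)\|_{\mathbb{H}^{\otimes N}}\bigr] = \mathbb{E}\Bigl[\prod_{n=1}^{N}\|\phi(X^n)\|_{\mathbb{H}}\Bigr] < \infty,
\end{align*}
which follows from Assumption~\ref{assumption:second_moments_existence}: the sufficient condition $\sup_{{\bf x}} k({\bf x},{\bf x})<\infty$ gives a uniform bound on $\|\phi({\bf x})\|_{\mathbb{H}}$, so the product is uniformly bounded. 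With this, the conditional expectations exist in the Bochner sense, and the interchanges above reduce to standard consequences of dominated convergence together with the boundedness of $O^{\otimes N}$ (and of $S_t^{\otimes N}$ for the second identity).
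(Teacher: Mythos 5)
Your proposal is correct and follows essentially the same route as the paper's proof: definition, tower property, conditional independence and locality of the HMM observations, substitution of the well-specified CME identity, and pulling the bounded operator $O^{\otimes N}$ (resp.\ $S_t^{\otimes N}$) out of the Bochner expectation by linearity. Your added care about Bochner integrability (via the bounded-kernel sufficient condition behind Assumption~\ref{assumption:second_moments_existence}) is a welcome refinement of the same argument rather than a different approach.
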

\begin{proof}
    The proof for \eqref{eq:apdx_decomposition_observation} is as follows.
    We use two properties of HMMs: conditional independence of observations $X^i\ci X^j\mid Z^{1:N}$ for $i\neq j$ and locality of observation $\mathbb{P}[X^n|Z^{1:N}]=\mathbb{P}[X^n|Z^n]$.
    \begin{align}
        \mu_{\rho} &\coloneqq \mathbb{E}[\phi(X^1)\otimes\cdots\otimes\phi(X^N)] \tag{by definition}\\
        &= \mathbb{E}\left[\mathbb{E}[\phi(X^1)\otimes\cdots\otimes\phi(X^N)\mid Z^{1:N}]\right] \tag{law of total expectation}\\
        &= \mathbb{E}\left[\mathbb{E}[\phi(X^1)|Z^{1:N}]\otimes\cdots\otimes\mathbb{E}[\phi(X^N)|Z^{1:N}]\right] \tag{conditional independence}\\
        &= \mathbb{E}\left[\mathbb{E}[\phi(X^1)|Z^1]\otimes\cdots\otimes\mathbb{E}[\phi(X^N)|Z^N]\right] \tag{locality}\\
        &= \mathbb{E}\left[[O\varphi(Z^1)]\otimes\cdots\otimes[O\varphi(Z^N)]\right] \tag{CME operator \eqref{eq:cme_operator_observation_apdx}}\\
        &= \mathbb{E}\left[O^{\otimes N}[\varphi(Z^1)\otimes\cdots\otimes\varphi(Z^N)]\right] \tag{tensor product of linear operators}\\
        &= O^{\otimes N}\,\mathbb{E}[\varphi(Z^1)\otimes\cdots\otimes\varphi(Z^N)] \tag{linearity} \\
        &= O^{\otimes N} \mu_{Z^{1:N}}. \tag{by definition}
    \end{align}
    The proof for \eqref{eq:apdx_decomposition_observation_t} is identical, by substituting $O$ with $S_t$.
\end{proof}

We now prove tensor network decomposition of $\mu_{Z^{1:N}}$ \eqref{eq:decomposition_hidden}.
We show some useful lemmas:

\begin{lemma}\label{lemma:boundedExtension}
Let \(\mathbb{G}\) be an RKHS on a set \({\cal Z}\) with norm \(\|\cdot\|_{\mathbb{G}}\).
Suppose Assumption \ref{assumption:rkhs_algebra} holds, so there exists a constant \(C \ge 0\) such that
\[
\|f \cdot g\|_{\mathbb{G}} \;\le\; C \,\|f\|_{\mathbb{G}} \,\|g\|_{\mathbb{G}}
\quad\text{for all }f,g \in \mathbb{G}.
\]
Then the bilinear map \((f,g)\mapsto f\cdot g\) extends to a bounded linear operator
\[
T^*: \mathbb{G} \otimes \mathbb{G} \;\longrightarrow\; \mathbb{G},
\quad T^*(f \otimes g) = f\cdot g,
\]
with \(\|T^*\|\le C\). Consequently its adjoint
\[
T = (T^*)^* : \mathbb{G} \;\longrightarrow\; \mathbb{G} \otimes \mathbb{G}
\]
exists and satisfies \(\|T\| = \|T^*\|\le C\).
\end{lemma}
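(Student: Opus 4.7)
The plan is to use the universal property of the algebraic tensor product to lift the bilinear multiplication to a linear operator, bound this operator with respect to the Hilbert cross norm, extend by density to the full Hilbert tensor product, and finally invoke Hilbert-space adjoint theory to obtain $T$.

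Concretely, I would first apply the universal property to the bilinear map $B(f,g)=f\cdot g$ to obtain a unique linear map $T^*_0:\mathbb{G}\otimes_{\rm alg}\mathbb{G}\to\mathbb{G}$ with $T^*_0(f\otimes g)=f\cdot g$. On simple tensors the bound is immediate: since $\|f\otimes g\|_{\mathbb{G}\otimes\mathbb{G}}=\|f\|_{\mathbb{G}}\|g\|_{\mathbb{G}}$, Assumption \ref{assumption:rkhs_algebra} gives $\|T^*_0(f\otimes g)\|_{\mathbb{G}}\le C\|f\otimes g\|_{\mathbb{G}\otimes\mathbb{G}}$. I would then extend this bound to arbitrary $v\in\mathbb{G}\otimes_{\rm alg}\mathbb{G}$ via a duality argument: for any unit vector $\psi\in\mathbb{G}$, the form $(f,g)\mapsto\langle\psi,f\cdot g\rangle_{\mathbb{G}}$ is bounded by $C$, and using the identification of $\mathbb{G}\otimes\mathbb{G}$ with the RKHS on ${\cal Z}\times{\cal Z}$ generated by the product kernel $l\otimes l$, one realises $\langle\psi,T^*_0(\cdot)\rangle_{\mathbb{G}}$ as a continuous linear functional on the Hilbert tensor product whose norm is bounded by $C$ uniformly in $\psi$. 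Taking the supremum over $\psi$ gives $\|T^*_0\|\le C$. Density of the algebraic tensor product in $\mathbb{G}\otimes\mathbb{G}$ yields a unique continuous extension $T^*:\mathbb{G}\otimes\mathbb{G}\to\mathbb{G}$ with $\|T^*\|\le C$, and the Hilbert adjoint $T=(T^*)^*:\mathbb{G}\to\mathbb{G}\otimes\mathbb{G}$ then exists by the standard Riesz-based construction, with $\|T\|=\|T^*\|\le C$ following automatically.

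The hard part is the middle step. Bilinear boundedness by itself only automatically yields an extension to the projective tensor product, whose cross norm strictly dominates the Hilbert cross norm, so the naive triangle-inequality estimate $\|T^*_0(\sum_i f_i\otimes g_i)\|_{\mathbb{G}}\le C\sum_i\|f_i\|_{\mathbb{G}}\|g_i\|_{\mathbb{G}}$ is not sufficient. Closing the gap requires genuinely using the RKHS structure of $\mathbb{G}$ rather than treating it as an abstract Banach algebra: I would translate Assumption \ref{assumption:rkhs_algebra} into a kernel-level estimate that controls the diagonal restriction of elements of the product-kernel RKHS on ${\cal Z}\times{\cal Z}$, which then allows the bilinear bound to pass through the weaker Hilbertian cross norm and complete the extension argument.
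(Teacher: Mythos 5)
You have correctly reproduced the paper's overall skeleton: the paper's own proof is exactly the two-step argument of extending the bounded bilinear map $(f,g)\mapsto f\cdot g$ to an operator $T^*$ on $\mathbb{G}\otimes\mathbb{G}$ ``by the universal property of the Hilbert-space tensor product'' and then taking the Hilbert adjoint. Where you differ is that you flag, correctly, that this extension step is not automatic: bilinear boundedness only controls the projective cross norm, and on the Hilbertian tensor product a bounded bilinear form need not induce a bounded functional at all --- the inner-product form $(f,g)\mapsto\langle f,g\rangle_{\mathbb{G}}$ has norm $1$, yet the associated functional on $\mathbb{G}\otimes\mathbb{G}\simeq{\rm S}_2(\mathbb{G})$ is the trace, which is unbounded in infinite dimensions. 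So the subtlety you isolate is genuine, and it is one the paper's two-line proof does not engage with.

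The difficulty is that your proposal does not actually close this gap. The duality step --- ``for each unit $\psi$ the form $(f,g)\mapsto\langle\psi,f\cdot g\rangle_{\mathbb{G}}$ is bounded by $C$, hence $\langle\psi,T^*_0(\cdot)\rangle$ extends to a bounded functional on the Hilbert tensor product, uniformly in $\psi$'' --- is precisely the assertion that needs proof, and the trace example shows it does not follow from bilinear boundedness alone; as stated it begs the question. Your subsequent appeal to ``a kernel-level estimate controlling the diagonal restriction'' names the right object (under the identification of $\mathbb{G}\otimes\mathbb{G}$ with the RKHS of the product kernel on ${\cal Z}\times{\cal Z}$, $T^*$ must act as $F\mapsto F(\cdot,\cdot)|_{\rm diag}$, and one needs that diagonal restriction lands boundedly in $\mathbb{G}$ rather than merely in the RKHS of the squared kernel), but you never derive such an estimate from Assumption \ref{assumption:rkhs_algebra}, and it is not evident that the stated bound alone yields it, since on finite sums $\sum_i f_i g_i$ it only controls $\sum_i\|f_i\|_{\mathbb{G}}\|g_i\|_{\mathbb{G}}$. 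For the Sobolev spaces $H^s$ with $s>d/2$ that the paper uses to instantiate \ref{assumption:rkhs_algebra}, the missing bound is the classical diagonal-trace estimate for the mixed-smoothness space (the Fourier-side Cauchy--Schwarz argument that proves $H^s$ is an algebra in fact bounds the restriction of arbitrary elements of the tensor product, not only of elementary products), so your route can be completed in that concrete setting; but as written the step you yourself label the hard part is announced rather than proved, so the proposal is incomplete as a proof of the lemma in its abstract form.
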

\begin{proof}
Define \(B: \mathbb{G}\times\mathbb{G}\to\mathbb{G}\) by \(B(f,g)=f\cdot g\).
By assumption
\[
\|B(f,g)\|_{\mathbb{G}} = \|f\cdot g\|_{\mathbb{G}} \;\le\; C\,\|f\|_{\mathbb{G}}\,\|g\|_{\mathbb{G}},
\]
so \(B\) is a bounded bilinear map. By the universal property of the Hilbert-space tensor product, \(B\) extends uniquely to a bounded linear operator
\[
T^*: \mathbb{G}\otimes\mathbb{G} \;\longrightarrow\; \mathbb{G}, 
\quad T^*(f\otimes g)=f\cdot g,
\]
with \(\|T^*\|\le C\).

Since \(T^*\) is bounded between Hilbert spaces, its adjoint \(T=(T^*)^*\) exists and \(\|T\|=\|T^*\|\le C\).
\end{proof}

\begin{corollary}\label{corollary:linear_map_between_feature_maps}
    Under the setup of Lemma~\ref{lemma:boundedExtension}, we have $T\varphi({\bf z}) = \varphi({\bf z})\otimes\varphi({\bf z})$ for all ${\bf z}\in{\cal Z}$.
\end{corollary}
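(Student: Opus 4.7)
The plan is to verify the identity weakly, by testing against a dense family of elements in $\mathbb{G}\otimes\mathbb{G}$, and then invoke the fact that two elements of a Hilbert space agreeing on a dense set are equal. This is the natural route because both $T\varphi({\bf z})$ and $\varphi({\bf z})\otimes\varphi({\bf z})$ live in $\mathbb{G}\otimes\mathbb{G}$, and the defining property of $T$ is its adjoint relation to $T^*$ rather than an explicit formula.

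First I would fix an arbitrary simple tensor $f\otimes g\in\mathbb{G}\otimes\mathbb{G}$ and compute
\begin{align*}
\langle T\varphi({\bf z}),\, f\otimes g\rangle_{\mathbb{G}\otimes\mathbb{G}}
&= \langle \varphi({\bf z}),\, T^*(f\otimes g)\rangle_{\mathbb{G}} \\
&= \langle \varphi({\bf z}),\, f\cdot g\rangle_{\mathbb{G}} \\
&= (f\cdot g)({\bf z}) \;=\; f({\bf z})\,g({\bf z}),
\end{align*}
using the adjoint relation $T=(T^*)^*$, the formula for $T^*$ from Lemma~\ref{lemma:boundedExtension}, and the reproducing property of $\mathbb{G}$. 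The right-hand side $f\cdot g\in\mathbb{G}$ is well-defined by Assumption~\ref{assumption:rkhs_algebra}, so this chain is legitimate.

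Next I would compute the analogous inner product for $\varphi({\bf z})\otimes\varphi({\bf z})$ using the defining inner product on the tensor product of Hilbert spaces and the reproducing property:
\begin{equation*}
\langle \varphi({\bf z})\otimes\varphi({\bf z}),\, f\otimes g\rangle_{\mathbb{G}\otimes\mathbb{G}}
= \langle \varphi({\bf z}),\, f\rangle_{\mathbb{G}}\,\langle \varphi({\bf z}),\, g\rangle_{\mathbb{G}}
= f({\bf z})\,g({\bf z}).
\end{equation*}
The two expressions coincide for every simple tensor $f\otimes g$.

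Finally, since the finite linear span of simple tensors is dense in $\mathbb{G}\otimes\mathbb{G}$, by linearity and continuity of the inner product, $T\varphi({\bf z})$ and $\varphi({\bf z})\otimes\varphi({\bf z})$ induce the same bounded linear functional on $\mathbb{G}\otimes\mathbb{G}$, hence are equal. The only subtlety worth double-checking is boundedness of $T$ (which Lemma~\ref{lemma:boundedExtension} already gives) so that the adjoint manipulation in the first step is justified; I do not expect any real obstacle, as the argument is a direct unpacking of the definitions.
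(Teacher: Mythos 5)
Your proposal is correct and follows essentially the same route as the paper's own proof: test both elements against simple tensors $f\otimes g$ via the adjoint relation $\langle T\varphi({\bf z}), f\otimes g\rangle = \langle\varphi({\bf z}), f\cdot g\rangle = f({\bf z})g({\bf z})$ and the reproducing property, then conclude by density of the span of elementary tensors. No gaps; the boundedness of $T$ from Lemma~\ref{lemma:boundedExtension} is indeed all that is needed to justify the adjoint step.
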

\begin{proof}
    This immediately follows from the definition of $T^*$ and the reproducing property:
    \begin{align}
        \langle T\varphi({\bf z}),f \otimes g \rangle_{\mathbb{G}\otimes\mathbb{G}} &= \langle \varphi({\bf z}), T^*(f \otimes g)\rangle_{\mathbb{G}} = \langle \varphi({\bf z}), f\cdot g\rangle_{\mathbb{G}} = (f\cdot g)({\bf z}) = f({\bf z})g({\bf z}) \nonumber\\
        &= \langle \varphi({\bf z})\otimes \varphi({\bf z}), f\otimes g\rangle_{\mathbb{G}\otimes\mathbb{G}}\quad \text{for all } f,g\in\mathbb{G}\text{ and } {\bf z}\in{\cal Z}.
    \end{align}
    Since the collection of elementary tensors $\{f\otimes g:f,g\in\mathbb{G}\}$ spans the tensor product space $\mathbb{G}\otimes\mathbb{G}$, it follows that $\varphi({\bf z})\otimes\varphi({\bf z})=T\varphi({\bf z})$ for all ${\bf z}$.
\end{proof}

\begin{lemma}\label{lemma:conditional_second_moment_operator}
    Let Assumptions \ref{assumption:separability} to \ref{assumption:rkhs_algebra} be satisfied for $\mathbb{G}$.
    Assume that for $\mathbb{P}[Z^{n+1}|Z^n]$, a well-specified CME operator $U:\mathbb{G}\to\mathbb{G}$ exists, i.e., it satisfies the following for all ${\bf z}\in{\cal Z}$:
    \begin{align}
        U\varphi({\bf z}) &= \mathbb{E}[\varphi(Z^{n+1})\mid Z^n={\bf z}] \in \mathbb{G}. \label{eq:cme_operator_hidden_apdx}
    \end{align}
    Then, there exists a bounded linear operator $T:\mathbb{G}\to\mathbb{G}\otimes\mathbb{G}$ that satisfies the below for all ${\bf z}\in{\cal Z}$:
    \begin{align}\label{eq:conditional_second_moment_operator}
        TU\varphi({\bf z}) = \mathbb{E}[\varphi(Z^{n+1}) \otimes \varphi(Z^{n+1})\mid Z^n={\bf z}] \in \mathbb{G}\otimes\mathbb{G}.
    \end{align}
\end{lemma}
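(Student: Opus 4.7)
The plan is to take $T$ to be exactly the operator produced in Lemma~\ref{lemma:boundedExtension}, namely the adjoint of the multiplication map $T^*:\mathbb{G}\otimes\mathbb{G}\to\mathbb{G}$, $T^*(f\otimes g)=f\cdot g$. By Lemma~\ref{lemma:boundedExtension}, $T$ is a bounded linear operator with $\|T\|\le C$, so in particular the composition $TU:\mathbb{G}\to\mathbb{G}\otimes\mathbb{G}$ is a well-defined bounded operator.

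First I would apply the well-specification of $U$ (equation~\eqref{eq:cme_operator_hidden_apdx}) to rewrite
\begin{align*}
TU\varphi({\bf z}) \;=\; T\,\mathbb{E}[\varphi(Z^{n+1})\mid Z^n={\bf z}].
\end{align*}
The next step is to push $T$ inside the conditional expectation. Because Assumptions \ref{assumption:separability}--\ref{assumption:second_moments_existence} guarantee that $\varphi(Z^{n+1})$ is Bochner-integrable with respect to the conditional law $u({\bf z},\cdot)\in{\cal P}({\cal Z})$, and because $T$ is bounded and linear, a standard property of the (conditional) Bochner integral lets us commute $T$ with the expectation:
\begin{align*}
T\,\mathbb{E}[\varphi(Z^{n+1})\mid Z^n={\bf z}] \;=\; \mathbb{E}[T\varphi(Z^{n+1})\mid Z^n={\bf z}].
\end{align*}
Finally I would invoke Corollary~\ref{corollary:linear_map_between_feature_maps}, which gives the pointwise identity $T\varphi({\bf z}')=\varphi({\bf z}')\otimes\varphi({\bf z}')$ for every ${\bf z}'\in{\cal Z}$, and substitute inside the conditional expectation to conclude
\begin{align*}
TU\varphi({\bf z}) \;=\; \mathbb{E}[\varphi(Z^{n+1})\otimes\varphi(Z^{n+1})\mid Z^n={\bf z}],
\end{align*}
as required.

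The only genuinely non-trivial step is the commutation of $T$ with the conditional expectation. This rests on two things: (i) integrability of $\varphi(Z^{n+1})$ in the Bochner sense, which is secured by Assumption~\ref{assumption:second_moments_existence} (finite second moments imply finite first moments), and (ii) the fact that bounded linear operators intertwine with Bochner integrals. Both are standard, but they are the hinge of the argument, since without them the pointwise identity $T\varphi({\bf z}')=\varphi({\bf z}')\otimes\varphi({\bf z}')$ would only give the claim for Dirac laws rather than for the actual conditional distribution $u({\bf z},\cdot)$.
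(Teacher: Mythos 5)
Your proposal is correct and follows essentially the same route as the paper's proof: choose $T$ from Lemma~\ref{lemma:boundedExtension}, use Corollary~\ref{corollary:linear_map_between_feature_maps} to identify $T\varphi(\cdot)$ with $\varphi(\cdot)\otimes\varphi(\cdot)$, and commute the bounded operator $T$ with the conditional expectation (the paper's "linearity" step, which you justify a bit more carefully via Bochner integrability). The only difference is cosmetic — you expand $TU\varphi({\bf z})$ while the paper collapses the second-moment expectation — so the argument matches.
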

\begin{proof}
    Let $T$ be the bounded linear operator from Lemma~\ref{lemma:boundedExtension}.
    Then:
    \begin{align}
        \mathbb{E}[\varphi(Z^{n+1}) \otimes \varphi(Z^{n+1})\mid Z^n=\cdot] &= \mathbb{E}[T\varphi(Z^{n+1})\mid Z^n=\cdot] \tag{Corollary~\ref{corollary:linear_map_between_feature_maps}}\\
        &= T\mathbb{E}[\varphi(Z^{n+1})\mid Z^n=\cdot] \tag{linearity} \\
        &= T U\varphi. \tag{CME operator \eqref{eq:cme_operator_hidden_apdx}}
    \end{align}
    This completes the proof.
\end{proof}

We are now ready to prove the main result \eqref{eq:decomposition_hidden}.
\begin{proposition}\label{proposition:decomposition_hidden}
    Under the setup of Lemma~\ref{lemma:conditional_second_moment_operator} and a choice of decomposition of the CME operator $U=\sum_i \lambda_i h_i\otimes g_i$ with $h_i,g_i\in\mathbb{G}$, the following holds:
    \begin{align}
        \mu_{Z^{1:N}} = \sum_{i_2, ..., i_N} b_{i_2} \otimes \lambda_{i_2} w_{i_2,i_3} \otimes\cdots\otimes \lambda_{i_{N-1}} w_{i_{N-1}, i_N} \otimes \lambda_{i_N} h_{i_N},
    \end{align}
    where $b_i = [T\mu_{Z^1}]g_i\in \mathbb{G}$ and $w_{i,j}= [Th_i]g_j\in \mathbb{G}$.
\end{proposition}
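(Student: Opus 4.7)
I will unroll $\mu_{Z^{1:N}}$ from the last position back to the first, using the Markov property together with the spectral decomposition $U=\sum_i\lambda_i h_i\otimes g_i$ at each step. To begin, apply the tower property and Markovianity:
\begin{align*}
\mu_{Z^{1:N}} &= \mathbb{E}\bigl[\varphi(Z^1)\otimes\cdots\otimes\varphi(Z^{N-1})\otimes \mathbb{E}[\varphi(Z^N)\mid Z^{N-1}]\bigr] \\
&= \sum_{i_N}\lambda_{i_N}\,\mathbb{E}\bigl[\varphi(Z^1)\otimes\cdots\otimes\varphi(Z^{N-2})\otimes g_{i_N}(Z^{N-1})\,\varphi(Z^{N-1})\bigr]\otimes h_{i_N}.
\end{align*}
The inner expectation now carries a \emph{scalar weight} $g_{i_N}(Z^{N-1})$ multiplied into the $(N-1)$-th feature. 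The key observation is that this scalar-times-feature product is representable through $T$: by the reproducing property together with Corollary~\ref{corollary:linear_map_between_feature_maps},
\[
g_{i_N}(Z^{N-1})\,\varphi(Z^{N-1}) \;=\; \bigl[\varphi(Z^{N-1})\otimes\varphi(Z^{N-1})\bigr]\,g_{i_N} \;=\; [T\varphi(Z^{N-1})]\,g_{i_N},
\]
where the bracket denotes contraction of one slot with $g_{i_N}\in\mathbb{G}$.

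\textbf{Iterative step.} Conditioning on $Z^{N-2}$ and invoking Lemma~\ref{lemma:conditional_second_moment_operator},
\[
\mathbb{E}\bigl[T\varphi(Z^{N-1})\mid Z^{N-2}\bigr] \;=\; TU\varphi(Z^{N-2}) \;=\; \sum_{i_{N-1}}\lambda_{i_{N-1}}\,g_{i_{N-1}}(Z^{N-2})\,Th_{i_{N-1}}.
\]
Contracting with $g_{i_N}$ on the free slot produces $\sum_{i_{N-1}}\lambda_{i_{N-1}}g_{i_{N-1}}(Z^{N-2})\,w_{i_{N-1},i_N}$, which installs the factor $\lambda_{i_{N-1}}w_{i_{N-1},i_N}$ at position $N-1$ and regenerates a scalar weight $g_{i_{N-1}}(Z^{N-2})$ at position $N-2$. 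Since this is structurally identical to the previous step, one can iterate the transformation from position $N-1$ down to position $2$, populating each intermediate slot $n\in\{2,\dots,N-1\}$ with a factor $\lambda_{i_n}w_{i_n,i_{n+1}}$. At position $1$, the surviving expectation becomes $\mathbb{E}[g_{i_2}(Z^1)\,\varphi(Z^1)]$, which by the same $T$-identity and linearity equals $[T\mu_{Z^1}]\,g_{i_2}=b_{i_2}$. Collecting all factors yields the claimed decomposition. The cleanest formal presentation is induction on $N$, with the base case $N=2$ following from a single application of $U$ inside the expectation.

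\textbf{Main obstacles.} The principal technical hurdle is justifying the operator manipulations: exchanging $T$ with conditional expectations relies on boundedness of $T$ from Lemma~\ref{lemma:boundedExtension} together with standard Bochner-integral arguments, and pulling $T$ through the (possibly infinite) sum $U=\sum_i\lambda_i h_i\otimes g_i$ requires convergence in an appropriate operator norm, which is ensured because $U\in {\rm S}_2(\mathbb{G})$ and $T$ is bounded. A secondary but easy-to-mishandle point is the index bookkeeping: the free slot of $Th_{i_n}$ pairs with the $g_{i_{n+1}}$ coming from the next iteration, so slot $n$ carries index $i_n$ (shared with slot $n+1$), only slots $2,\dots,N$ carry a $\lambda$ factor, and slot $1$ carries $b_{i_2}$ without any $\lambda$. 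Framing the argument as an induction whose hypothesis retains a residual scalar weight $g_{i_{n+1}}(Z^n)$ on the rightmost remaining feature keeps this shift transparent and prevents double-counting of $\lambda$-factors.
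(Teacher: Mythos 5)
Your proposal is correct and follows essentially the same route as the paper's proof: unrolling $\mu_{Z^{1:N}}$ backwards via the Markov property, converting the scalar weight $g_{i_{n+1}}(Z^n)\varphi(Z^n)$ into $[T\varphi(Z^n)]g_{i_{n+1}}$ by the reproducing property and Corollary~\ref{corollary:linear_map_between_feature_maps}, and applying Lemma~\ref{lemma:conditional_second_moment_operator} with the decomposition of $U$ at each step, terminating in $\mathbb{E}[\varphi(Z^1)g_{i_2}(Z^1)]=b_{i_2}$. The paper packages the same recursion as an explicit reduction rule rather than an induction, but the argument is identical in substance.
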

\begin{proof}
    We start with the fact that a singular value decomposition (SVD) of $U$ \eqref{eq:cme_operator_hidden_apdx} always exists and is written as $U = \sum_{i\in\mathbb{N}}\lambda_i h_i\otimes g_i$ \cite[Section 4.1]{mollenhauer2020nonparametric} where $\lambda_i\in\mathbb{R}$ are singular values and $h_i,g_i\in\mathbb{G}$ are left and right singular functions, respectively.
    We proceed with SVD, which is sufficient for the proof, and discuss eigenvalue decomposition (EVD) at the end of this section.

    Define $b_i \coloneqq [T\mu_{Z^1}]g_i$ and $w_{i,j} \coloneqq [Th_i]g_j$, which are elements of $\mathbb{G}$ from the standard identification of tensors as Hilbert-Schmidt operators $\mathbb{G}\otimes\mathbb{G}\simeq {\rm S}_2(\mathbb{G})$~\cite[Section 4.1]{mollenhauer2020nonparametric}.
    The following holds:
    \begin{align}
        \mathbb{E}[\varphi(Z^N)\mid Z^{N-1}] &= \sum_{i_N}\lambda_{i_N} g_{i_N}(Z^{N-1})\, h_{i_N}, \label{eq:svd_1} \\
        \mathbb{E}[\varphi(Z^n)g_{i_{n+1}}(Z^n)\mid Z^{n-1}] &= \sum_{i_n}\lambda_{i_n} g_{i_n}(Z^{n-1})\, w_{i_n,i_{n+1}}, \label{eq:svd_2} \\
        \mathbb{E}[\varphi(Z^1)g_{i_2}(Z^1)] &= b_{i_2}. \label{eq:svd_3}
    \end{align}
    To avoid clutter, we only write out the derivation of \eqref{eq:svd_2}:
    \begin{align}
        \mathbb{E}[\varphi(Z^n)g_{i_{n+1}}(Z^n)\mid Z^{n-1}] &= \mathbb{E}[(\varphi(Z^n)\otimes\varphi(Z^n)) g_{i_{n+1}} \mid Z^{n-1}] \tag{reproducing property} \\
        &= \mathbb{E}[\varphi(Z^n)\otimes\varphi(Z^n)\mid Z^{n-1}]g_{i_{n+1}} \tag{linearity} \\
        &= [TU\varphi(Z^{n-1})]g_{i_{n+1}} \tag{Lemma~\ref{lemma:conditional_second_moment_operator}} \\
        &= [T\sum_{i_n}\lambda_{i_n} g_{i_n}(Z^{n-1}) h_{i_n}]g_{i_{n+1}} \tag{SVD} \\
        &= \sum_{i_n}\lambda_{i_n} g_{i_n}(Z^{n-1}) [T h_{i_n}]g_{i_{n+1}}. \tag{linearity}
    \end{align}

    We now construct a recurrent decomposition of $\mu_{Z^{1:N}}$ using the Markov property of $Z^{1:N}$.

    The following reduction rule is useful.
    Let $K_n\coloneqq \mathbb{E}[\varphi(Z^n)g_{i_{n+1}}(Z^n)\mid Z^{n-1}]$ for $n\geq 2$, then:
    \begin{align}
        \mathbb{E}[\varphi(Z^{n-1})\otimes K_n \mid Z^{n-2}] &= \mathbb{E}[\varphi(Z^{n-1})\otimes \sum_{i_n}\lambda_{i_n} g_{i_n}(Z^{n-1})\, w_{i_n,i_{n+1}} \mid Z^{n-2}] \tag{by \eqref{eq:svd_2}} \\
        &= \sum_{i_n} \mathbb{E}[\varphi(Z^{n-1})g_{i_n}(Z^{n-1})\mid Z^{n-2}]\otimes \lambda_{i_n} w_{i_n,i_{n+1}} \tag{linearity} \\
        &= \sum_{i_n} K_{n-1} \otimes \lambda_{i_n} w_{i_n,i_{n+1}}. \tag{by definition}
    \end{align}
    The full decomposition is then given as follows:
    \begin{align}
        \mu_{Z^{1:N}} &= \mathbb{E}[\varphi(Z^1)\otimes\cdots\otimes\varphi(Z^N)] \tag{by definition} \\
        &= \mathbb{E}[\varphi(Z^1)\otimes\mathbb{E}[\varphi(Z^2)\otimes\mathbb{E}[\cdots
        \mathbb{E}[\varphi(Z^{N-1})\otimes \underbrace{\mathbb{E}[\varphi(Z^N)|Z^{N-1}]}_{\eqref{eq:svd_1}} |Z^{N-2}]
        \cdots|Z^2] | Z^1]] \tag{Markov property} \\
        &= \mathbb{E}[\varphi(Z^1)\otimes\cdots
        \mathbb{E}[\varphi(Z^{N-1})\otimes \sum_{i_N}\lambda_{i_N}g_{i_N}(Z^{N-1})h_{i_N} |Z^{N-2}]
        \cdots] \nonumber \\
        &= \mathbb{E}[\varphi(Z^1)\otimes\cdots
        \sum_{i_N}\underbrace{\mathbb{E}[\varphi(Z^{N-1})g_{i_N}(Z^{N-1})|Z^{N-2}]}_{K_{N-1}}\otimes \lambda_{i_N}h_{i_N}
        \cdots] \tag{linearity} \\
        &= \mathbb{E}[\varphi(Z^1)\otimes\cdots
        \mathbb{E}[\varphi(Z^{N-2})\otimes
        \sum_{i_N}K_{N-1}\otimes \lambda_{i_N}h_{i_N}
        |Z^{N-3}]
        \cdots] \nonumber \\
        &= \mathbb{E}[\varphi(Z^1)\otimes\cdots
        \sum_{i_N}\underbrace{\mathbb{E}[\varphi(Z^{N-2})\otimes K_{N-1} |Z^{N-3}]}_{\textnormal{reduction rule}}\otimes \lambda_{i_N}h_{i_N}
        \cdots] \tag{linearity} \\
        &= \mathbb{E}[\varphi(Z^1)\otimes\cdots
        \sum_{i_{N-1},i_N}K_{N-2}\otimes\lambda_{i_{N-1}} w_{i_{N-1},i_N} \otimes \lambda_{i_N}h_{i_N}
        \cdots] \nonumber\\
        &= \mathbb{E}[\varphi(Z^1)\otimes\cdots
        \sum_{i_{N-1},i_N}\underbrace{\mathbb{E}[\varphi(Z^{N-3})\otimes
        K_{N-2}
        |Z^{N-4}]}_{\textnormal{reduction rule}}\otimes\lambda_{i_{N-1}}w_{i_{N-1},i_N}\otimes \lambda_{i_N}h_{i_N}
        \cdots] \tag{linearity}\\
        &= \cdots = \sum_{i_3,...,i_N} \mathbb{E}[\varphi(Z^1)\otimes \underbrace{K_2}_{\eqref{eq:svd_2}}] \otimes \lambda_{i_3}w_{i_3,i_4}\otimes\cdots\otimes \lambda_{i_N}h_{i_N} \tag{recurrent reduction} \\
        &= \sum_{i_2,...,i_N} \underbrace{\mathbb{E}[\varphi(Z^1)g_{i_2}(Z^1)]}_{\eqref{eq:svd_3}}\otimes \lambda_{i_2}w_{i_2,i_3} \otimes \lambda_{i_3}w_{i_3,i_4}\otimes\cdots\otimes \lambda_{i_N}h_{i_N} \nonumber \\
        &= \sum_{i_2,...,i_N} b_{i_2}\otimes \lambda_{i_2}w_{i_2,i_3} \otimes \lambda_{i_3}w_{i_3,i_4}\otimes\cdots\otimes \lambda_{i_N}h_{i_N}.\nonumber
    \end{align}
    This completes the proof.
\end{proof}

With Propositions \ref{proposition:decomposition_observation} and \ref{proposition:decomposition_hidden}, the following is obtained from linearity and reproducing property:
\begin{corollary}\label{corollary:inner_product}
    Let Assumptions \ref{assumption:separability} to \ref{assumption:l2_universal_kernel} be satisfied for $\mathbb{G}$ and $\mathbb{H}$, and let Assumption \ref{assumption:rkhs_algebra} be satisfied for $\mathbb{G}$.
    With well-specified CME operators in Propositions \ref{proposition:decomposition_observation} and \ref{proposition:decomposition_hidden},
    the inner product between the feature of a data ${\bf x}^{1:N}\in{\cal X}^N$ and the mean embedding $\mu_\rho$ is given as follows:
    \begin{align}
        \langle \phi({\bf x}^1)\otimes\cdots\otimes\phi({\bf x}^N), \mu_\rho\rangle_{\mathbb{H}^{\otimes N}} &= \langle \phi({\bf x}^1)\otimes\cdots\otimes\phi({\bf x}^N), O^{\otimes N} \mu_{Z^{1:N}}\rangle_{\mathbb{H}^{\otimes N}} \nonumber\\
        &= \sum_{i_2,...,i_N} Ob_{i_2}({\bf x}^1)\cdot \lambda_{i_2}\cdot Ow_{i_2,i_3}({\bf x}^2)\cdots \lambda_{i_N}\cdot Ou_{i_N}({\bf x}^N).
    \end{align}
    For $\mu_{p_t}$, one only needs to replace $O$ by $S_t$.
\end{corollary}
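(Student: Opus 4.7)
The plan is to chain together the two preceding propositions and then unwind the inner product on $\mathbb{H}^{\otimes N}$ via bilinearity and the reproducing property. Concretely, I would first invoke Proposition~\ref{proposition:decomposition_observation} to rewrite $\mu_\rho = O^{\otimes N}\mu_{Z^{1:N}}$, and then substitute the tensor network expansion of $\mu_{Z^{1:N}}$ from Proposition~\ref{proposition:decomposition_hidden}. Since $O^{\otimes N}:\mathbb{G}^{\otimes N}\to\mathbb{H}^{\otimes N}$ acts factor-wise on elementary tensors, it distributes over each summand, giving
\begin{align*}
\mu_\rho = \sum_{i_2,\ldots,i_N} (Ob_{i_2})\otimes \lambda_{i_2}(Ow_{i_2,i_3})\otimes\cdots\otimes\lambda_{i_N}(Oh_{i_N}),
\end{align*}
where the scalars $\lambda_{i_n}$ can be pulled out of any single factor by linearity of $O$ and of the tensor product.

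Next, I would use the standard fact that the inner product on the tensor product Hilbert space $\mathbb{H}^{\otimes N}$ factorizes on elementary tensors: $\langle a_1\otimes\cdots\otimes a_N,\, c_1\otimes\cdots\otimes c_N\rangle_{\mathbb{H}^{\otimes N}} = \prod_{n=1}^N \langle a_n, c_n\rangle_{\mathbb{H}}$. Combining this with the reproducing property $\langle \phi({\bf x}^n), f\rangle_{\mathbb{H}} = f({\bf x}^n)$ for each $f\in\mathbb{H}$ (applied to $f = Ob_{i_2}, Ow_{i_n,i_{n+1}}, Oh_{i_N}$, all of which lie in $\mathbb{H}$ since $O:\mathbb{G}\to\mathbb{H}$), each factor in the product collapses to a pointwise evaluation. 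Pulling the (countable) sum over $i_2,\ldots,i_N$ outside the continuous inner product via dominated convergence (or by restricting to the finite rank-$r$ truncation used in practice) yields the claimed expression.

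The substitution for $\mu_{p_t}$ is then immediate: repeating the above with $S_t$ in place of $O$ everywhere gives the analogous formula, since Proposition~\ref{proposition:decomposition_observation} already provides $\mu_{p_t}=S_t^{\otimes N}\mu_{Z^{1:N}}$, and the decomposition of $\mu_{Z^{1:N}}$ is independent of the observation operator.

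I do not expect any serious obstacle here, as the argument is essentially a bookkeeping exercise combining two already-proved decompositions with the universal property of tensor product inner products. The only subtlety to keep track of is the interchange of the sum with the inner product when the rank of $U$ is infinite; this is handled by the continuity of $\langle \phi({\bf x}^1)\otimes\cdots\otimes\phi({\bf x}^N),\cdot\rangle_{\mathbb{H}^{\otimes N}}$ together with convergence of the series for $\mu_{Z^{1:N}}$ in $\mathbb{G}^{\otimes N}$ (which holds under the Hilbert-Schmidt assumption on $U$ that underlies Proposition~\ref{proposition:decomposition_hidden}), and is vacuous in the finite-rank case that is actually implemented.
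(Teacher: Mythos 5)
Your proposal is correct and follows essentially the same route as the paper, which derives the corollary from Propositions~\ref{proposition:decomposition_observation} and \ref{proposition:decomposition_hidden} by linearity of $O^{\otimes N}$, the factorization of the inner product on elementary tensors, and the reproducing property; your added care about interchanging the infinite sum with the inner product (via boundedness/continuity, or the finite-rank truncation) is a reasonable supplement to the paper's terse justification.
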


We conclude the section with a discussion on eigenvalue decomposition (EVD), which we adopt in practice.
Assuming the CME operator $U$ \eqref{eq:cme_operator_hidden_apdx} is a normal operator with discrete spectrum, we can invoke spectral theorem to obtain an eigendecomposition $U = \sum_i\lambda_i h_i\otimes g_i$ with eigenvalues $\lambda_i\in\mathbb{C}$ and eigenfunctions $g_i,h_i:{\cal X}\to\mathbb{C}$.
While SVD already suffices for Proposition \ref{proposition:decomposition_hidden}, EVD provides a practical benefit as it efficiently captures long-range dependencies via oscillations~\cite{orvieto2023resurrecting, bevanda2023koopman, brunton2021modern}.

\begin{remark}
    While $U$ operates on real functions, for EVD we include complex-valued eigenfunctions to ensure sufficient expressiveness.
    This is standard in the analysis of dynamical systems, as operators on a Hilbert space (e.g., Sobolev spaces $H^s(\Omega\subset\mathbb{R}^n)$) with real spectra are self-adjoint and thus can only capture time-reversal invariant dynamics.
    Normal or bounded operators require complex spectra and eigenfunctions to construct spectral expansions, \cite[Definition 12.17 and following]{rudin1991functional}.
\end{remark}

\begin{remark}
    As the spectral expansion is only used to parameterize the operators, the results in Mollenhauer~\&~Koltai~(2020)~\cite{mollenhauer2020nonparametric} and Arbel~et~al.~(2019)~\cite{arbel2019maximum} underlying our work are only affected in the sense that, as an additional assumption, a spectral expansion has to exist.
    A complete extension of the results therein to complex-valued RKHS is out of the scope of this paper.
\end{remark}

\subsection{MMD Flows with Time-dependent RKHS (Section~\ref{sec:time_dependent_mmd_flows})}\label{apdx:time_dependent_mmd_flow}
In Section \ref{sec:time_dependent_mmd_flows}, we introduced an extension of MMD gradient flows using time-dependent RKHS.
The derivation is a direct application of existing work on gradient flows of time-dependent functionals~\cite{ferreira2018gradient}.

We use the fact that, for a fixed $\pi$ in ${\cal P}_2(\mathbb{R}^d)$, the time-dependent functional ${\cal F}_t(\nu)\coloneqq \frac{1}{2}{\rm MMD}_t^2(\nu,\pi)$ in a time interval $t\in[0,1]$ admits the following free-energy expression~\cite{arbel2019maximum}:
\begin{align}
    {\cal F}_t(\nu) = \int V_t({\bf x})\,d\nu({\bf x}) + \frac{1}{2}\int W_t({\bf x},{\bf y})\,d\nu({\bf x})\,d\nu({\bf y}) + C_t,
\end{align}
where $V_t, W_t, C_t$ are time-dependent confinement potential, interaction potential, and constant:
\begin{align}\label{eq:free_energy_terms}
    V_t({\bf x}) = - \int k_t({\bf x},{\bf x}')\,d\pi({\bf x}'),\quad W_t({\bf x},{\bf x}')=k_t({\bf x},{\bf x}'), \quad C_t=\frac{1}{2}\int k_t({\bf x},{\bf x}')\,d\pi({\bf x})\,d\pi({\bf x}').
\end{align}
Let us assume regularity conditions for $V_t,W_t$ as in \cite[Theorem 6.4, Theorem 6.6 and Remark 6.7]{ferreira2018gradient}.
Then, it follows from the results that there is a continuous path $(p_t)_{t\in[0,1]}$ in ${\cal P}_2(\mathbb{R}^d)$ from a given $p_0$ associated to a time-dependent vector field $(v_t)_{t\in[0,1]}$ via the continuity equation $\partial_t p_t+{\rm div}(v_t p_t)=0$.
The vector field is obtained as $v_t({\bf x}) = -\nabla_{\bf x}(V_t + W_t\star p_t)({\bf x})$ \cite[Theorem 6.4 and Remark 6.7]{ferreira2018gradient} where $\star$ is classical convolution, which yields $v_t({\bf x}) = -\nabla(\mu_{p_t,t}-\mu_{\pi,t})({\bf x})$ as in \cite[Section 2.1]{arbel2019maximum}.
This provides a minimal derivation of the flow which is sufficient in our problem context, and we leave in-depth investigation of the regularity conditions as future work.

\subsection{Gradient Fields in Flow Matching (Section \ref{sec:neural})}\label{apdx:flow_matching}
In Section \ref{sec:neural}, we used vector fields developed in flow matching \cite{lipman2023flow} to define our training objectives, under the premise that they are gradient fields.
We provide formal proofs for all vector fields in \cite{lipman2023flow}.

Consider converting an initial distribution $q_0$ to a target distribution $\pi$, by following a continuous path $(q_t)_{t\in[0, 1]}$ in ${\cal P}_2({\cal X})$ with boundary condition $q_1=\pi$.
Such a path can be identified by a time-dependent vector field $(u_t)_{t\in[0, 1]}$ which satisfies the continuity equation $\partial_tu_t + {\rm div}(q_tu_t) = 0$~\cite{villani2008optimal}.

In flow matching, the probability path $q_t$ and vector field $u_t$ are defined as marginalization of the conditional counterparts $q_t(\cdot|{\bf x}_1)$ and $u_t(\cdot|{\bf x}_1)$ \cite[Eq.~(6),~(8)]{lipman2023flow}, under the assumption that $q_t({\bf x})>0$:
\begin{align}
    q_t({\bf x}) &=\int q_t({\bf x}|{\bf x}_1)\pi({\bf x}_1)\,d{\bf x}_1,\label{eq:marginal_probability_path}\\
    u_t({\bf x}) &= \int u_t({\bf x}|{\bf x}_1)\frac {q_t({\bf x}|{\bf x}_1)\pi({\bf x}_1)} {q_t({\bf x})}\,d{\bf x}_1.\label{eq:marginal_vector_field}
\end{align}
Known choices include optimal transport (OT), variance-exploding (VE), and variance-preserving (VP) conditional vector fields \cite[Section 4; Examples 1 and 2]{lipman2023flow}.
We show that their marginal vector fields are gradient fields.
To ensure the existence of all integrals, we assume that $\pi({\bf x})$ is decreasing to zero at a sufficient speed as $\|{\bf x}\|\to\infty$ and $v_t$ is bounded.
From now on, we denote $\nabla_{\bf x}$ by $\nabla$.

We first show a useful lemma:
\begin{lemma}\label{lemma:flow_utilities}
    If $q_t$ is a marginal probability path, $q_t(\cdot|{\bf x}_1)$ is the corresponding conditional probability path, and $\pi({\bf x}_1)$ is the distribution of ${\bf x}_1$, then the following holds:
    \begin{align}
        1 &= \int \frac {q_t({\bf x}|{\bf x}_1)\pi({\bf x}_1)} {q_t({\bf x})}\,d{\bf x}_1.\label{eq:integration_to_one} \\
        \nabla \log q_t({\bf x}) &= \int \nabla \log q_t({\bf x}|{\bf x}_1) \frac {q_t({\bf x}|{\bf x}_1)\pi({\bf x}_1)} {q_t({\bf x})}\,d{\bf x}_1, \label{eq:score_function}
    \end{align}
\end{lemma}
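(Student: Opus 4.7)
The plan is to verify both identities directly from the definition of the marginal path in \eqref{eq:marginal_probability_path} together with the log-derivative identity $\nabla \log f = \nabla f / f$.

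For the first identity, I would simply divide both sides of \eqref{eq:marginal_probability_path} by $q_t({\bf x})$, which the hypothesis $q_t({\bf x}) > 0$ made just above the lemma permits. This gives
\begin{align}
    1 = \frac{1}{q_t({\bf x})}\int q_t({\bf x}|{\bf x}_1)\pi({\bf x}_1)\,d{\bf x}_1 = \int \frac{q_t({\bf x}|{\bf x}_1)\pi({\bf x}_1)}{q_t({\bf x})}\,d{\bf x}_1,
\end{align}
which is essentially a one-line unpacking.

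For the second identity, the key trick is to cancel the conditional density against the weighting factor. Writing $\nabla \log q_t({\bf x}|{\bf x}_1) = \nabla q_t({\bf x}|{\bf x}_1)/q_t({\bf x}|{\bf x}_1)$ and substituting into the right-hand side yields
\begin{align}
    \int \nabla\log q_t({\bf x}|{\bf x}_1)\,\frac{q_t({\bf x}|{\bf x}_1)\pi({\bf x}_1)}{q_t({\bf x})}\,d{\bf x}_1 = \frac{1}{q_t({\bf x})}\int \nabla q_t({\bf x}|{\bf x}_1)\,\pi({\bf x}_1)\,d{\bf x}_1.
\end{align}
I would then swap the gradient with the integral in ${\bf x}_1$, apply \eqref{eq:marginal_probability_path}, and recognize $\nabla q_t({\bf x})/q_t({\bf x}) = \nabla\log q_t({\bf x})$.

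The only non-trivial step is the interchange of $\nabla_{\bf x}$ and the integral over ${\bf x}_1$. I would justify it via the Leibniz rule for differentiation under the integral sign, invoking the standing regularity assumed just before the lemma (sufficient decay of $\pi({\bf x}_1)$ and boundedness/smoothness of $q_t({\bf x}|{\bf x}_1)$) so that a dominating integrable majorant for $\|\nabla q_t({\bf x}|{\bf x}_1)\pi({\bf x}_1)\|$ exists locally in ${\bf x}$. This is the main (and essentially only) technical obstacle; everything else is algebraic manipulation.
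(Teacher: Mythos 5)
Your proposal is correct and follows essentially the same route as the paper: the first identity is the definition of the marginal path divided by $q_t({\bf x})>0$, and the second combines $\nabla q_t({\bf x}|{\bf x}_1)=\nabla\log q_t({\bf x}|{\bf x}_1)\,q_t({\bf x}|{\bf x}_1)$ with differentiation under the integral (the paper uses the $x$-independence of $\pi({\bf x}_1)$ and the standing decay/boundedness assumptions for exactly this interchange). Your explicit mention of the Leibniz-rule justification is a fine, slightly more careful phrasing of the same step.
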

\begin{proof}
    \eqref{eq:integration_to_one} directly follows from \eqref{eq:marginal_probability_path}.

    For \eqref{eq:score_function}, by noting that $\pi({\bf x}_1)$ is independent of ${\bf x}$ in \eqref{eq:marginal_probability_path}, we have:
    \begin{align}
        \nabla q_t({\bf x}) &= \int \nabla q_t({\bf x}|{\bf x}_1)\pi({\bf x}_1) \,d{\bf x}_1.
    \end{align}
    Then, from $\nabla q_t({\bf x}|{\bf x}_1) = \nabla \log q_t({\bf x}|{\bf x}_1) \cdot q_t({\bf x}|{\bf x}_1)$, we have:
    \begin{align}
        \nabla \log q_t({\bf x}) &= \frac {\nabla q_t({\bf x})} {q_t({\bf x})} = \int \nabla \log q_t({\bf x}|{\bf x}_1) \frac {q_t({\bf x}|{\bf x}_1)\pi({\bf x}_1)} {q_t({\bf x})}\,d{\bf x}_1.
    \end{align}
    This completes the proof.
\end{proof}

We now show the main results:
\begin{proposition}\label{proposition:ot_flow}
    Consider the optimal transport (OT) conditional probability path and vector field from \cite[Section 4; Example 2 and Theorem 3]{lipman2023flow}:
    \begin{align}
        q_t({\bf x}|{{\bf x}_1}) &= \mathcal N({\bf x}|\mu_t({\bf x}_1), \sigma_t^2({\bf x}_1)I) = \mathcal N({\bf x}|t{{\bf x}_1}, (1-(1-{\sigma_{\min}})t)^{2}I), \\
        u_t({\bf x}|{\bf x}_1) &= \frac {{\bf x}_1 -(1-\sigma_{\min}){\bf x}} {1-(1-\sigma_{\min})t}.
    \end{align}
    Then the marginal vector field $u_t({\bf x})$ is a gradient field.
\end{proposition}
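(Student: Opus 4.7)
The plan is to rewrite the conditional vector field $u_t(\mathbf{x}\mid\mathbf{x}_1)$ so that its $\mathbf{x}_1$-dependence is entirely absorbed into the conditional score $\nabla\log q_t(\mathbf{x}\mid\mathbf{x}_1)$, with coefficients depending only on $t$. After that, the marginalization \eqref{eq:marginal_vector_field} collapses cleanly using Lemma~\ref{lemma:flow_utilities}, and the resulting expression is visibly a gradient.

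Concretely, first I would compute the score of the Gaussian kernel. Since $q_t(\mathbf{x}\mid\mathbf{x}_1)=\mathcal{N}(\mathbf{x}\mid t\mathbf{x}_1,\alpha_t^2 I)$ with $\alpha_t\coloneqq 1-(1-\sigma_{\min})t$, we have
\begin{equation*}
\nabla\log q_t(\mathbf{x}\mid\mathbf{x}_1) = -\frac{\mathbf{x}-t\mathbf{x}_1}{\alpha_t^2},
\end{equation*}
which I can solve for $\mathbf{x}_1$ and substitute into the affine expression $u_t(\mathbf{x}\mid\mathbf{x}_1) = \tfrac{1}{\alpha_t}\mathbf{x}_1-\tfrac{1-\sigma_{\min}}{\alpha_t}\mathbf{x}$. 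A short algebraic simplification (the key identity being $\tfrac{1}{t\alpha_t}-\tfrac{1-\sigma_{\min}}{\alpha_t}=\tfrac{1}{t}$) should yield the clean decomposition
\begin{equation*}
u_t(\mathbf{x}\mid\mathbf{x}_1) \;=\; \frac{1}{t}\mathbf{x} \;+\; \frac{\alpha_t}{t}\nabla\log q_t(\mathbf{x}\mid\mathbf{x}_1),
\end{equation*}
in which the $\mathbf{x}_1$-dependence lives only inside the score.

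Next, I plug this into the marginalization formula \eqref{eq:marginal_vector_field}. The constant term $\tfrac{\mathbf{x}}{t}$ integrates against the posterior $\tfrac{q_t(\mathbf{x}\mid\mathbf{x}_1)\pi(\mathbf{x}_1)}{q_t(\mathbf{x})}$ to give $\tfrac{\mathbf{x}}{t}$ by identity \eqref{eq:integration_to_one}, while the score term integrates to $\tfrac{\alpha_t}{t}\nabla\log q_t(\mathbf{x})$ by identity \eqref{eq:score_function}. This produces the closed form
\begin{equation*}
u_t(\mathbf{x}) \;=\; \frac{\mathbf{x}}{t} \;+\; \frac{\alpha_t}{t}\,\nabla\log q_t(\mathbf{x})
\;=\; \nabla_{\mathbf{x}}\!\left(\frac{\|\mathbf{x}\|^2}{2t} + \frac{\alpha_t}{t}\,\log q_t(\mathbf{x})\right),
\end{equation*}
exhibiting $u_t$ explicitly as a gradient field and completing the proof.

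The only genuinely delicate step is the second one: the substitution works so cleanly precisely because the conditional variance $\alpha_t^2$ does not depend on $\mathbf{x}_1$, which is what allows the coefficient multiplying the score to be pulled outside the $\mathbf{x}_1$-integral. This pattern is the template I expect to reuse for the VE and VP cases in the same appendix, where the isotropic Gaussian form of $q_t(\mathbf{x}\mid\mathbf{x}_1)$ with $\mathbf{x}_1$-independent variance should again permit the same score/identity manipulation, differing only in the scalar $t$-dependent coefficients.
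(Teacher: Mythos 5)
Your proposal is correct and follows essentially the same route as the paper's proof: rewrite $u_t(\mathbf{x}\mid\mathbf{x}_1)=\frac{\mathbf{x}}{t}+\frac{\sigma_t}{t}\nabla\log q_t(\mathbf{x}\mid\mathbf{x}_1)$, marginalize via the two identities of Lemma~\ref{lemma:flow_utilities}, and conclude that $u_t(\mathbf{x})=\frac{\mathbf{x}}{t}+\frac{\sigma_t}{t}\nabla\log q_t(\mathbf{x})$ is a sum of gradient fields (your explicit potential $\frac{\|\mathbf{x}\|^2}{2t}+\frac{\sigma_t}{t}\log q_t(\mathbf{x})$ is a nice, slightly more concrete finish than the paper's).
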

\begin{proof}
    By the definition of the Gaussian distribution,
    \begin{align}
        q_t({\bf x}|{\bf x}_1) = \mathcal N({\bf x}|t{\bf x}_1, (1-(1-\sigma_{\min})t)^2I) = \exp\left(-\frac {\|{\bf x}-t{\bf x}_1\|_2^2} {(1-(1-\sigma_{\min})t)^2} \right)/Z,
    \end{align}
    where $Z$ is a normalizing constant that does not depend on ${\bf x}$.

    We separate $u_t({\bf x})$ into two gradient fields: the score function of $q_t({\bf x}|{\bf x}_1)$ and a linear map.

    Let us write $\sigma_t\coloneqq 1-(1-\sigma_{\min})t$.
    The score function of $q_t({\bf x}|{\bf x}_1)$ is
    \begin{align}
        \nabla\log q_t({\bf x}|{\bf x}_1) = -\frac {{\bf x}-t{\bf x}_1} {(1-(1-\sigma_{\min})t)^2} = -\frac{{\bf x}-t{\bf x}_1}{\sigma_t^2}.
    \end{align}
    Next, remember that $\sigma_t = 1-(1-\sigma_{\min})t$ is independent of ${\bf x},{\bf x}_1$ and note that 
    \begin{align}
        u_t({\bf x}|{\bf x}_1) &= \frac {{\bf x}_1 -(1-\sigma_{\min}){\bf x}} {1-(1-\sigma_{\min})t} = \frac {{\bf x}_1 - \frac {(1-\sigma_t)} t {\bf x}} {\sigma_t} = \frac {t{\bf x}_1 - (1-\sigma_t){\bf x}} {t\sigma_t} \nonumber\\
        &= \frac {\bf x} t - \frac {{\bf x}-t{\bf x}_1} {t\sigma_t} \ =\frac {\bf x} t - \frac {\sigma_t} t \cdot \frac {{\bf x}-t{\bf x}_1} {\sigma_t^2} \nonumber\\
        &= \frac {\bf x} t + \frac {\sigma_t} t  \cdot \nabla \log q_t({\bf x}|{\bf x}_1).
    \end{align}
    Let $\displaystyle a \coloneqq \frac {\sigma_t}  t$.
    Then $a$ is independent of ${\bf x}, {\bf x}_1$ and $u_t({\bf x}|{\bf x}_1) = \frac {\bf x} t + a \nabla \log q_t({\bf x}|{\bf x}_1)$.

    By the definition of marginal vector field, we get
    \begin{align}
        u_t({\bf x}) &= \int u_t({\bf x}|{\bf x}_1)\frac {q_t({\bf x}|{\bf x}_1)\pi({\bf x}_1)} {q_t({\bf x})}d{\bf x}_1 \nonumber\\
        &= \int \Big(\frac {\bf x} t + a\nabla\log q_t({\bf x}|{\bf x}_1)\Big) \frac {q_t({\bf x}|{\bf x}_1)\pi({\bf x}_1)} {q_t({\bf x})}d{\bf x}_1 \nonumber\\
        &= \frac {\bf x} t\int \frac {q_t({\bf x}|{\bf x}_1)\pi({\bf x}_1)} {q_t({\bf x})}d{\bf x}_1 +  a\int \nabla\log q_t({\bf x}|{\bf x}_1) \frac {q_t({\bf x}|{\bf x}_1)\pi({\bf x}_1)} {q_t({\bf x})}d{\bf x}_1 \nonumber\\
        &= \frac {\bf x}t + a\nabla\log  q_t({\bf x}),
    \end{align}
    where the last equality holds by Lemma~\ref{lemma:flow_utilities}.

    Here, ${\bf x}\mapsto \frac {\bf x} t$ is a linear map on ${\bf x}$ so it is a gradient field, and ${\bf x}\mapsto a\nabla\log q_t({\bf x})$ is by definition a gradient field.
    Since $u_t({\bf x})$ is an addition of two gradient fields, it is clearly a gradient field.
    This completes the proof.
\end{proof}

\begin{proposition}
    The marginal vector fields of the variance-exploding (VE) and variance-preserving (VP) diffusion probability paths in \cite[Section 4; Example 1]{lipman2023flow} are gradient fields.
\end{proposition}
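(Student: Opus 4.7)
The plan is to imitate the proof of Proposition~\ref{proposition:ot_flow}. The key structural observation is that for any conditional Gaussian path $q_t({\bf x}|{\bf x}_1) = \mathcal N({\bf x}|\mu_t({\bf x}_1), \sigma_t^2 I)$ whose variance $\sigma_t^2$ is independent of ${\bf x}_1$ and whose mean $\mu_t({\bf x}_1)$ is linear in ${\bf x}_1$ with coefficient depending only on $t$, the conditional score
\begin{align}
    \nabla \log q_t({\bf x}|{\bf x}_1) = -\frac{{\bf x} - \mu_t({\bf x}_1)}{\sigma_t^2}
\end{align}
is affine in $({\bf x},{\bf x}_1)$ with coefficients depending only on $t$. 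Both the VE and VP conditional vector fields in~\cite{lipman2023flow} are likewise affine in $({\bf x},{\bf x}_1)$ with purely time-dependent coefficients. Hence I can solve a two-by-two linear system to find scalars $\alpha_t,\beta_t$ such that
\begin{align}
    u_t({\bf x}|{\bf x}_1) = \alpha_t\,{\bf x} + \beta_t\,\nabla \log q_t({\bf x}|{\bf x}_1).
\end{align}
Substituting this into the definition \eqref{eq:marginal_vector_field} and applying Lemma~\ref{lemma:flow_utilities} exactly as in the OT proof produces
\begin{align}
    u_t({\bf x}) = \alpha_t\,{\bf x} + \beta_t\,\nabla\log q_t({\bf x}),
\end{align}
which is the gradient of $\tfrac{\alpha_t}{2}\|{\bf x}\|_2^2 + \beta_t\log q_t({\bf x})$ and therefore a gradient field.

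Next I would specialize to each case. For VE, the path is $q_t({\bf x}|{\bf x}_1)=\mathcal N({\bf x}|{\bf x}_1,\sigma_{1-t}^2 I)$ with $u_t({\bf x}|{\bf x}_1) = -(\sigma'_{1-t}/\sigma_{1-t})({\bf x}-{\bf x}_1)$. Using $({\bf x}-{\bf x}_1) = -\sigma_{1-t}^2\,\nabla\log q_t({\bf x}|{\bf x}_1)$ I immediately read off $\alpha_t=0$ and $\beta_t = \sigma_{1-t}\sigma'_{1-t}$. For VP, the path is $q_t({\bf x}|{\bf x}_1)=\mathcal N({\bf x}|\alpha_{1-t}{\bf x}_1,(1-\alpha_{1-t}^2)I)$ and the conditional field is $u_t({\bf x}|{\bf x}_1)=\tfrac{\alpha'_{1-t}}{1-\alpha_{1-t}^2}(\alpha_{1-t}{\bf x}-{\bf x}_1)$. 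Matching the coefficients of ${\bf x}$ and ${\bf x}_1$ against those of $\alpha_t{\bf x}+\beta_t\nabla\log q_t({\bf x}|{\bf x}_1)$ reduces to a small linear system whose solution is $\alpha_t=\beta_t = -\alpha'_{1-t}/\alpha_{1-t}$, well-defined wherever $\alpha_{1-t}>0$.

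The main obstacle, which is mild, is purely bookkeeping: the time reparameterization $\sigma_{1-t},\alpha_{1-t}$ from~\cite{lipman2023flow} must be carried correctly through the derivatives, and I need $\sigma_{1-t}>0$, $1-\alpha_{1-t}^2>0$, $\alpha_{1-t}>0$ on the interior of $[0,1]$ so that the scalars $\alpha_t,\beta_t$ are well-defined and the decomposition is valid pointwise in $t$ (which is all that is needed to conclude that $u_t(\cdot)$ is a gradient field at each $t$). Conceptually nothing beyond the OT case is needed: any Gaussian flow-matching vector field whose conditional mean is linear in ${\bf x}_1$ with purely time-dependent coefficients admits such a decomposition and is therefore a gradient field.
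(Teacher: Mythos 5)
Your proposal is correct, but it follows a genuinely different route from the paper. The paper's proof goes through the SDE formulation of \cite[Appendix D]{lipman2023flow}: it writes down the VE and VP forward SDEs, reads off the drift $f_t$ and diffusion coefficient $g_t$, uses the probability-flow identity $\tilde u_t({\bf x}) = f_t({\bf x}) - \tfrac12 g_t^2\nabla\log q_t({\bf x})$, and then reverses time to get a field of the form (time-dependent scalar)$\,\cdot\,{\bf x}$ plus (time-dependent scalar)$\,\cdot\,\nabla\log q_t$, which is manifestly a gradient field. You instead replicate the mechanism of Proposition~\ref{proposition:ot_flow}: decompose the \emph{conditional} field as $u_t({\bf x}|{\bf x}_1)=\alpha_t{\bf x}+\beta_t\nabla\log q_t({\bf x}|{\bf x}_1)$ with purely time-dependent scalars, then push this through the marginalization \eqref{eq:marginal_vector_field} using Lemma~\ref{lemma:flow_utilities}. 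Your specific coefficient computations check out: for VE, $u_t({\bf x}|{\bf x}_1)=-\tfrac{\sigma'_{1-t}}{\sigma_{1-t}}({\bf x}-{\bf x}_1)$ and ${\bf x}-{\bf x}_1=-\sigma_{1-t}^2\nabla\log q_t({\bf x}|{\bf x}_1)$ give $\alpha_t=0$, $\beta_t=\sigma_{1-t}\sigma'_{1-t}$; for VP, matching the ${\bf x}_1$- and ${\bf x}$-coefficients of $\tfrac{\alpha'_{1-t}}{1-\alpha_{1-t}^2}(\alpha_{1-t}{\bf x}-{\bf x}_1)$ against $\alpha_t{\bf x}-\tfrac{\beta_t}{1-\alpha_{1-t}^2}({\bf x}-\alpha_{1-t}{\bf x}_1)$ indeed yields $\alpha_t=\beta_t=-\alpha'_{1-t}/\alpha_{1-t}$ (equivalently $\tfrac12 T'(1-t)$ under $\alpha_t=e^{-\frac12 T(t)}$). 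What each approach buys: the paper's version is shorter because it cites the known SDE coefficients and the score-based expression of the associated ODE field, but it imports the SDE/probability-flow correspondence and a time-reversal step as black boxes; your version is more self-contained and uniform (the same two-line argument as the OT case, invoking only the paper's own Lemma~\ref{lemma:flow_utilities}), it stays entirely in the flow-matching convention so the time-reparameterization bookkeeping is explicit rather than hidden in a reversal step, and, as you note, it generalizes verbatim to any Gaussian conditional path whose mean is linear in ${\bf x}_1$ with time-only coefficients. Your caveat about positivity of $\sigma_{1-t}$, $1-\alpha_{1-t}^2$, and $\alpha_{1-t}$ on the interior of $[0,1]$ is the right (and only) regularity point to flag, and it mirrors the implicit assumptions in the paper's proof.
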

\begin{proof}
    The proof is based on a stochastic differential equations (SDE) formulation \cite[Appendix~D]{lipman2023flow}.
    Consider an SDE of the standard form, defined in an interval $t\in[0,1]$:
    \begin{align}
        d{\bf y} = f_tdt + g_td{\bf w},
    \end{align}
     with drift $h_t$, diffusion coefficient $g_t$, and standard Wiener process $d{\bf w}$.

    The corresponding marginal vector field is given as follows \cite[Appendix D]{lipman2023flow}:
    \begin{align}
        \tilde{u}_t({\bf x}) = f_t({\bf x}) -\frac 1 2 g_t^2\nabla_{\bf x}\log q_t({\bf x}).
    \end{align}

    First, it is known that the SDE for the VE path is 
    \begin{align}
        d{\bf y} = \sqrt{\frac d {dt}\sigma_{t}^2}d{\bf w},
    \end{align}
    where $\sigma_0 = 0$ and $\sigma_t\to\infty$ as $t\to 1$.
    This SDE moves from data at $t=0$ to noise at $t=1$.
    
    Thus the SDE coefficients are $f_t({\bf x}) = 0, g_t({\bf x}) = \sqrt{\frac d {dt}\sigma_{t}^2}$, and the following vector field $\tilde{v}_t({\bf x})$is
    \begin{align}
        \tilde{u}_t({\bf x}) = -\frac 1 2 \frac d {dt} \sigma_{t}^2\nabla\log q_t({\bf x}),
    \end{align}
    and the vector field from noise at $t = 0$ to data at $t=1$ is
    \begin{align}
        u_t({\bf x}) = -\frac 1 2 \frac d {dt} \sigma_{1-t}^2\nabla\log q_{1-t}({\bf x}).
    \end{align}
    Since $\sigma_{1-t}$ is independent of ${\bf x}$, $u_t({\bf x})$ is clearly a gradient field.

    Next, it is known that the SDE for the VP path is 
    \begin{align}
        d{\bf y} = -\frac {T'(t)} 2 {\bf y} + \sqrt{T'(t)}d{\bf w},
    \end{align}
    where $T(t) = \int_0^t\beta(s)ds$ and $\beta$ is noise scale.
    This SDE moves from data at $t=0$ to noise at $t=1$.

    Thus the SDE coefficients are $f_t({\bf x}) = -\frac {T'(t)} 2 {\bf x}, g_t({\bf x}) = \sqrt{T'(t)}$, and the vector field $\tilde{u}_t({\bf x})$ is
    \begin{align}
        \tilde{u}_t({\bf x}) =  -\frac {T'(t)} 2{\bf x} - \frac 1 2 T'(t)\nabla\log q_t({\bf x}),
    \end{align}
    and the vector field from noise at $t = 0$ to data at $t=1$ is
    \begin{align}
         u_t({\bf x}) =  -\frac {T'(1-t)} 2{\bf x} - \frac 1 2 T'(1-t)\nabla\log q_{1-t}({\bf x}).
    \end{align}

    Here, ${\bf x}\mapsto -\frac {T'(1-t)} 2{\bf x}$ is linear map on ${\bf x}$ so it is gradient field, and $-\frac 1 2 T'(1-t)$ is independent of ${\bf x}$ so ${\bf x}\mapsto -\frac 1 2 T'(1-t)\nabla\log q_{1-t}({\bf x})$ is a gradient field.
    Since $u_t({\bf x})$ is an addition of two gradient fields, it is clearly a gradient field.
    This completes the proof.
\end{proof}

\section{Experiment Details}\label{apdx:experiment_details}
For all experiments, we implement spectral mean flows as in Section~\ref{sec:neural} in PyTorch~\cite{paszke2019pytorchimperativestylehighperformance}.
We use 32 bit-precision and use \texttt{opt\_einsum} package~\cite{daniel2018opt} to optimize the tensor network contractions.
For flow matching training, we use the OT flow and path (Proposition~\ref{proposition:ot_flow}) with $\sigma_{\min}=1e-5$.
Sampling is done with a \texttt{midpoint} ODE solver from \texttt{flow\_matching} package~\cite{lipman2024flow}.
Each experiment is done with a single NVIDIA RTX A6000 GPU with 48GB and Intel Xeon Gold 6330 CPU @ 2.00GHz.

\subsection{Implementation Details (Sections \ref{sec:neural} and \ref{sec:experiments})}\label{apdx:engineering}
We provide the implementation details.
Recall that our model is parameterized by a shared feature extractor ${\rm MLP}(\cdot,\cdot,t):{\cal X}\times\mathbb{R}^{d_h}\to\mathbb{C}^{d_f}$ and parameters ${\bf o}, {\bf s}\in\mathbb{R}^{d_h}$, $\lambda\in\mathbb{C}^r$, ${\bf B}, {\bf L}, {\bf R}, {\bf H}\in\mathbb{C}^{d_f\times r}$.

We denote trainable linear layer from ${\cal X}$ to ${\cal Y}$ by ${\rm Lin}_{{\cal X}\to{\cal Y}}$, and denote ${\rm Lin}_{\cal X}\coloneqq{\rm Lin}_{{\cal X}\to{\cal X}}$.
We denote squared ReLU activation ${\bf x}\mapsto {\rm ReLU}({\bf x})^2$~\cite{so2022primersearchingefficienttransformers} by ${\rm ReLU}^2$, RMS normalization~\cite{zhang2019rootmeansquarelayer} by ${\rm RMSNorm}$, L2 normalization by ${\rm L2Norm}$, and $d$-dimensional sinusoidal positional encoding~\cite{vaswani2017attention} by ${\rm PE}_{\mathbb{R}^d}$.

For given number of layers $L$ and hidden dimensions $d_h,d_h'$, the feature extractor ${\rm MLP}(\cdot,\cdot,t):{\cal X}\times\mathbb{R}^{d_h}\to\mathbb{C}^{d_f}$ is designed as follows.
We obtain conditioning feature of $t$ using sinusoidal PE:
\begin{align}
    {\bf c}_t = {\rm Lin}_{\mathbb{R}^{d_h}}\circ {\rm ReLU}^2\circ {\rm Lin}_{\mathbb{R}^{d_h}}\circ {\rm ReLU}^2\circ {\rm Lin}_{\mathbb{R}^{256}\to\mathbb{R}^{d_h}} \circ {\rm PE}_{\mathbb{R}^{256}}(t).
\end{align}
Then, the full feature extraction process for a given input ${\bf x}\in{\cal X}$ is written as follows:
\begin{align}
    {\bf z}^{(0)} &= {\rm Lin}_{{\cal X}\to\mathbb{R}^{d_h}}({\bf x}) + {\bf q} + {\bf c}_t,\\
    {\bf z}^{(l)} &= {\bf z}^{(l-1)} + {\rm Lin}_{\mathbb{R}^{d_h'}\to\mathbb{R}^{d_h}}\circ {\rm ReLU}^2\circ {\rm Lin}_{\mathbb{R}^{d_h}\to\mathbb{R}^{d_h'}}\circ {\rm RMSNorm}({\bf z}^{(l-1)}), \label{eq:residual}\\
    {\bf h}^{(l)} &= {\rm L2Norm}\circ {\rm ReLU}^2 \circ {\rm Lin}_{\mathbb{R}^{d_h}\to\mathbb{C}^{d_h}}\circ {\rm ReLU}^2({\bf z}^{(l)}), \label{eq:readout}\\
    {\rm MLP}({\bf x}, {\bf q}, t) &= {\rm concat}({\bf h}^{(1)}, ..., {\bf h}^{(L)}),\quad {\bf q}\in\{{\bf o}, {\bf s}\}.\label{eq:final_mlp}
\end{align}
In \eqref{eq:readout}, we apply ${\rm ReLU}^2$ to complex features by separately operating on real and imaginary parts.
As shown in \eqref{eq:final_mlp}, instead of using only the output of the last layer, the MLP concatenates features from all layers to construct the output, i.e., $d_f = Ld_h$.

Then, to parameterize ${\bf B}, {\bf L}, {\bf R}, {\bf H}\in\mathbb{C}^{d_f\times r}$, we set $r=Ld_r$ for some $d_r$ and, to prevent ${\cal O}(L^2d_hd_l)$ parameter complexity, impose the following block-diagonal structure on each of ${\bf D} \in \{{\bf B}, {\bf L}, {\bf R}, {\bf H}\}$:
\begin{align}
    {\bf D} = {\bf d}_1 \oplus\cdots\oplus{\bf d}_L = \begin{pmatrix}
        {\bf d}_1 \\
        & \ddots \\
        & & {\bf d}_L
    \end{pmatrix}\in\mathbb{C}^{(Ld_h)\times(Ld_r)},\quad {\bf d}_l\in\mathbb{C}^{d_h\times d_r}.
\end{align}
In addition to reducing the parameter count by a factor of $L$, this allows space-efficient, layerwise evaluation of tensor network contraction based on $({\bf r}_1\oplus\cdots\oplus{\bf r}_L)({\bf l}_1\oplus\cdots\oplus{\bf l}_L)=({\bf r}_1{\bf l}_1\oplus\cdots\oplus{\bf r}_L{\bf l}_L)$.
In practice, we simply set $d_r=d_h/2$ unless stated otherwise.

The eigenvalues $\lambda$ and nonzero blocks of ${\bf B}, {\bf L}, {\bf R}, {\bf H}$ use exponential parameterization of \cite{orvieto2023resurrecting}, which initializes each entry $z$ uniformly between rings of radius $r_{\min}$ and $r_{\max}$ on the complex plane:
\begin{align}
    z = \exp(-\nu + i\theta) \quad \nu = -\frac{1}{2}\log(u_1(r_{\max}^2-r_{\min}^2)+r_{\min}^2) \quad \theta=2\pi u_2\quad u_1,u_2\sim{\cal U}[0, 1]
\end{align}
When initializing $z$ on the complex unit disk, we find it necessary to set $r_{\min}=\epsilon$ and $r_{\max} = 1 - \epsilon$ for a small constant $\epsilon=10^{-12}$, to avoid numerical issues related to logarithm.

After initializing $\lambda, {\bf B}, {\bf L}, {\bf R}, {\bf H}$, we apply proper scaling to prevent the activations from exploding or vanishing.
We find that scaling $\lambda$ by $\sqrt{2}$ and the rest by $\sqrt{2}d_r^{-1/4}$ provides the desired stability.

\subsection{Synthetic Experiments (Section~\ref{sec:checkerboard})}
We provide details of the 2D checkerboard experiment.
We use ${\rm MLP}(\cdot,\cdot,t):\mathbb{R}^1\times\mathbb{R}^h\to\mathbb{C}^{Ld_h}$ with $L=4$ layers and hidden dimensions $d_h=d_h'=128$.
For training, we use Adam optimizer~\cite{diederik2014adam} with hyperparameters $(\beta_1,\beta_2)=(0.9, 0.999)$.
The model is trained for 20k iterations with learning rate 1e-3 and batch size 10,000, with 1k steps of linear learning rate warmup, and gradient norm clipping at 1.0.
For sampling, we keep track of exponential moving average (EMA) of the model parameters during training~\cite{karras2024analyzing}, updating every 10 iterations with decay rate 0.995.

\subsection{Time-Series Modeling (Section~\ref{sec:time_series})}

\begin{table}[!t]
  \centering
  \caption{Hyperparameters for Table~\ref{tab1}.}
  \resizebox{0.8\textwidth}{!}{
    \begin{tabular}{l|cccccc}
    \toprule
    Hyperparameter & Sines & Stocks & ETTh  & MuJoCo & Energy & fMRI \\
    \midrule
    Hidden dimension $d_h,d_h'$ & 64 & 64 & 64 & 96 & 96 & 96 \\
    Layers $L$ & 10 & 10 & 10 & 16 & 14 & 16 \\
    Batch size & 256 & 128 & 256 & 256 & 128 & 256 \\
    Training steps & 12000 & 10000 & 18000 & 28000 & 25000 & 15000 \\
    Sampling steps & 500 & 500 & 500 & 1000 & 1000 & 1000 \\
    \bottomrule
    \end{tabular}}
    \label{table:hyperparameter}
\end{table}
\begin{table}[!t]
    \centering
    \caption{Wall-clock training times for Table~\ref{tab1}.}
    \resizebox{0.75\textwidth}{!}{
        \begin{tabular}{c|cccccc}
            \toprule
            Method & Sines & Stocks & ETTh & MuJoCo & Energy & fMRI \\
            \midrule
            Ours & 18min & 15min & 27min & 60min & 44min & 35min \\
            Diffusion-TS & 16min & 15min & 30min & 25min & 60min & 48min \\
            \bottomrule
        \end{tabular}}
    \label{table:training_times}
\end{table}
\begin{table}[!t]
    \centering
    \caption{Wall-clock sampling times per 1,000 samples for Table~\ref{tab1}.}
    \resizebox{0.75\textwidth}{!}{
        \begin{tabular}{c|cccccc}
            \toprule
            Method & Sines & Stocks & ETTh & MuJoCo & Energy & fMRI \\
            \midrule
            Ours & 30sec & 30sec & 29sec & 109sec & 95sec & 109sec \\
            Diffusion-TS & 66sec & 66sec & 67sec & 133sec & 202sec & 262sec \\
            \bottomrule
        \end{tabular}}
    \label{table:sampling_times}
\end{table}
\begin{table}[!t]
  \centering
  \caption{Parameter counts for Table~\ref{tab1}, extending \cite[Table 7]{Yuan2024DiffusionTSID}.}
  \resizebox{0.48\textwidth}{!}{
    \begin{tabular}{c|ccc}
    \toprule
    Method & Sines & Stocks & Energy \\
    \midrule
    Ours & 356,736 & 356,800 & 1,086,432 \\
    Diffusion-TS & 232,177 & 291,318 & 1,135,144 \\
    Diffwave & 533,592 & 599,448 & 1,337,752 \\
    Cot-GAN & 40,133 & 52,675 & 601,539 \\
    TimeGAN & 34,026 & 48,775 & 1,043,179 \\
    TimeVAE & 97,525 & 104,412 & 677,418 \\
    \bottomrule
    \end{tabular}}
    \label{table:size}
\end{table}

We provide details of unconditional time-series generation experiments in Section~\ref{sec:time_series}.

\paragraph{Regular time series}
We first detail the datasets in Table~\ref{tab1}.
Sines has 10,000 5-channel sequences where each channel is a sinusoidal signal $x(n)=\sin(2\pi\omega n+\theta)$ with independently determined frequency $\omega\sim{\cal U}[0, 1]$ and phase $\theta\sim{\cal U}[-\pi, \pi]$.
Stocks contains 3,773 6-channel sequences from daily Google stock price data from 2004 to 2019.
The channels are volume and high, low, opening, closing, and adjusted closing prices.
ETTh contains 17,420 7-channel sequences from electricity transformers recorded every 15 minutes from 2016 to 2018, including load and oil temperature.
MuJoCo has 10,000 14-channel sequences from DeepMind \texttt{dm\_control} physics simulation.
Energy contains 19,711 28-channel sequences from appliances energy use in a building recorded at a noisy 10-minutes interval, including house temperature and humidity.
fMRI contains 10,000 50-channel sequences from a realistic simulation of blood oxygen level dependent signal in functional MRI.

We closely follow the protocol of \cite{Yuan2024DiffusionTSID} for hyperparameter selection and training.
Detailed hyperparameters are in Table~\ref{table:hyperparameter}.
The ranges considered for ${\rm MLP}(\cdot,\cdot,t):\mathbb{R}^d\times\mathbb{R}^{d_h}\to\mathbb{C}^{Ld_h}$ are batch size $\{128, 256\}$, layers $L\in\{10, 14, 16\}$, and hidden dimension $d_h=d_h'\in\{64, 96\}$.
Other choices mostly follow \cite{Yuan2024DiffusionTSID}:
For training, we use Adam optimizer~\cite{diederik2014adam} with $(\beta_1,\beta_2)=(0.9, 0.96)$.
The model is trained for the same iterations per dataset as in \cite{Yuan2024DiffusionTSID} with learning rate 8e-4 and gradient clipping at 1.0, with 500 steps of linear warmup and then decaying by 0.5 on plateau with patience 2,000.
An exception is MuJoCo, where longer training is necessary for convergence.
For sampling, we use EMA of model parameters~\cite{karras2024analyzing} following~\cite{Yuan2024DiffusionTSID}, updating every 10 iterations with decay rate 0.995.
We use a \texttt{midpoint} ODE solver with the same sampling steps as in \cite{Yuan2024DiffusionTSID}.
In Table~\ref{tab1}, we adopt baseline scores from \cite{Yoon2019TimeseriesGA, Yuan2024DiffusionTSID} except for Diffusion-TS reproduced with the official implementation.

\begin{table}[!t]
    \centering
    \begin{minipage}[t]{0.49\textwidth}
        \centering
        \caption{Parameter counts for Table~\ref{table:more_comparisons}.}
        \resizebox{\textwidth}{!}{
            \begin{tabular}{c|ccc}
                \toprule
                Method & Sines & Stocks & MuJoCo \\
                \midrule
                Ours & 32,161k & 32,161k & 32,165k \\
                SDFormer-AR & 44,971k & 44,971k & 44,971k \\
                ImagenTime & 57,560k & 57,562k & 14,430k \\
                \bottomrule
            \end{tabular}}
        \label{table:size_more_comparisons}
    \end{minipage}
    \hfill
    \begin{minipage}[t]{0.49\textwidth}
        \vspace{0.1cm}
        \centering
        \caption{Parameter counts for Table~\ref{table:long}.}
        \resizebox{0.84\textwidth}{!}{
            \begin{tabular}{c|ccc}
                \toprule
                Method & FRED-MD & NN5 Daily \\
                \midrule
                Ours & 100,603k & 100,569k \\
                ImagenTime & 151,720k & 151,720k \\
                \bottomrule
            \end{tabular}}
        \label{table:size_long}
    \end{minipage}
\end{table}
\begin{table}
    \vspace{-0.3cm}
    \centering
    \caption{Parameter counts for Tables \ref{table:irregular} and \ref{table:pendulum}.}
    \resizebox{0.7\textwidth}{!}{
        \begin{tabular}{c|ccc}
            \toprule
            Method & Stocks, 0\% Drop & Stocks, 30--70\% Drop & Pendulum \\
            \midrule
            Ours & 157,888 & 157,888 & 45,544 \\
            Koopman VAE & 42,270 & 33,410 & 37,978 \\
            \bottomrule
        \end{tabular}}
    \label{table:size_irregular}
\end{table}

We provide supplementary results of Table~\ref{tab1}.
From Figure \ref{fig:sines} to \ref{fig:fmri}, we visualize real and generated data from our model in Table~\ref{tab1}, showing that generations are visually realistic.
We further perform detailed complexity analysis by measuring wall-clock training and sampling times as well as parameter counts.
The results are in Tables \ref{table:training_times}, \ref{table:sampling_times}, and \ref{table:size}, showing that our models for Table~\ref{tab1} have reasonable computational complexity, and their generation is faster than the state-of-the-art Diffusion-TS.

We now provide the details of additional comparisons in Table~\ref{table:more_comparisons}.
For this setup, we consider two recent state-of-the-art methods SDFormer~\cite{chen2024sdformer} and ImagenTime~\cite{naiman2024utilizing} that operate in a larger parameter regime, roughly using 11--43$\times$ the parameters of the largest model Diffwave in Table~\ref{tab1}.
We use Sines, Stocks, and MuJoCo datasets, considering both resource constraints and reproducible datasets in the codebases of \cite{chen2024sdformer, naiman2024utilizing}.
We test larger spectral mean flows with comparable number of parameters to the baselines without other architectural changes.
Specifically, we use ${\rm MLP}(\cdot,\cdot,t):\mathbb{R}^d\times\mathbb{R}^{d_h}\to\mathbb{C}^{Ld_h}$ with $L = 12$ layers and dimensions $d_h=512$, $d_h'=1024$.
The parameter counts are in Table~\ref{table:size_more_comparisons}.
Other hyperparameters are chosen as follows.
Following \cite{chen2024sdformer}, we train our models using AdamW optimizer~\cite{loshchilov2017decoupled} with $(\beta_1,\beta_2)=(0.5, 0.9)$ and weight decay 1e-6 for 100k iterations and measure the discriminative score with respect to the training dataset every 2,500 iterations for validation.
Without hyperparameter tuning, we use batch size 512, learning rate 1e-4, and gradient clipping at 1.0.
For sampling, we use EMA of model parameters, updating every 10 iterations with decay rate 0.995.
We use a \texttt{midpoint} ODE solver with 100 sampling steps for validation and 500 steps for testing.
In Table~\ref{table:more_comparisons}, we reproduce SDFormer-AR and ImagenTime using the official implementations.

\paragraph{Long time series}
We provide details of long time-series modeling in Table~\ref{table:long}.
We use two datasets from the Monash repository~\cite{godahewa2021monash}.
FRED-MD has 107 time series of length 728 from macro-economic indicators from the Federal Reserve Bank.
NN5 Daily contains 111 time series of length 728 from daily cash withdrawals from ATMs in UK.
We choose these datasets as their moderate sizes allow experimentation within resource constraint.
Following \cite{naiman2024utilizing}, we normalize each trajectory to adhere to a zero-centered normal distribution, and use 80\% of the data for training and 20\% for testing.

We design our architectures as in Appendix~\ref{apdx:engineering}, and equip them with time-delay observables used in operator-theoretic methods for dynamical systems~\cite{kamb2020time, takens2006detecting} which we found to help stabilizing training.
Specifically, for time series ${\bf x}^{1:N}$ we define a new observable $\bar{\bf x}^{1:N'}$ as $\bar{\bf x}^i\coloneqq \mathrm{concat}({\bf x}^{ai}, ..., {\bf x}^{ai + b})$ for delay $a$ and dimension $b$.
When $a\leq b$, there is no loss of information and the map ${\bf x}\mapsto \bar{\bf x}$ is invertible.
Hence, we model $\bar{\bf x}$ using spectral mean flow and convert the result to ${\bf x}$ using the inverse map.
We use $a=16$ and $b=64$ for FRED-MD, and $a=20$ and $b=20$ for NN5 Daily.

We include state-of-the-art ImagenTime~\cite{naiman2024utilizing} for comparison, and test spectral mean flow with a comparable number of parameters.
Specifically, we use ${\rm MLP}(\cdot,\cdot,t):\mathbb{R}^d\times\mathbb{R}^{d_h}\to\mathbb{C}^{Ld_h}$ with $L=12$ layers and hidden dimensions $d_h=768$, $d_h'=3072$.
The parameter counts are in Table~\ref{table:size_long}.
Other hyperparameters are chosen as follows.
Following \cite{naiman2024utilizing}, we train our models for 1k epochs with batch size 32 and weight decay 1e-5, measure the marginal score every 100 iterations for validation, and for sampling use EMA of model parameters with 100 warmup steps and decay rate 0.9999.
We use AdamW optimizer with $(\beta_1,\beta_2)=(0.9, 0.95)$ and learning rate 3e-4 for all parameters except for $(\lambda, {\bf B}, {\bf L}, {\bf R}, {\bf H})$.
For these spectral parameters, we use Muon optimizer~\cite{jordan2024muon} with learning rate 1e-3, which slightly improved the performances.
While we leave a deeper investigation as future work, we conjecture this is because Muon is a second-order optimizer, which are expected to improve the learning of multiplicative matrix states in general~\cite[Section 3.3]{sutskever2011generating}.
We use gradient clipping at 1.0.
For sampling, we use a \texttt{midpoint} ODE solver with 100 sampling steps.
In Table~\ref{table:long}, we adopt the baseline scores from \cite{naiman2024utilizing} except for ImagenTime reproduced using the official implementation.

\paragraph{Irregular time series}
We provide details of irregular time-series modeling in Table~\ref{table:irregular}.
We recall that we consider two tasks, \textbf{(1) irregular $\to$ regular}: generating the full time series including the missing timesteps, and \textbf{(2) irregular $\to$ irregular}: generating only the irregularly sampled time series.
We apply spectral mean flow to the tasks as follows.
For \textbf{(1)}, we handle missing timesteps during training by simply interpolating the observations, which we find to work well in practice.
For \textbf{(2)}, based on the HMM formulation we use a classical approach~\cite{zucchini2009eruptions} of treating the sampling interval itself as a part of data.
That is, we consider hidden states $Z^{1:N}$ under dynamics $\mathbb{P}[Z^{n+1}|Z^n]$, and a local observation model $\mathbb{P}[X^n,\Delta\tau^n|Z^n]$ that jointly determines the observation $X^n$ and sampling interval $\Delta\tau^n$ between $X^{n-1}$ and $X^n$.
If expressive enough, this HMM can express any joint distribution $\mathbb{P}[(X^1,\Delta\tau^1), ..., (X^N, \Delta\tau^N)]$, i.e., general time series under irregular or informative sampling.
The only change in implementation is using sampling intervals $\Delta\tau^n$ as an additional data dimension.

Detailed hyperparameters are as follows.
We use ${\rm MLP}(\cdot,\cdot,t):\mathbb{R}^d\times\mathbb{R}^{d_h}\to\mathbb{C}^{Ld_h}$ with $L=4$ layers and hidden dimension $d_h=d_h'=64$.
This leads to a smaller model than the previous experiments, although larger than Koopman VAE (Table~\ref{table:size_irregular}).
For other hyperparameters, we closely follow \cite{naiman2024generative} and train all models using Adam optimizer with learning rate 7e-4 for 600 epochs with batch size 64.
For sampling, we use EMA of model parameters, updating every 10 iterations with decay rate 0.995, and use a \texttt{midpoint} ODE solver with 100 sampling steps.
In Table~\ref{table:irregular}, the baseline scores are from \cite{jeon2022gt, naiman2024generative} except for Koopman VAE tested using the official implementation.
To run Koopman VAE on task \textbf{(2)}, we adopt the hyperparameters provided in the official implementation for task \textbf{(1)}.

\paragraph{Physics-informed modeling}
Lastly, we provide details of physics-informed modeling in Table~\ref{table:pendulum}.
As discussed, we consider a nonlinear pendulum governed by $\ddot{\theta}+9.8\sin\theta=0,\dot{\theta}(0)=0$.
To obtain a dataset, we simulate 2,000 independent trajectories with initial condition $\theta(0)\sim {\rm Unif}[0.5, 2.7]$ over the time interval $[0, 10]$ and sampling interval $0.25$, treating the pair $(\theta(t), \dot{\theta}(t))$ as the observation.
Following \cite{naiman2024generative}, we add Gaussian noise with standard deviation 0.08 to simulate real-world noise and standardize each trajectory to the range $[0, 1]$.
We consider Koopman VAE~\cite{naiman2024generative} for comparison, as it is a state-of-the-art operator-based method for time series.
We use correlational score for evaluation.

To focus on the impact of spectral regularization on the matrix states described in Section~\ref{sec:time_series}, in the feature extractor we only use the last layer feature ${\bf h}^{(L)}$ \eqref{eq:final_mlp}, leading to ${\rm MLP}(\cdot,\cdot,t):\mathbb{R}^d\times\mathbb{R}^{d_h}\to\mathbb{C}^{d_h}$, $\lambda\in\mathbb{C}^{d_r}$, and ${\bf B},{\bf L},{\bf R},{\bf H}\in\mathbb{C}^{d_h\times d_r}$ without block-diagonal structure.
We use $L= 8$ layers and dimension $d_h=d_h'=32$.
Following \cite{naiman2024generative}, we fix the spectral dimension as $r=d_r=4$, as well as the latent dimension of Koopman VAE as $4$.
This leads to similarly sized models (Table~\ref{table:size_irregular}) with spectral regularization on $4 \times 4$ matrices (if used).
We train all models with Adam optimizer, learning rate 7e-4 for 600 epochs, with batch size 64.
For sampling of our model, we use EMA of parameters updating every 10 steps with decay 0.995, and use a \texttt{midpoint} ODE solver with 100 sampling steps.

\begin{figure}[p]
    \centering
    \includegraphics[width=\textwidth]{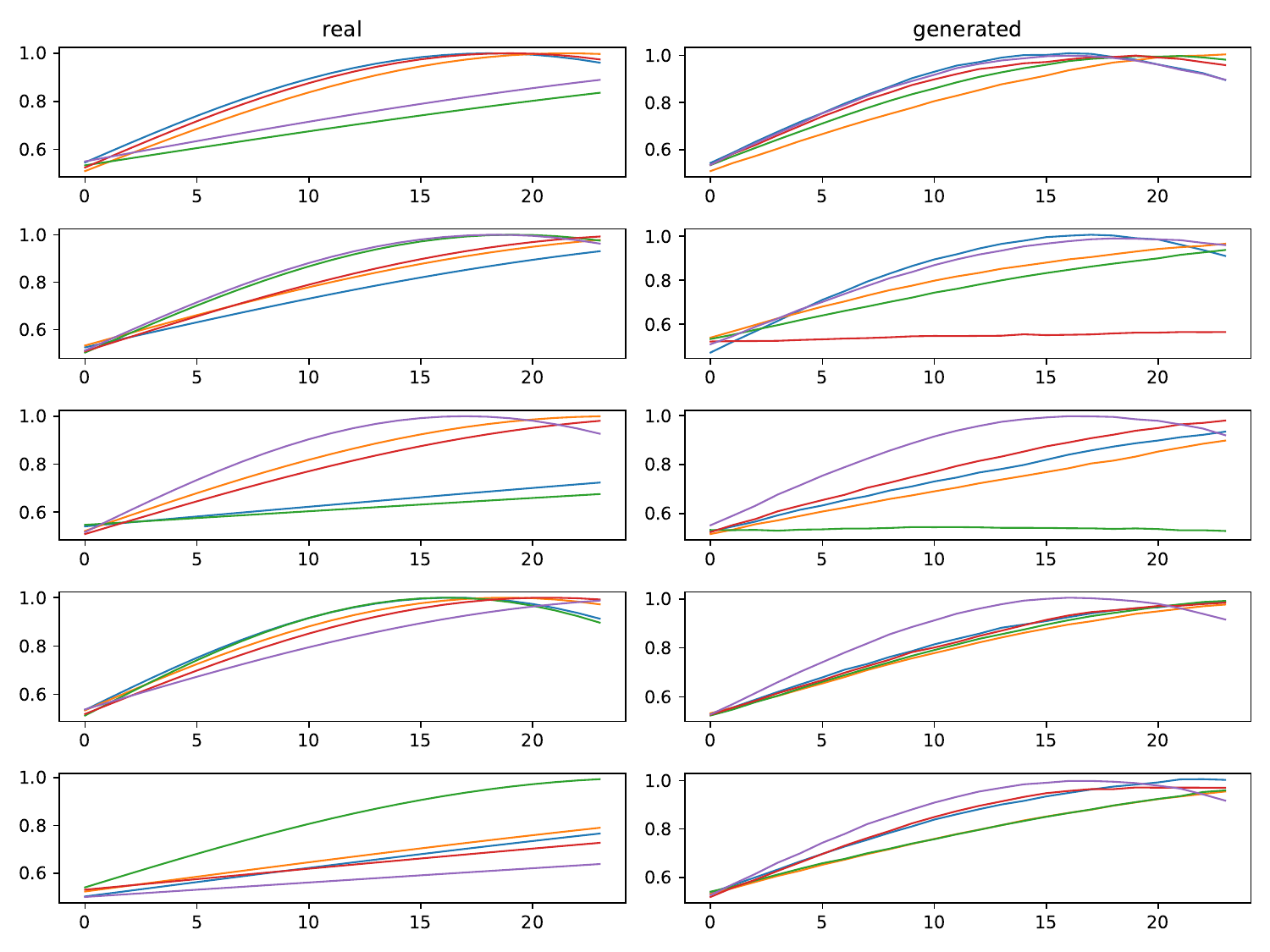}
    \caption{Examples of real and generated data for the Sines dataset.}
    \label{fig:sines}
\end{figure}

\begin{figure}[p]
    \centering
    \includegraphics[width=\textwidth]{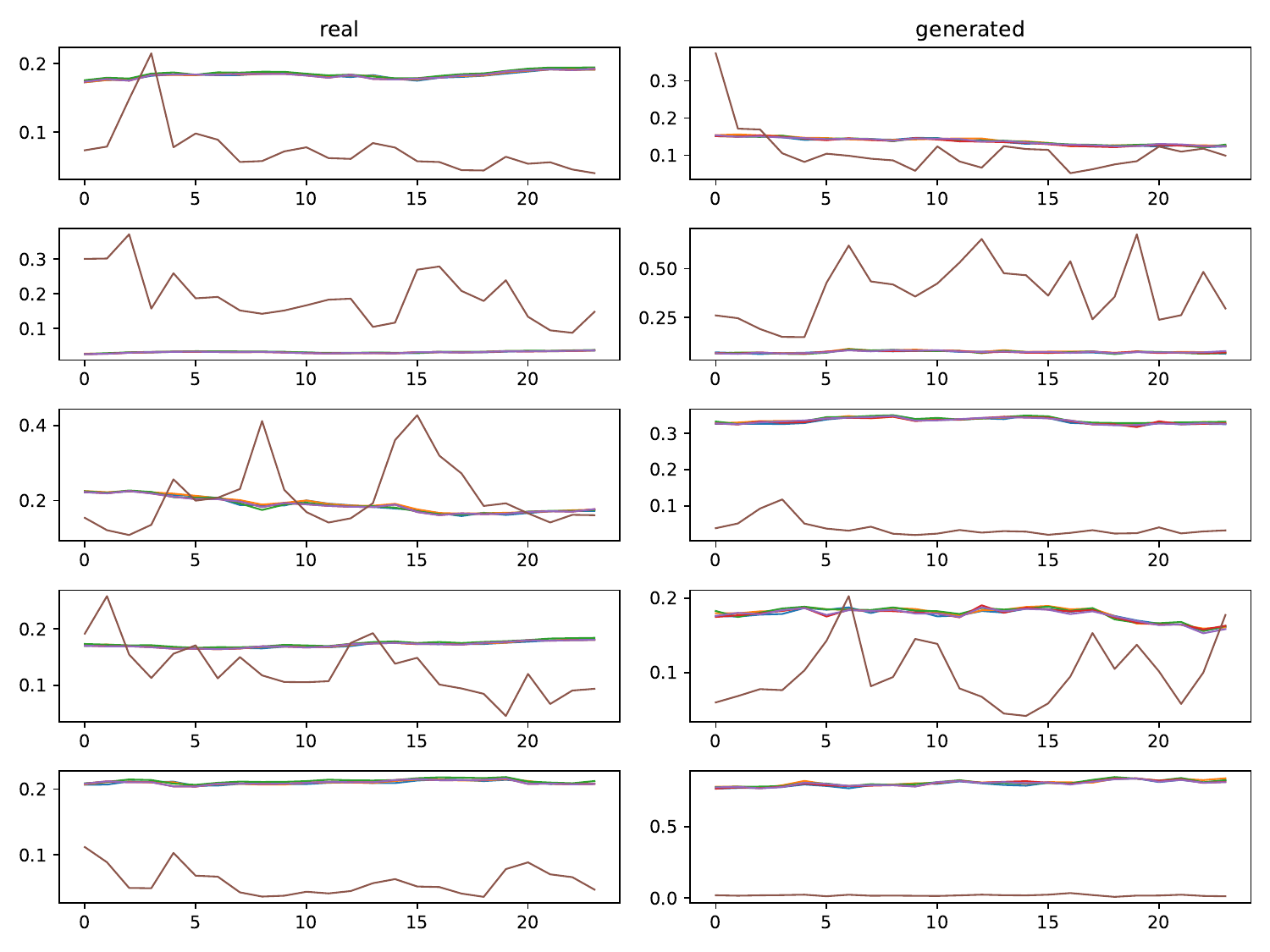}
    \caption{Examples of real and generated data for the Stocks dataset.}
    \label{fig:stocks}
\end{figure}

\begin{figure}[p]
    \centering
    \includegraphics[width=\textwidth]{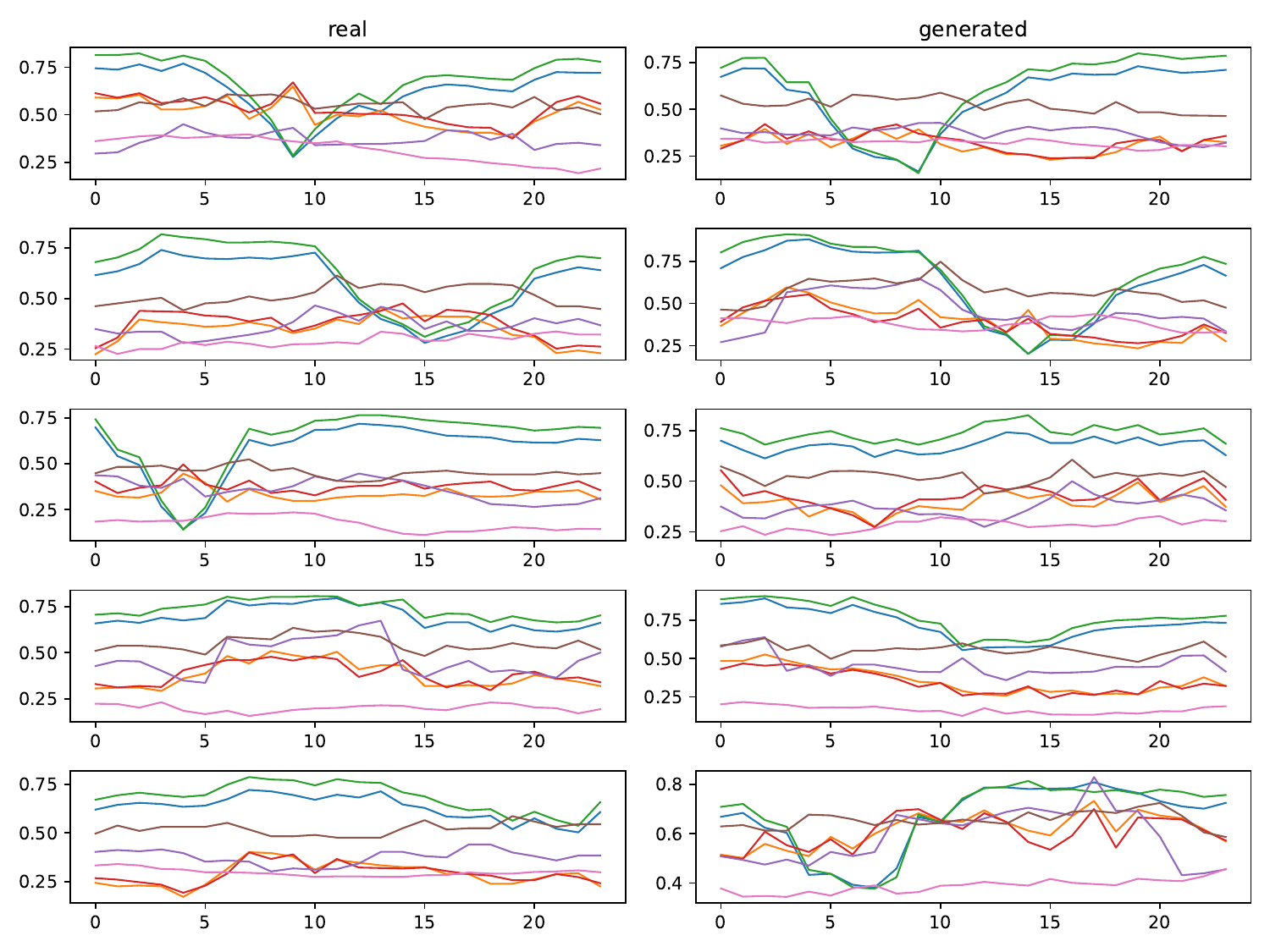}
    \caption{Examples of real and generated data for the ETTh dataset.}
    \label{fig:etth}
\end{figure}

\begin{figure}[p]
    \centering
    \includegraphics[width=\textwidth]{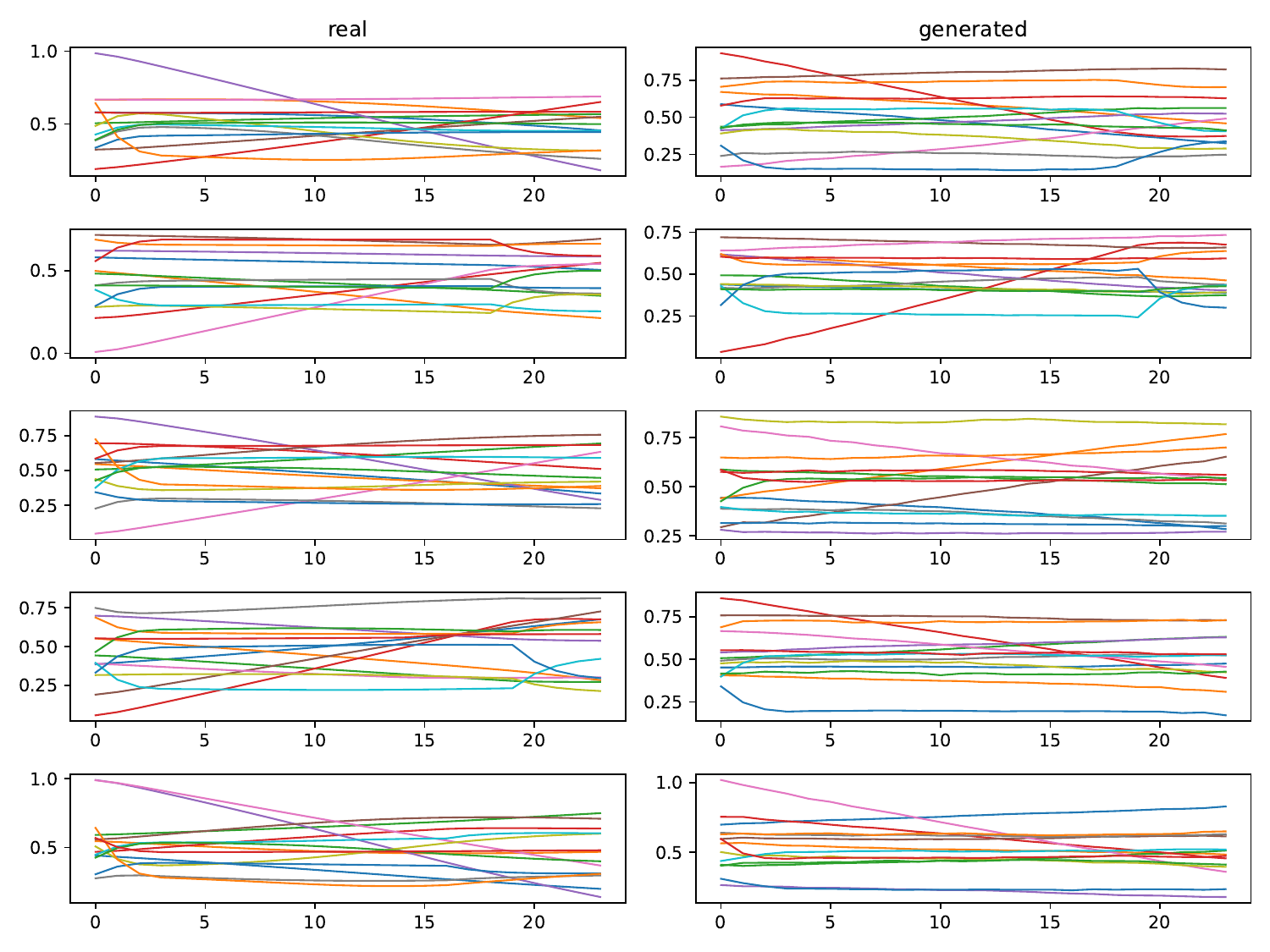}
    \caption{Examples of real and generated data for the MuJoCo dataset.}
    \label{fig:mujoco}
\end{figure}

\begin{figure}[p]
    \centering
    \includegraphics[width=\textwidth]{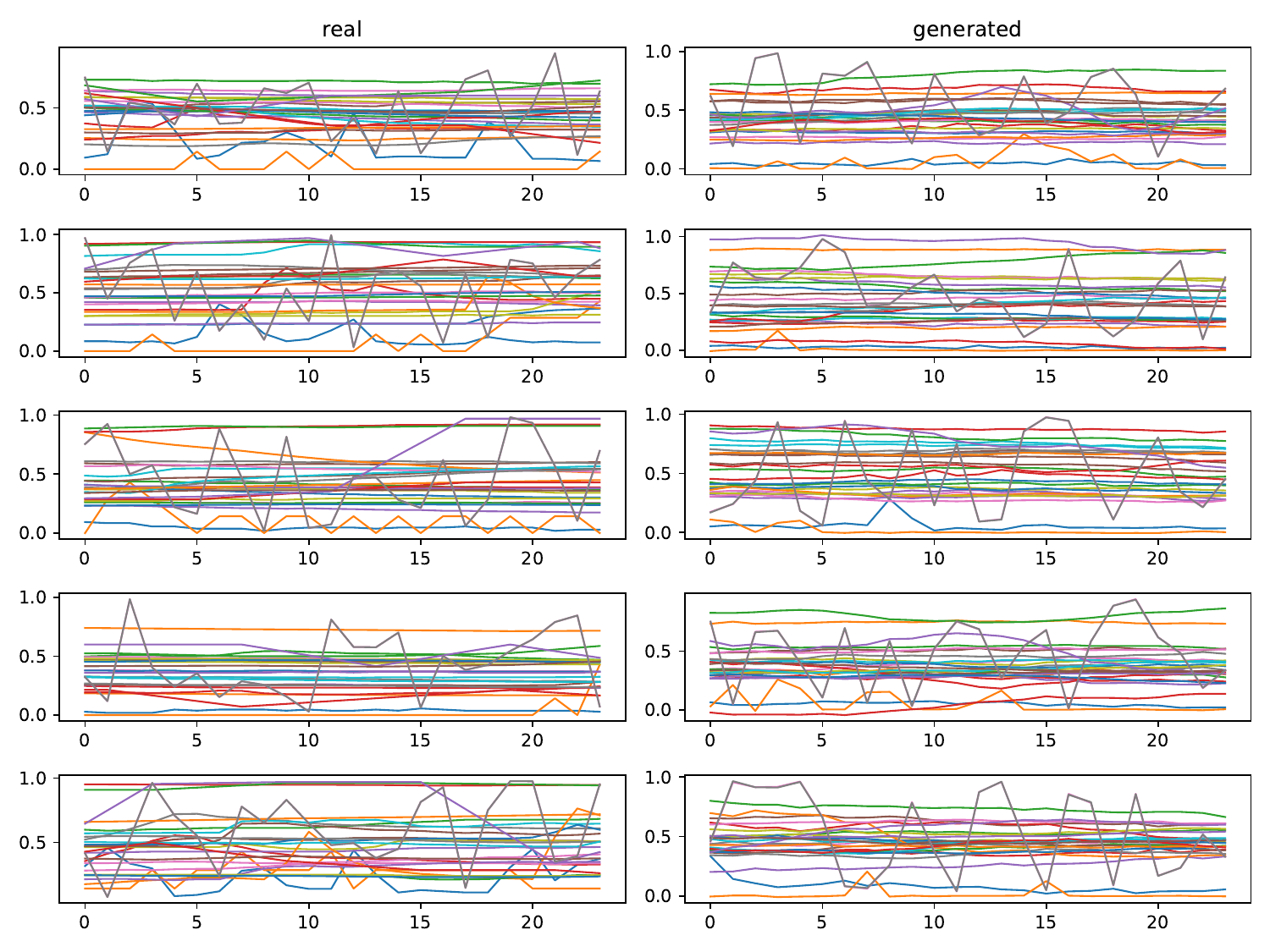}
    \caption{Examples of real and generated data for the Energy dataset.}
    \label{fig:energy}
\end{figure}

\begin{figure}[p]
    \centering
    \includegraphics[width=\textwidth]{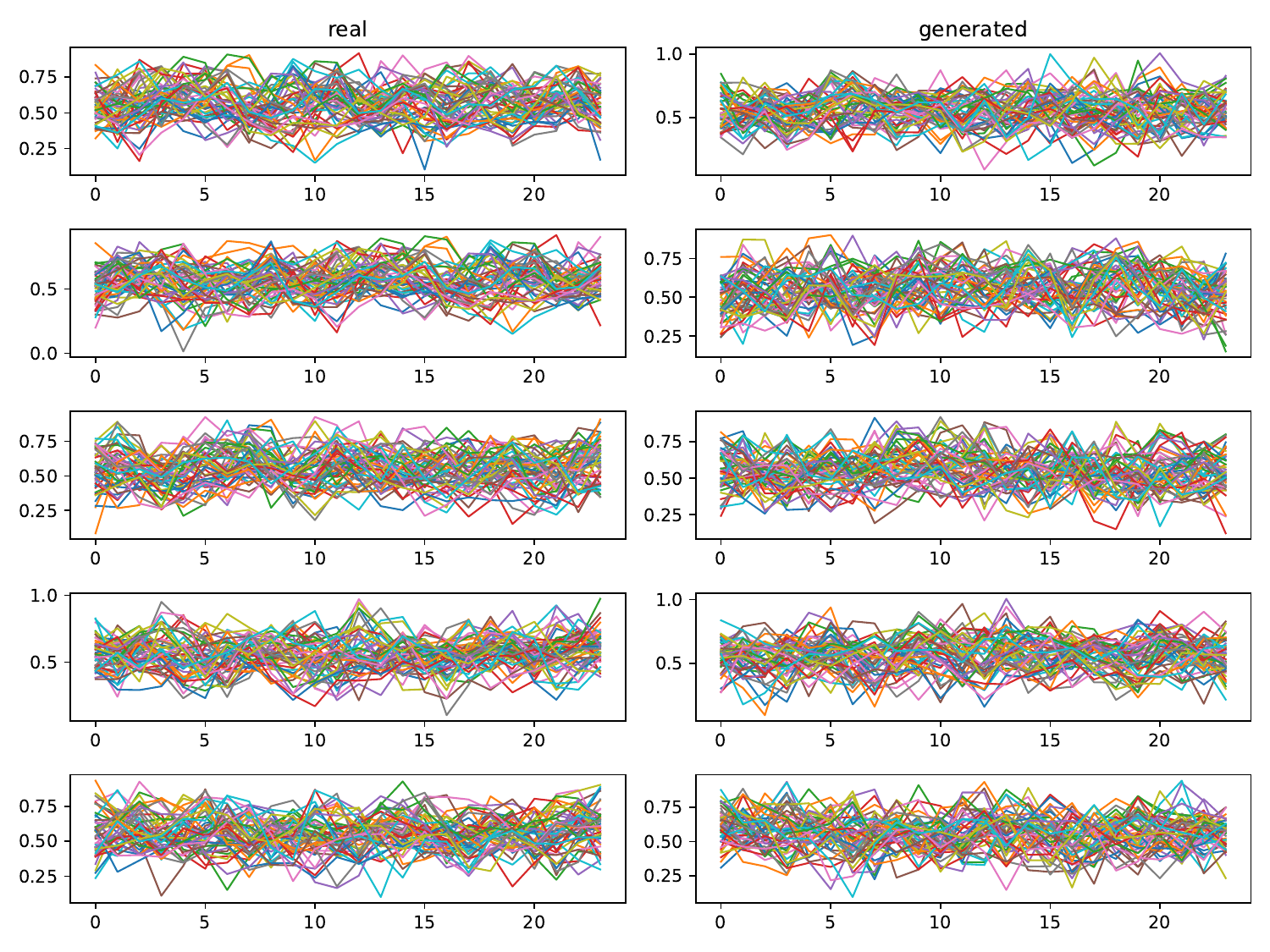}
    \caption{Examples of real and generated data for the fMRI dataset.}
    \label{fig:fmri}
\end{figure}

\newpage
\section{Limitations and Future Work}\label{apdx:future_work}
Our current main limitation is the question of how to condition the generative process on partial observations in a principled manner.
We believe studying kernel Bayesian inference \cite{fukumizu2011kernelbayesrule} and Laplace transform of CME operators \cite{kostic2024laplace} in our problem context could be fruitful.
The reasoning is as follows.
Suppose we have a partial observation ${\bf x}^{1:n}$ and want to generate ${\bf x}^{n+1:N}\sim\mathbb{P}[X^{n+1:N}|X^{1:n}={\bf x}^{1:n}]$.
This can be done with MMD flow if we have conditional mean embedding $\mu_{X^{n+1:N}|X^{1:n}={\bf x}^{1:n}}$.
In kernel methods for HMMs, this is precisely estimated via kernel Bayesian inference~\cite{fukumizu2011kernelbayesrule, nishiyama2016nonparametric}.
As an alternative, a type of Laplace transform of Koopman operator has been recently used for kernel-based forecasting ${\bf x}^{1:n}\mapsto {\bf x}^{n+1:N}$ of deterministic nonlinear dynamics~\cite{bevanda2023koopman}.
Given the proximity between the two operators, we conjecture its extension to our setup may enable principled conditioning.

In addition, while in Section~\ref{sec:time_series} we showed one possible approach to perform physics-informed modeling using our method via spectral regularization, a more direct approach of identifying the operator eigenfunctions from the learned model would strengthen the interpretability of our method.

Also, while we adopted simple strategies to handle irregularly or informatively sampled dynamical systems in Section~\ref{sec:time_series}, another possibly valid approach is using the continuous-time CME operator $U^{\Delta\tau}\varphi({\bf z}) = \mathbb{E}[\varphi(Z^{\tau+\Delta\tau})|Z^\tau={\bf z}]$ and its spectral decomposition $U=\sum_i\exp(s_i\Delta\tau) h_i\otimes g_i$ with continuous-time eigenvalues $s_i$~\cite{ait2010operator}.
We leave further investigation of this direction as future work.

Lastly, room for future work lies in engineering aspect of the method, including better architectural components and stabilization strategies for long sequences, choices of vector fields for flow matching training, as well as empirical investigations of scaling laws.

While the above directions are exciting, we believe their proper investigation warrants separate work.
Thus, we focused our effort on laying the foundations of a neural method based on operator theory.

\end{document}